\theoremstyle{plain}
\newtheorem{theorem}{Theorem}[section]
\newtheorem{proposition}[theorem]{Proposition}
\newtheorem{lemma}[theorem]{Lemma}
\newtheorem{corollary}[theorem]{Corollary}
\newtheorem{example}[theorem]{Example}
\theoremstyle{definition}
\newtheorem{definition}[theorem]{Definition}
\theoremstyle{remark}
\newcommand{\paragraphbe}[1]{\vspace{0.08in} \noindent{\bf \em #1}. }
\def\compileFigures{0}
\pgfplotsset{compat=1.3}
\newtheorem*{rep@theorem}{\rep@title}
\newcommand{\newreptheorem}[2]{%
\newenvironment{rep#1}[1]{%
 \def\rep@title{\bfseries #2 \ref{##1}}%
 \begin{rep@theorem}}%
 {\end{rep@theorem}}}
\newcommand{\bx}{\mathbf x}
\DeclareMathOperator{\R}{\mathbb{R}}
\definecolor{brinkpink}{rgb}{0.98, 0.38, 0.5}
\definecolor{blue_dark}{rgb}{0.0, 0.5, 0.69}
\newdimen\figrasterwd
\begin{document}

\title{Leave-One-Out Distinguishability \\in Machine Learning}

\author{Jiayuan Ye$^{\dagger}$, Anastasia Borovykh$^{\ddagger}$, Soufiane Hayou$^{\dagger}$, and Reza Shokri$^{\dagger}$
\thanks{Authors AB, SH and RS are ordered alphabetically.}
\\ $^{\dagger}$ National University of Singapore, $^{\ddagger}$ Imperial College London}
\maketitle

\begin{abstract}
    We introduce an analytical framework to quantify the changes in a machine learning algorithm's output distribution following the inclusion of a few data points in its training set, a notion we define as leave-one-out distinguishability (LOOD).  This is key to measuring data \textbf{memorization} and information \textbf{leakage} as well as the \textbf{influence} of training data points in machine learning. We illustrate how our method broadens and refines existing empirical measures of memorization and privacy risks associated with training data. We use Gaussian processes to model the randomness of machine learning algorithms, and validate LOOD with extensive empirical analysis of leakage using membership inference attacks. Our analytical framework enables us to investigate the causes of leakage and where the leakage is high.  For example, we analyze the influence of activation functions, on data memorization.  Additionally, our method allows us to identify queries that disclose the most information about the training data in the leave-one-out setting.  We illustrate how optimal queries can be used for accurate \textbf{reconstruction} of training data.\footnote{The code is available at \url{https://github.com/privacytrustlab/lood_in_ml}.}
\end{abstract}
\section{Introduction}

A key question in interpreting a model involves identifying which members of the training set have \textit{influenced} the model's predictions for a particular query~\citep{koh2017understanding, koh2019accuracy, pruthi2020estimating}. Our goal is to understand the impact of the presence of a specific data point in the training set of a model on its predictions. This becomes important, for example, when model's prediction on a data point is primarily due to its own presence (or that of points similar to it) in the training set - a phenomenon referred to as \textit{memorization}~\citep{feldman2020does} and can cause leakage of sensitive information about the training data.  This enables an adversary to infer which data points were included in the training set (using \textit{membership inference attacks}), by observing the model's predictions~\citep{shokri2017membership, zarifzadeh2023low}.  One way to measure such  \textbf{influence, memorization, and information leakage}, and thus addressing related questions, is by (re)training models with various combinations of data points. This empirical approach, however, is computationally expensive~\citep{feldman2020neural}, and does not enable efficient analysis of, for example, what data points are influential, which queries they influence the most, and what properties of model (architectures) can impact their influence.  These limitations demand an efficient and constructive modeling approach to influence and leakage analysis. This is the focus of the current paper.  

We generalize the above-mentioned closely-related concepts (\cref{sec:LOOD}), and unify them as the \textit{statistical divergence between the output distribution of models}, trained using (stochastic) machine learning algorithms, \textit{when one or more data points are added to (or removed from) the training set}.  We refer to this measure as leave-one-out distinguishability (LOOD). A larger LOOD of a training data~$S$ in relation to query data~$Q$ suggests a larger potential for information leakage about $S$, when an adversary observes the prediction of models trained on $S$ and queried at $Q$. As a special variant of LOOD, if we measure the gap between the \textit{mean} of prediction distributions over leave-one-out models on a query, the resultant \textit{mean distance LOOD} recovers existing definitions for influence (function)~\citep{steinhardt2017certified,koh2017understanding, koh2019accuracy} and memorization (self-influence)~\citep{feldman2020neural, feldman2020does} (i.e., a greater mean distance LOOD suggests a larger influence of training data $S$ on query data $Q$).

We propose an \textit{analytical method} that estimates LOOD accurately and efficiently (\cref{sec:analytical_framework}), by using Gaussian Processes (GPs) to model the output distribution of neural networks in response to querying on any data point(s).\footnote{NNGPs are widely used in the literature as standalone models~\citep{neal2012bayesian}, as modeling tools for different types of neural networks~\citep{lee2017deep, novak2018bayesian, hron2020infinite, yang2019wide}, and as building blocks for model architecture search~\citep{park2020nngpguided} or data distillation~\citep{loo2022efficient}.} We empirically validate that LOOD under NNGP models correlates well with the performance of membership inference attacks~\citep{ye2022enhanced, carlini2023extracting} against neural networks in the leave-one-out setting for benchmark datasets (\cref{fig:nn_nngp_agreement}).  We also experimentally show that mean distance LOOD under NNGP agrees with the prediction differences under leave-one-out retraining (\cref{subfig:influence_mean_dist_lood})of deep neural networks. This demonstrates that our approach allows an accurate assessment of \textit{information leakage and memorization (self-influence)} for deep neural networks, while significantly reducing the required computation time (by more than two orders of magnitude - \cref{footnote:computation_cost}) as we compute LOOD analytically. 

Our method offers theoretical explanations as well as efficient algorithms (\cref{sec:opt_query}) to examine which query points are most influenced by any training data $S$ (i.e., to quantify the leakage of models in black-box settings). This further allows one to gauge the severity of this leakage by examining to what extent the most-influenced query can help \textbf{reconstruct the training data}. We prove that, for GPs under weak regularity conditions (\cref{thm:master_theorem}), the differing data itself is a stationary point of LOOD. Experiments on image datasets further show that, LOOD under NNGP model is maximized when the query equals the differing datapoint (\ref{subfig:lood_local_opt_nngp}). These conclusions (for NNGP models) experimentally generalize to deep neural networks trained via SGD, where the empirical leakage is maximized when query data is within a small perturbation of the differing data (\cref{subfig:lood_local_opt_nn}).  Additionally, LOOD under NNGP remains almost unchanged even as the number of queries increases, even when the queries are optimized to have high LOOD (\cref{app:multiple_query}). In other words, for NNGP models, all the information related to the differing point in the leave-one-out setting is contained within the model's prediction distribution solely on that differing point. We show how to exploit this to reconstruct training data in the leave-one-out or leave-one~group-out settings. We present samples of reconstructed images in \cref{fig:opt_non_regular} and \ref{fig:reconstruct_2}-\ref{fig:reconstruct_6}, which strikingly resemble the training data, even when our optimization lands on sub-optimal queries. 

Finally, the analytical structure of LOOD guides us in theoretically understanding how the model architecture can affect information leakage. In \cref{sec:effect_model_leakage}, we investigate how the analytical LOOD for Neural Network Gaussian Processes (NNGPs) is affected by activation functions. We prove that a low-rank kernel matrix implies low LOOD (\cref{lem:low_rank}) while existing literature (\cref{prop:activation}) prove that the NNGP kernel for fully connected networks is closer to a low-rank matrix under ReLU activation (thus lead to less leakage) compared to smooth activation functions such as GeLU. Thus, smooth activation functions induce higher LOOD (information leakage) than non-smooth activation functions in deep neural networks. We validate that this conclusion aligns with empirical experiments for both NNGPs and for deep neural network trained via SGD (\cref{fig:activation}). 

\section{LOOD Definition and its Connections to Related Concepts}
\label{sec:LOOD}

\begin{definition} [LOOD]
    \label{def:lood}
    Let $D$ and $D'$ be two training datasets that only differ in the set $S$.\footnote{$S = D \setminus D'$ and $D' \subset D$.} Let $f_{D}(Q)$ be the distribution of model predictions given training dataset~$D$ and query dataset~$Q$, where the probability is taken over the randomness of the training algorithm.
    We define leave-one-out distinguishability (LOOD) as the statistical distance between $f_{D}(Q)$ and $f_{D'}(Q)$:
    \begin{align}
        \label{eqn:lood_def}
        LOOD(Q; D, D') \coloneqq \textnormal{dist}(f_{D}(Q), f_{D'}(Q))
    \end{align}
\end{definition}  

In the paper, we mainly measure LOOD using the KL divergence~\cite{kullback1951information} to capture information leakage. However, as we explain below, different choices of $dist()$ and $f()$ allow us to recover established definitions of memorization, information leakage, and influence.

\paragraphbe{Information Leakage as LOOD} \textit{Information Leakage} refers to how much a model's predictions can reveal information about specific data records in its training set, which would not be extractable if those entries were removed from the training set.  In the black-box setting, where adversary observes model's prediction (distribution) at a query set $Q$, this is exactly LOOD. This is closely related to Differential privacy (DP)~\citep{dwork2006calibrating} --- LOOD measures privacy loss in a black-box setting, for specific datasets, and DP determines the maximum (white-box) privacy loss across all possible datasets.~\footnote{DP guarantee certainly applies to black-box setting too, but its primary analysis is done for the stronger white-box threat model, where adversary observes the model parameters (or parameter updates during training).} In this paper, we measure LOOD using KL divergence, which is different from Hocky-stick divergence used for equivalent definitions of $(\varepsilon, \delta)$-DP~\citep{balle2018privacy}. The closest DP definition to LOOD is Rényi DP~\citep{mironov2017renyi} (consider Rényi divergence with order $\alpha\rightarrow1$).

\textit{LOOD gives useful information about leakage in DP ML algorithms.} DP guarantees that the prediction distributions cannot change beyond a certain bound under the change of any training data. However, such a guarantee reflects the worst-case scenario and the same bound holds for any possible training data and query. DP bound does not help measure the privacy risk of \textit{specific} data points in certain training sets and queries. LOOD, however, helps compute this risk and explore how information leakage varies with different predictions, which prediction leads to the most leakage, and how leakage evolves with more predictions (See \cref{fig:opt_query_evalidation}, \cref{sec:opt_query}, and \cref{app:multiple_query}). 

\textit{LOOD controls the leakage quantified by inference attacks.} One established method~for empirically assessing information leakage is through the performance of a \textit{membership inference attack} (MIA)~\citep{shokri2017membership}, where an attacker attempts to guess whether a specific data point $S$ was part of the model's training dataset. The power of any MIA (true positive rate given a fixed false positive rate) is controlled by the likelihood ratio test of observing the prediction given $D$ versus~$D'$ as the training set~\citep{neyman1933ix}. Thus, the leakage as quantified by MIA is upper bounded by statistical distance relates to likelihood ratio, e.g., KL divergence~\citep{dong2019gaussian}.~\footnote{KL divergence (and thus LOOD) can also be used for semantically measuring the advantage of reconstruction attacks and attribute inference attacks. See \cite[Section 5.2]{guo2022bounding} for a concrete example.} See \cref{fig:corr_kl_mse_auc_main_text} for the high correlation between LOOD and MIA performance, and see \cref{app:defer_lood} for why KL divergence is better suited than mean distance for measuring information leakage.

\paragraphbe{Influence (Function) as (Derivatives of) Mean Distance LOOD} The problem of influence estimation arises in the context of explaining machine learning predictions~\citep{koh2017understanding} and analyzing the robustness of a training algorithm against adversarial data~\citep{steinhardt2017certified}, where the influence of an (adversarial) training data $z\in D$ on the trained model's loss at test data $z_{test}$ is defined as $I_{loss}(z, z_{test}) = L(\hat{\theta}_{-z}; z_{test}) - L(\hat{\theta}; z_{test})$, where $\hat{\theta} = \arg\min L(\theta; D)$ on dataset~$D$ and $\hat{\theta}_{-z} = \arg\min_{\theta}L(\theta;D \setminus \{z\})$. This quantity is thus the change of a trained model’s loss on the test point $z_{test}$ due to the removal of one point $z$ from the training dataset (when $z=z_{test}$, it reflects self-influence of $z$). This is exactly LOOD given by mean distance (\cref{def:mean_lood}) between loss distributions, for an (idealized) training algorithm that outputs the optimal model given the training dataset. Estimating influence incurs computational difficulties, as it requires leave-one-out retraining and exact minimizers. \cite{koh2017understanding} approximately compute this influence \textit{without retraining} for \textit{small perturbation} of training data, via a first-order Taylor approximation. They compute $I_{pert, loss} (z, z_{test})= \nabla_{\delta}L(\hat{\theta}_{z_{\delta}, -z}; z_{test})$ where $\hat{\theta}_{z_{\delta}, -z} = \arg\min_{\theta}L(\theta;D) - L(\theta;z) + L(\theta;z_{\delta})$ via non-exact estimations of the Hessian (as exact Hessian computation is computationally expensive for large model). For simple models such as linear model and shallow CNN, the influence function approximation correlate well with leave-one-out retraining. However, for deep neural networks, the influence function approximation worsens significantly (\citep[Figure 2]{koh2017understanding} and \cref{subfig:influence_mean_dist_lood}). By contrast, we propose an analytical method for computing mean distance LOOD that shows high correlation to leave-one-out retraining even for deep neural networks (\cref{subfig:influence_mean_dist_lood}).

\paragraphbe{Memorization as LOOD} Memorization, as defined in \citep[page 5]{feldman2020does}, is given by ${mem(A, D, S) = Pr_{h\sim A(D)}(h(x_S) = y_S) - Pr_{h\sim A(D \setminus S)}(h(x_S) = y_S)}$; this measures the change of prediction probability on training data $S$ after removing it from the training dataset $D$, given training algorithm $A$. To avoid training many models on $D$ and $D \setminus S$ for each pair of training data and dataset, one approach is to resort to boosting-like techniques~\citep{feldman2020neural,zhang2021counterfactual} to reduce the number of retrained models (at a drop of estimation quality), which still requires training hundreds or thousands of models.
The memorization measure is exactly mean distance LOOD~\cref{def:mean_lood} when query equals the differing data (i.e., self-influence) and output function $f$ is the prediction accuracy. Thus, we can analytically compute and reason about memorization in ML using LOOD.

\section{Our Analytical Framework for Estimating LOOD}
\label{sec:analytical_framework}

Prior approaches for estimating leakage and memorization are heavily based on experiments, thus suffering from (a) approximation error (e.g., first-order Taylor expansion and non-exact Hessian approximation); (b) modelling error (e.g., suboptimal MIAs for quantifying leakage); (c) high computation cost (e.g., training many models for accurate estimation of memorization); and (d) experimental instability.
To address this, we propose an \textit{analytical} method that estimates LOOD \textit{accurately} and \textit{efficiently}, by modelling the prediction distribution of neural networks as \textit{Gaussian process}.

\paragraphbe{Gaussian process} To analytically capture the randomness of machine learning on model outputs we use Gaussian Process (GP). A GP ${f\sim GP(\mu,K)}$ is specified by its mean function $\mu(\bx) = \mathbb{E}_f[f(\bx)]$ and covariance kernel function ${K(\bx,\bx') = \mathbb{E}_f[(f(\bx)-\mu(\bx))(f(\bx')-\mu(\bx'))]}$ for any inputs~$\bx$ and~$\bx'$~\citep{williams2006gaussian}. The mean function is typically set to zero, under which  the kernel function $K$ completely captures the structure of a GP. Some commonly used kernel functions are isotropic kernels and correlation kernels (\cref{app:regular_common_kernels}). Given training data $D$ with noisy labels $y_D = f(D) + \mathcal{N}(0,\sigma^2\mathbb{I})$,  the posterior prediction distribution of GP on a query data $Q$ follows multivariate Gaussian ${f_{D, \sigma^2}(Q)\sim \mathcal{N}(\mu_{D, \sigma^2}(Q), \Sigma_{D, \sigma^2}(Q))}$ with analytical mean ${\mu_{D, \sigma^2}(Q)= K_{QD}(K_{DD}+\sigma^2 \mathbb{I})^{-1}y_D}$ and covariance matrix ${\Sigma_{D, \sigma^2}(Q) = K_{QQ}-K_{QD}(K_{DD}+\sigma^2 \mathbb{I})^{-1}K_{DQ}}$ (by GP regression). Here for brevity, we denote $K_{XZ} = (K(\bx_i, \bf z_j))_{\substack{x_i \in D, z_j\in Z}}$ as the kernel matrix between $X$ and $Z$.

\paragraphbe{Neural Network Gaussian process and connections to NN training} Prior works~\citep{lee2017deep, samuel2017deep, schoenholz2016deep} have shown that the output of neural networks on a query converges to a Gaussian process, at initialization as the width of the NN tends to infinity. The kernel function of the neural network Gaussian process (NNGP) is given by ${K(\bx, \bx') = \mathbb{E}_{\theta \sim \text{prior}} \left[\langle f_{\theta}(\bx), f_{\theta}(\bx') \rangle\right]}$, where $\theta$ refers to the neural network weights (sampled from the \emph{prior} distribution), and $f_{\theta}$ refers to the prediction function of the model with weights $\theta$. In this case, instead of the usual gradient-based training, we can analytically compute the Gaussian Process regression posterior distribution of the network prediction given the training data and query inputs. See \cref{app:NNGP_additional} for more discussions on the connections between NNGP and the training of its corresponding neural networks.

\begin{figure}[t]
    \centering
    \begin{subfigure}[b]{0.32\textwidth}
        \centering
        \includegraphics[width=0.98\textwidth]{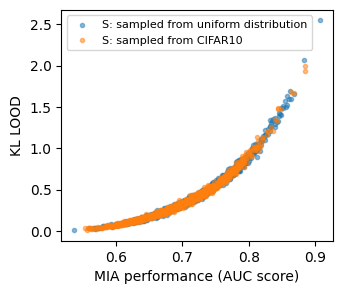}
        \caption{High correlation between the LOOD and MIA performance. The leave-one-out dataset $D$ is a class-balanced subset of CIFAR-10 with size $1000$, and $D'=D\cup S$ for a differing record $S$. We evaluate over $200$ randomly chosen $S$ (from CIFAR-10 or uniform distribution over the pixel domain). This indicate that LOOD faithfully captures information leakage under MIA. See \cref{ssec:comparison} for more discussions on this correlation curve.}
    \label{fig:corr_kl_mse_auc_main_text}
    \end{subfigure}
    \hfill
    \begin{subfigure}[b]{0.30\textwidth}
        \centering
        \resizebox{\textwidth}{!}{
        \begin{tikzpicture}
            \begin{axis}[
                    table/col sep=comma,
                    legend cell align={left},
                    legend style={
                    fill opacity=0.8,
                    draw opacity=1,
                    text opacity=1,
                    at={(0.99,0.01)},
                    anchor=south east,
                    draw=white!80!black,
                    },
                    tick align=outside,
                    tick pos=left,
                    x grid style={white!70!black},
                    xlabel={AUC score of MIA (under NNGP)},
                    xmin = 0.4, xmax = 1,
                    ymin = 0.4, ymax = 1,
                    xtick style={color=black},
                    y grid style={white!70!black},
                    ylabel={AUC score of MIA (under NN models)},
                    ytick style={color=black},
                    ylabel near ticks, ylabel shift={0pt}
                 ]        
                \addplot+[only marks, blue, mark=*, mark size=1.5, mark options={solid,fill=blue}, error bars/.cd, 
                y fixed,
                y dir=both, 
                y explicit] table[x=auc_nngp,y=auc_nn, y error=auc_std_nn] {data/nngp_vs_nn/NTK_relu.csv};
            \end{axis}
        \end{tikzpicture}
        }
        \caption{MIA performance on NNGP and NN models matches, for 90 random training data records, each represented by a dot. For MIA on NNs, we evaluate over 50 models on $D$ and 50 models on $D'$. For MIA on NNGP, we compute the prediction mean and variance of NNGP (on the differing data) given $D$ and $D'$, and evaluate MIA over $5000$ samples from the two Gaussian distributions. Models are trained on `car` and `plane` images from CIFAR10. 
    }
    \label{fig:nn_nngp_agreement}
    \end{subfigure}
    \hfill
    \begin{subfigure}[b]{0.34\textwidth}
        \centering
        \includegraphics[width = \textwidth]{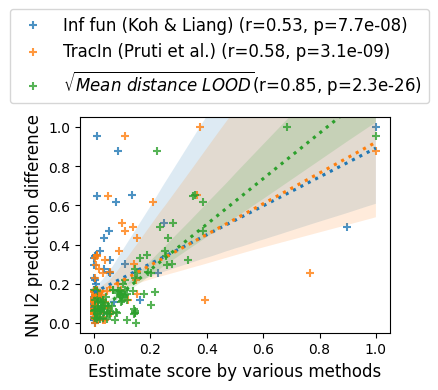}
        \caption{Mean distance LOOD matches the prediction difference after leave-one-out retraining of NN models, for 90 randomly chosen differing data (each represented by a dot). Dataset and model setups are the same as \cref{fig:nn_nngp_agreement}. As comparison, we plot the influence function estimates (Koh and Liang [10]) and TracIn scores (Pruthi et al.[18]). We normalize all estimates to $[0,1]$. We report Pearson's correlation and p-value in the legends. 
    }
    \label{subfig:influence_mean_dist_lood}
    \end{subfigure}
    \caption{Validation of our analytical framework (based on LOOD and mean distance LOOD for NNGP), according to information leakage measure (the performance of membership inference attacks) and influence definition (prediction difference under leave-one-out retraining).}\vspace{-10pt}
    \label{fig:x}
\end{figure}

\paragraphbe{LOOD for NNGP effectively captures leakage in practical NN training} Under Gaussian process (GP) modelling, we can analytically compute and analyze LOOD on the GP models associated with a given neural network (i.e., NNGP). For the underlying NNGP models, we first investigate whether the LOOD accurately reflects the level of information leakage in the idealized leave-one-out membership inference attack~\cite[Section 4.5]{ye2022enhanced}. Specifically, the leave-one-out MIA distinguish between two prediction distributions and of models trained on leave-one-out datasets that only differs in one record $S$. 
In \cref{fig:corr_kl_mse_auc_main_text}, we indeed observe a strong correlation between LOOD and MIA performance. This concludes that LOOD effectively reflects information leakage in NNGP. 
We also empirically validate whether the leakage under NNGP models accurately reflects the leakage in neural networks trained using SGD. 
In \cref{fig:nn_nngp_agreement}, we observe a strong correlation between the MIA performance on NNGP models and the MIA performance on neural network models. This suggests that NNGP captures the leakage in the corresponding neural networks.

\paragraphbe{Mean distance LOOD for NNGP agrees with memorization in practical NN training} We further investigates whether our analytical framework allows accurate estimates for memorization (self-influence). In \cref{subfig:influence_mean_dist_lood}, we show the estimates given by different methods ($y$-axis), and compare it with the actual prediction difference ($x$-axis) between 100 NN models trained on each of the leave-one-out datasets, over 50 randomly chosen differing data. That is, the $x$-axis is exactly the memorization or self-influence as defined by \citep[Page 5]{feldman2020does} and~\citep[Equation 1]{feldman2020neural}, and serves as a ground-truth baseline that should be matched by good influence estimations. In the $y$-axis, we show the normalized mean distance LOOD (under NNGP), influence function~\citep{koh2017understanding} and TracIn estimates~\citep{pruthi2020estimating}. We observe that mean distance LOOD has the highest correlation to the actual prediction difference among all methods.

\paragraphbe{Analytical LOOD allows improved computational efficiency}
One significant advantage of our method is the efficiency of computing analytic expressions. In  \cref{fig:nn_nngp_agreement}- \ref{subfig:influence_mean_dist_lood}, we observed over \textbf{$140\times$ speed up} for estimating leakage and influence using our framework versus empirically measuring it over retrained models as it is performed in the literature~\citep{ye2022enhanced,feldman2020neural}.\footnote{Evaluating the MIA and prediction difference under leave-one-out retraining in \cref{fig:nn_nngp_agreement} took $100$ GPU hours for training $50 \times 90$ FC networks. For the same setting, estimating LOOD only took $40$ GPU minutes.\label{footnote:computation_cost}} This allows us to efficiently answer many advanced questions (that are previously computationally expensive or infeasible to answer via experimental approaches), such as: Where, i.e., at which query $Q$, does the model exhibit the most information leakage or influence of its training data? How do different model architectures impact influence? We discuss these questions in \cref{sec:opt_query}-\ref{sec:extended_discussion_effect_model}.
\section{Optimizing LOOD to Identify Query with maximum leakage}

\label{sec:opt_query}

LOOD \eqref{eqn:def_kl_LOOD} measures the information leakage of a model about its training data $S$ when queried on a query data point $Q$. For assessing the maximum possible leakage, we need to understand \textit{which query $Q$ should the attacker use to extract the highest possible amount of information about $S$}. This question is closely related to the open problem of constructing a worst-case query data for tight privacy auditing via MIAs~\citep{steinke2024privacy, jagielski2020auditing, nasr2021adversary, wen2022canary, nasr2023tight}. However, evaluating leakage via MIAs can be computationally expensive due to the associated costs of retraining many models on a fixed pair of leave-one-out datasets. To address this, we will show that estimating LOOD under GPs offers an efficient alternative for answering this question. 
In this section, we consider \emph{single} query ($|Q|=1$) and single differing point ($|S|=1$) setting and analyze the most influenced point by
maximizing the LOOD:
\begin{align}
    \label{eqn:query_opt_obj}
    \arg\max_Q LOOD(Q; D, D') \coloneqq dist(f_{D, \sigma^2}(Q),f_{D', \sigma^2}(Q)),
\end{align}
We theoretically analyze and empirical solve the LOOD optimization problem \eqref{eqn:query_opt_obj}. 
Our analysis also extends to the multi-query ($|Q|>1$) and a group of differing data ($|S|>1$) setting (\cref{app:extended_discussion_leave_one_group}).

\paragraphbe{Analysis: querying the differing data itself incurs stationary LOOD (information leakage)}
We first prove that for a wide class of kernel functions, the differing point is a stationary solution for~\eqref{eqn:query_opt_obj}, i.e., satisfies the first-order optimality condition. In the following theorem, we prove that if the kernel function satisfies certain regularity conditions, the gradient of LOOD at the differint point is zero, i.e., querying the differing incurs stationary LOOD over its local neighborhood. 

\begin{theorem}\label{thm:master_theorem}
    Let the kernel function be $K$. Assume that (i) $K(x,x) = 1$ for all training data $x$, i.e., all training data are normalized under kernel function $K$;
(ii) $\frac{\partial K(x,x')}{\partial x} \mid_{x'=x} \, = 0$ for all training data $x$, i.e., the kernel distance between two data records $x$ and $x'$ is stable when they are equal. Then, the differing data is a stationary point for LOOD as a function of query data, as follows
    $$
    \nabla_{Q} \textup{LOOD} (f_{D, \sigma^2}(Q)\lVert f_{D', \sigma^2}(Q)) \mid_{Q=S} = 0,
    $$
    where by definition we have that $\nabla_{Q} \textup{LOOD} (f_{D, \sigma^2}(Q)\lVert f_{D', \sigma^2}(Q)) = \left(1 - \frac{\Sigma'(Q)}{\Sigma(Q)}\right) \cdot \nabla_Q \left(\frac{\Sigma(Q)}{\Sigma'(Q)}\right)  + \nabla_Q \left(\lVert \mu(Q) - \mu'(Q)\rVert_2^2\right)  \Sigma'(Q)^{-1}  + \lVert \mu(Q) - \mu'(Q)\rVert_2^2 \nabla_Q (\Sigma'(Q)^{-1})$
\end{theorem}

The full proof is deferred in \cref{app:kl_zero_grad}. \cref{thm:master_theorem} proves that as long as the kernel function $K$ satisfies weak regularity conditions (which hold for commonly used RBF kernel and NNGP kernel on a sphere as shown in \cref{app:regular_common_kernels}), the first-order stationary condition of LOOD is satisfied when the query equals differing data. This stationarity is a necessary condition for maximal leakage, and suggests that the differing point itself could be the query that incurs maximal leakage. 

\begin{figure}[ht]
    \centering
    \begin{subfigure}[b]{0.33\textwidth}
          \centering
          \includegraphics[width=\textwidth]{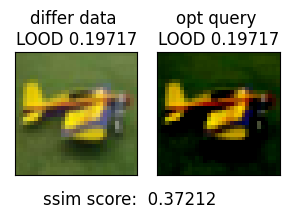}
          \caption{Exact reconstruction}
          \label{fig:opt_kl_example1}
      \end{subfigure}
      \hfill
      \begin{subfigure}[b]{0.33\textwidth}
          \centering
          \includegraphics[width=\textwidth]{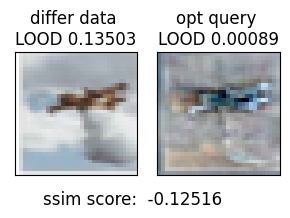}
          \caption{Color-flipped reconstruction}
          \label{fig:opt_kl_example2}
      \end{subfigure}
      \hfill
      \begin{subfigure}[b]{0.3\textwidth}
          \centering
          \includegraphics[width=\textwidth]{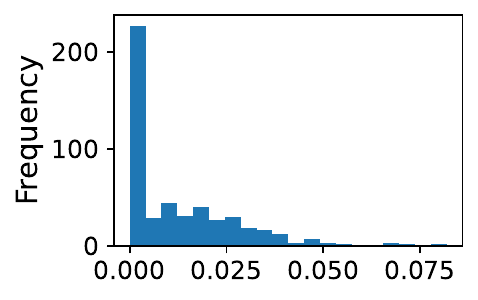}
          \caption{Prediction similarity in KL \citep[Section V.B.]{balle2022reconstructing} between differ data \&  opt query.}
          \label{fig:histogram_distance_opt_query_kl}
      \end{subfigure}
    \caption{Examples of optimized query after LOOD optimization, given all `car' and `plane' images from CIFAR10 as training dataset, under NNGP for 10-layer FC network with ReLU activation. We show in (a) the optimized query gets close to the differing point in LOOD; and in (b) the optimized query converges to a query with significantly lower LOOD than the differing data, yet they visually resemble each other. In (c) we further statistically check the prediction similarity between query and differing data in KL divergence \citep[Section V.B.]{balle2022reconstructing} across 500 runs with randomly chosen differing data (see Figures \ref{fig:reconstruct_1}, \ref{fig:reconstruct_2}, \ref{fig:reconstruct_3}, \ref{fig:reconstruct_4}, \ref{fig:reconstruct_5} and \ref{fig:reconstruct_6} for all the images for these runs).
    }
    \label{fig:cifar_10_opt_query_kl}
  \end{figure}

\paragraphbe{Experimentally Optimizing LOOD enables a Data Reconstruction Attack} To further understand what query $Q$ induce maximal leakage about training data $S$, we run Adam on the non-convex objective \eqref{eqn:query_opt_obj}. Interestingly, we observe that the \textit{optimized query by \eqref{eqn:query_opt_obj} generally recovers the differing data with high visual precision} across random experiments: see \cref{fig:opt_kl_example1}, \cref{ssec:examples_nngp_query_opt} and \cref{fig:reconstruct_1}-\ref{fig:reconstruct_6} for examples under RBF and NNGP kernels.~\footnote{We tried several image similarity metrics in the reconstruction literature, e.g., per-pixel 
 distance \citep[Section 5.1]{carlini2023extracting}, SSIM~\citep[Figure 1]{haim2022reconstructing} and LPIPS~\citep[Section V.B]{balle2022reconstructing} -- but they do not reflect the visual similarity in our setting. This is possibly because LOOD optimization only recovers the highest amount of information about the differing data which need not be the exact RGB values. 
 It is an open problem to design better image similarity metrics for such reconstructions beyond RGB values.}
The optimized query also tends to have very similar LOOD to the differing data itself (see \cref{fig:histogram_distance_opt_query_kl}), even when there are more than one differing data, i.e., leave-one-group-out setting (see \cref{fig:leave_one_group_out} -- only in its right-most column does the optimized single query have slightly higher LOOD than the differing record that it recovers). We remark that, possibly due to the nonconvex structure of LOOD, cases exist where the optimized query has significantly lower LOOD than the differing data (\cref{fig:opt_kl_example2}). Noteworthy is that even in this case the query still resembles the shape of the differing data. This shows the significant amount of information leaked through the model's prediction on the differing data itself. Such results show that LOOD can be leveraged for a \textit{data reconstruction attack} in a setting where the adversary has access to the prediction distribution of models trained on leave-one-out datasets. This setting aligns with prior works \citep{balle2022reconstructing,guo2022bounding} that focus on reconstruction attacks under the assumption that the adversary possesses knowledge of \textit{all} data points except for one. 

\begin{figure}[h]
    \centering
    \begin{subfigure}[b]{0.4\textwidth}
        \centering
        \raisebox{-0.2\textwidth}{\includegraphics[width=\textwidth]{./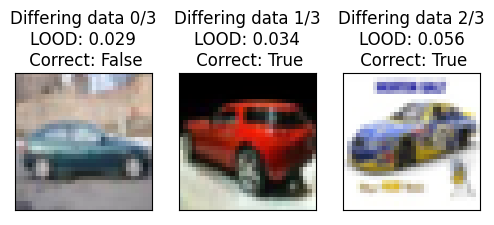}}
    \end{subfigure}
    \begin{subfigure}[b]{0.4\textwidth}
        \begin{tabular}{c}
           \includegraphics[width=\textwidth]{./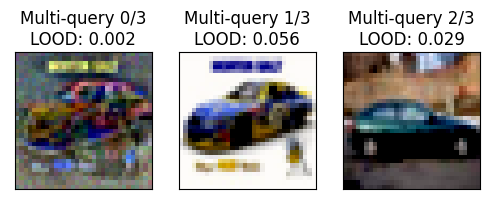}   \\
           \includegraphics[width=\textwidth]{./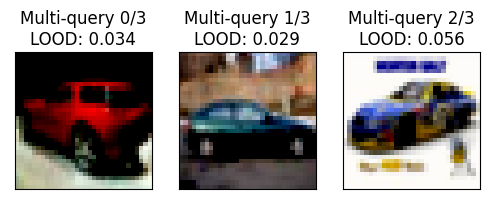}
        \end{tabular}
    \end{subfigure}
    \quad
    \begin{subfigure}[b]{0.14\textwidth}
        \begin{tabular}{c}
             \includegraphics[width=\textwidth]{./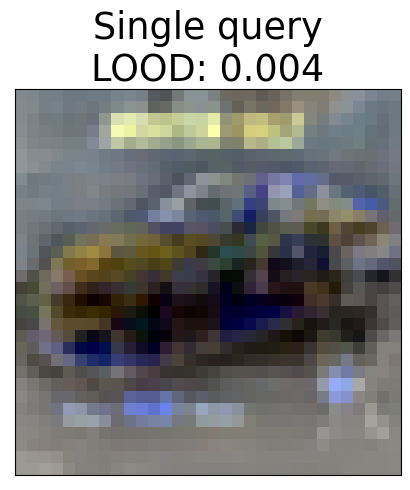} \\
             \includegraphics[width=\textwidth]{./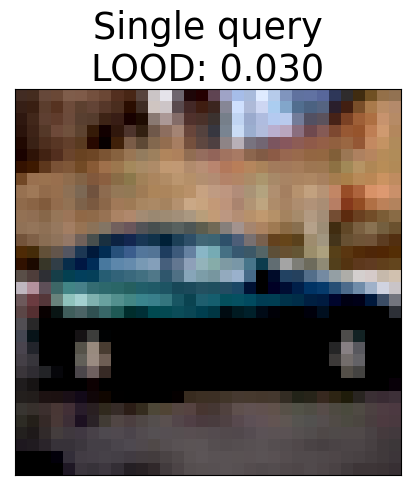}
        \end{tabular}
    \end{subfigure}
    
    \caption{Query optimization (data reconstruction) by leave-one-group-out distinguishability, when there are three differing records (the three leftmost images). Above each differing record, we show whether it is predicted correctly by the NNGP model trained without it. The three middle images are the optimal queries in the multi-query setting, and the rightmost image is the optimal single query; each row presents the results for a different random query initialization. Observe that differing images with highest LOOD (and for certain initialization even \emph{all} differing images) can be recovered. 
    }
    \label{fig:leave_one_group_out}
\end{figure}
Our reconstruction via optimizing LOOD methodology extends to the group setting. Figure~\ref{fig:leave_one_group_out} demonstrates that even when the attacker does not know a \textit{group} of data points, they can successfully reconstruct the whole group with high visual similarity. Additionally, by examining the frequency of LOOD optimization converging to a query close to each data point in the differing group (across optimized queries in multiple runs), we can assess which $S$'s are most vulnerable to reconstruction (\cref{app:group_LOOD}). Reconstruction under the leave-one-group-out setting mimics a practical real world adversary that aims to reconstruct more than one training data. In the limit of large group size that equals the size of the training dataset, such leave-one-group-out setting recovers one of the difficult reconstruction attack scenario studied in \citep{haim2022reconstructing}, where the adversary does not have exact knowledge about any training data. Therefore, extending our results within the leave-one-group-out setting to practical reconstruction attacks presents a promising open problem for future exploration.

\paragraphbe{Experimental maximality of leakage when query equals differing data for NNGPs and NNs} To complement \cref{thm:master_theorem} and explain the success of our reconstruction attack, we experimentally investigate whether querying on differing point itself incurs maximal LOOD. We observe many empirical evidences that support this hypothesis, e.g., for a one-dimensional toy dataset in \cref{subfig:lood_local_opt_sine}. In \cref{subfig:lood_local_opt_nngp} and its caption, we further observe that querying on differing data $S$ itself consistently incurs larger LOOD than querying on random perturbations of $S$ for CIFAR10 under NNGP. \cref{app:extended_discussion_leave_one_group} also support this empirical maximality of LOOD under multiple queries and a group of differing data (\cref{app:extended_discussion_leave_one_group}). Finally, 
\cref{subfig:lood_local_opt_nn} and its caption show that MIA performance on NN models is generally the highest when queries are near the differing data.
\begin{figure}[t]
\centering
    \begin{subfigure}[b]{0.32\textwidth}
        \centering
        \includegraphics[width=\textwidth]{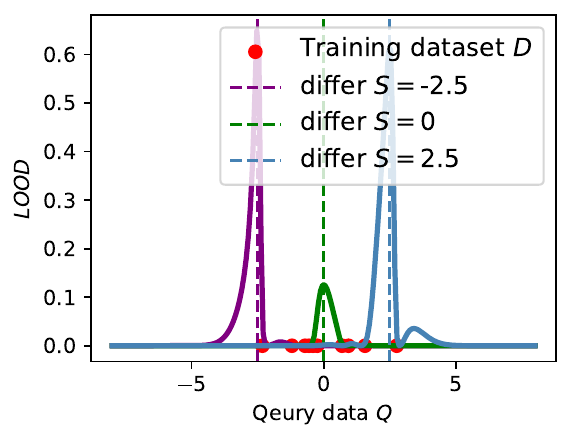}
        \caption{LOOD under RBF}
        \label{subfig:lood_local_opt_sine}
    \end{subfigure}
    \begin{subfigure}[b]{0.32\textwidth}
        \centering
        \includegraphics[width=\textwidth]{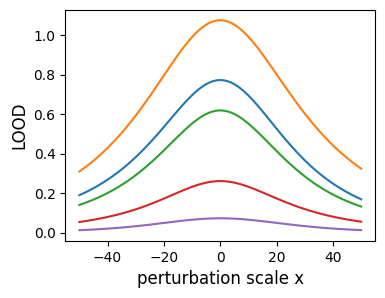}
        \caption{LOOD under NNGP}
        \label{subfig:lood_local_opt_nngp}
    \end{subfigure}
    \hfill
    \begin{subfigure}[b]{0.32\textwidth}
        \centering
        \includegraphics[width=\textwidth]{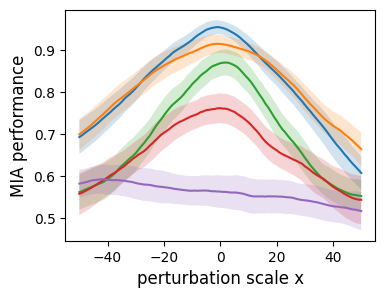}
        \caption{MIA performance on NNs}
        \label{subfig:lood_local_opt_nn}
    \end{subfigure}
    \caption{Empirical maximality of information leakage when query equals differing data. Figure (a) uses a toy dataset $D = (x, y)$ with ${x\in[-5, 5]}, y = \sin(x)$, and computes LOOD for all queries in the data domain. Figure (b), (c) use 'car' and 'plane' images in CIFAR-10, and evaluate leakage on perturbation queries $\{Q = S + x \cdot R: x\in [-50, 50]\}$ of the differing data~$S$ along a random direction $R\in L_{\infty}$ unit ball. For five randomly chosen differing data, we show in (b) and (c) that LOOD and MIA performance are visually maximal around $x=0$. For statistical pattern, we repeat the experiment for $70$ randomly chosen differing data, and observe the scale of perturbation with maximal information leakage is $x = 0.000\pm 0.000$ for NNGPs, and $x = - 2.375 \pm 5.978$ for NNs. }\vspace{-15pt}
    \label{fig:opt_query_evalidation}
\end{figure}
On the other hand, we note that in practice data-dependent normalization is often used prior to training. In such a setting there exists \textit{optimized query with higher LOOD than the differing data itself}-- see Appendix~\ref{app:opt_query_not_diff}. 
\section{Optimizing LOOD to Identify the Most Influenced Point} 
\label{ssec:influence_memorization}
\label{sec:lood_leakage_mem_inf}

Mean distance LOOD (\cref{def:mean_lood}) measures the influence of training data $S$ on the prediction at query $Q$. 
Thus we can use mean distance LOOD to \textit{efficiently} analyze \emph{where (at which $Q$) the model is most influenced by the change of one train data $S$}. This question shed light on the fundamental question of which prediction is influenced the most by (adversarial) training data~\citep{koh2017understanding} (and from there connections to robustness against data poisoning~\cite{steinhardt2017certified}). 

We now use mean distance LOOD for a more fine-grained analysis of influence and memorization within our framework. In Proposition~\ref{lemma:nonstationary_differing}, we analyze the different behaviors of mean distance LOOD, based on where the differing point is located with respect to the rest of the training set. 

\begin{proposition}[First-order optimality condition for influence]\label{lemma:nonstationary_differing}
	Consider a dataset $D$, a differing point $S$, and a kernel function $K$ satisfying the same conditions of \cref{thm:master_theorem}. Then, we have that 
	\begin{align*}
		\nabla_Q M(Q) \mid_{Q = S} = - \alpha^{-2} (1 - K_{SD} M^{-1}_D K_{DS}) (y_S - K_{SD} M^{-1}_D y_D)^2  \mathring{K}_{SD} M^{-1}_D K_{DS}, 
	\end{align*}
	where $M(Q)$ is the mean distance LOOD in \ref{def:mean_lood}, $K_{SD}, K_{DS}, K_{DD}$ are kernel matrices as defined in \cref{sec:LOOD}, $\mathring{K}_{SD} = \frac{\partial}{\partial Q} K_{QD} \mid_{Q=S}$, $M_D = K_{DD} + \sigma^2 I$, and $\alpha = 1 - K_{SD} M^{-1}_D K_{DS} + \sigma^2$. 
\end{proposition}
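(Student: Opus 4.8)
The plan is to reduce everything to a closed form for the gap between the posterior means and then differentiate. Write $D'=D\cup\{S\}$ and $\Delta(Q)\coloneqq \mu_{D'}(Q)-\mu_{D}(Q)$. First I would expand $\mu_{D'}(Q)=K_{QD'}(K_{D'D'}+\sigma^2 I)^{-1}y_{D'}$ using the $2\times 2$ block structure of $K_{D'D'}+\sigma^2 I$, whose blocks are $M_D=K_{DD}+\sigma^2 I$, $K_{DS}$, $K_{SD}$, and $K_{SS}+\sigma^2$. Inverting through the Schur complement of $M_D$ --- whose (scalar) value is precisely $\alpha=K_{SS}+\sigma^2-K_{SD}M_D^{-1}K_{DS}$, equal to $1-K_{SD}M_D^{-1}K_{DS}+\sigma^2$ once the kernel hypothesis $K_{SS}=1$ is used --- the cross terms collapse and $\Delta$ factorises as
\begin{align*}
\Delta(Q)=\alpha^{-1}\bigl(K_{QD}M_D^{-1}K_{DS}-K_{QS}\bigr)\,\bigl(K_{SD}M_D^{-1}y_D-y_S\bigr).
\end{align*}
The important observation is that the residual $r\coloneqq K_{SD}M_D^{-1}y_D-y_S$ does not depend on $Q$, so the entire $Q$-dependence sits in the first factor.

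Next, since the mean distance LOOD of \cref{def:mean_lood} is $M(Q)=\Delta(Q)^2$ (up to the normalisation fixed there), the chain rule gives $\nabla_Q M(Q)=2\,\Delta(Q)\,\nabla_Q\Delta(Q)$, and by the factorisation above
\begin{align*}
\nabla_Q\Delta(Q)=\alpha^{-1}\,r\,\nabla_Q\!\bigl(K_{QD}M_D^{-1}K_{DS}-K_{QS}\bigr).
\end{align*}
Evaluating at $Q=S$ then reduces to the two query-gradients $\nabla_Q K_{QD}\mid_{Q=S}=\mathring{K}_{SD}$, which is the definition of $\mathring{K}_{SD}$, and $\nabla_Q K_{QS}\mid_{Q=S}$.

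The crucial step --- the one I would handle most carefully --- is that under the kernel conditions of \cref{thm:master_theorem} the self-kernel is stationary on the diagonal, i.e. $\nabla_Q K_{QS}\mid_{Q=S}=0$. For a correlation kernel $g(\inp{Q}{S}/(\norm{Q}\norm{S}))$ this holds because the normalised inner product has vanishing $Q$-gradient at $Q=S$; for the RBF kernel it holds because $Q\mapsto\norm{Q-S}^2$ does. This is exactly the first-order property that powers \cref{thm:master_theorem}, and it removes the $K_{QS}$ contribution, leaving $\nabla_Q\Delta(Q)\mid_{Q=S}=\alpha^{-1}r\,\mathring{K}_{SD}M_D^{-1}K_{DS}$. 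Substituting $K_{SS}=1$ into the factorisation also gives $\Delta(S)=-\alpha^{-1}(1-K_{SD}M_D^{-1}K_{DS})\,r$, so the two residual factors multiply to $-r^2=-(y_S-K_{SD}M_D^{-1}y_D)^2$ and assembling the chain rule yields the claimed identity.

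The obstacles here are essentially bookkeeping rather than conceptual: expanding the block inverse so that the cross terms regroup cleanly into the single product for $\Delta(Q)$, and rigorously justifying $\nabla_Q K_{QS}\mid_{Q=S}=0$ from the hypotheses imported from \cref{thm:master_theorem}. The remaining subtlety is the overall scalar: taking $M=\Delta^2$ literally, the chain rule produces a factor $2$, so I would check that the normalisation adopted in \cref{def:mean_lood} supplies the compensating $1/2$, which then lands the coefficient exactly on $\alpha^{-2}$ as stated.
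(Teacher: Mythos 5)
Your proof is correct and follows essentially the same route as the paper, which obtains this identity as a byproduct of the $F_2$ term in the proof of \cref{thm:master_theorem}: both arguments rest on the Schur-complement block inverse of $M_{D'}$, the normalization $K_{SS}=1$, and the vanishing of $\nabla_Q K_{QS}\mid_{Q=S}$ from condition (ii). Your closed-form factorization $\Delta(Q)=\alpha^{-1}\bigl(K_{QD}M_D^{-1}K_{DS}-K_{QS}\bigr)\bigl(K_{SD}M_D^{-1}y_D-y_S\bigr)$, valid for all $Q$, is just a cleaner packaging of the same algebra (the paper differentiates first and only then substitutes $Q=S$), and your accounting of the factor $2$ against the $\tfrac12$ in \cref{def:mean_lood} lands on the stated coefficient $\alpha^{-2}$ exactly as the paper's does.
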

\cref{lemma:nonstationary_differing} proves that when $S$ is far from the dataset $D$ (i.e., $K_{SD} \approx 0$) or close to the dataset $D$ (i.e., $y_S \approx K_{SD}M^{-1}_D y_D$, that is $S$ is perfectly predicted by models trained on $D$), the mean distance LOOD gradient $\nabla_Q M(Q) \mid_{Q = S}$ is close to $0$ (i.e., the differing data $S$ has stationary influence on itself). When $S$ is far from the remaining training dataset $D$ and the data labels are bounded (which is the case in a classification task), we further analyze the Hessian of mean distance LOOD and prove that differing data $S$ has locally maximal influence on itself (\cref{lemma:2nd_order}). Between the two cases, when the differing data is neither too close nor too far from the remaining dataset, \cref{lemma:nonstationary_differing} proves that $\nabla_Q M(Q) \mid_{Q = S} \neq 0$, i.e., the differing data $S$ does not incur maximal influence on itself. To give an intuition, we can think of the Mean LOOD as the result of an interaction of two forces, a force due to the training points, and another force due to the differing point. When the differing point is neither too far nor too close to the training data, the synergy of the two forces creates a region that does not include the differing point where the LOOD is maximal. See Figures~\ref{fig:stationary_condition_indicator_mean}, \ref{fig:mse_varying_S} and \ref{fig:cifar10_opt_query_mean_distance} for more experimental illustrations of the behaviors of mean distance LOOD.

\section{Explaining the Effect of Activation Functions on LOOD}
\label{sec:extended_discussion_effect_model}
\label{sec:effect_model_leakage}
LOOD under NNGP enjoys analytical dependence on model architectures, thus enabling us to efficiently investigate questions such as: \textit{how does the choice of activation function affect the magnitute of information leakage?} Answers to such questions would
provide guidance for how to train models in a more privacy-preserving manner. To theoretically analyze how the choice of activation function affect the magnitude of LOOD, we will leverage \cref{prop:activation} in the Appendix; it shows that for the same depth, smooth activations such that GeLU are associated with kernels that are farther away from a low rank all-constant matrix (more expressive) than kernel obtained with non-smooth activations, e.g. ReLU. 
Therefore to understand the effect of activation, it suffices to analyze how the low-rank property of kernel matrix affects the magnitude of LOOD. Thus, we 
establish the following proposition, which shows that LOOD is small if the NNGP kernel matrix has low rank property.

\begin{proposition}
    [Informal: Low rank kernel matrix implies low LOOD]
    \label{lem:low_rank}
Let $D$ and $D' = D\cup S$ be an arbitrarily fixed pair of leave-one-out datasets, where $D$ contains $n$ records. Define the function $h(\alpha) = \sup_{x,x'\in D'} |K(x,x') - \alpha|$. Let $\alpha_{min} = \textup{argmin}_{\alpha \in \R^+} h(\alpha)$, and thus $h(\alpha_{min})$ quantifies how close the kernel matrix is to a low rank all-constant matrix. Assume that $\alpha_{min} > 0$. Then, there exist constants $A_n, B >0$ such that 
\begin{align*}
    \max_Q LOOD(f_{D, \sigma^2}(Q)\lVert f_{D', \sigma^2}(Q)) \leq A_n h(\alpha_{min}) + B \, n^{-1}.
\end{align*}
Thus the smaller $h(\alpha_{\min})$ is, the smaller LOOD could be. See \ref{app:proof_extended_discussion_effect_model} for formal statement and proof.
\end{proposition}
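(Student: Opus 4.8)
The plan is to reduce the KL divergence, uniformly over all queries $Q$, to a single scalar — the posterior variance of the differing point $S$ given $D$ — and then bound that scalar by $h(\alpha_{\min})$ and $n^{-1}$. First I would write $LOOD$ as the closed-form KL divergence between the two Gaussian posteriors $\mathcal N(\mu_{D,\sigma^2}(Q),\Sigma_{D,\sigma^2}(Q))$ and $\mathcal N(\mu_{D',\sigma^2}(Q),\Sigma_{D',\sigma^2}(Q))$. Since $D'=D\cup S$ differs from $D$ by a single record, I would invoke the rank-one GP update for adding one noisy observation: with $M_D = K_{DD}+\sigma^2 I$, $\gamma = K_{SS}+\sigma^2 - K_{SD}M_D^{-1}K_{DS}$, residual $r_S = y_S - K_{SD}M_D^{-1}y_D$, and cross term $v = K_{QS}-K_{QD}M_D^{-1}K_{DS}$ (the posterior covariance of the queries with $S$ given $D$), one has $\mu_{D'}(Q)-\mu_{D}(Q)=\gamma^{-1}r_S\,v$ and $\Sigma_{D'}(Q)=\Sigma_{D}(Q)-\gamma^{-1}vv^\top$. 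These two identities, which I would verify by block inversion of $M_{D'}$, express every ingredient of the KL divergence through $v,\gamma,r_S$ and $\Sigma_D(Q)$.

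Substituting into the Gaussian KL and simplifying with the matrix determinant lemma and the Sherman--Morrison identity, the divergence collapses to $LOOD = \tfrac12\,[\,s\,(1+r_S^2/\gamma)+\ln(1-s')\,]$, where $s' = \gamma^{-1}v^\top\Sigma_D^{-1}v$ and a short computation gives $s = s'/(1-s')$. The crucial step is a bound on $s$ that is uniform in $Q$: the joint posterior covariance of $(Q,S)$ given $D$ is positive semidefinite, so its Schur complement yields $v^\top\Sigma_D^{-1}v\le \widetilde K_{SS}$, where $\widetilde K_{SS}=K_{SS}-K_{SD}M_D^{-1}K_{DS}=\gamma-\sigma^2$ is exactly the posterior variance of $S$ given $D$. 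Hence $s'\le \widetilde K_{SS}/\gamma$, so $1-s'\ge \sigma^2/\gamma$ and $s\le \widetilde K_{SS}/\sigma^2$, with no dependence on $Q$. Dropping the nonpositive term $\ln(1-s')$ then gives $\max_Q LOOD \le \frac{\sigma^2+r_S^2}{2\sigma^4}\,\widetilde K_{SS}$.

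It remains to bound $\widetilde K_{SS}$, for which I would use the variational characterization $\widetilde K_{SS}=\min_{w}[\,K_{SS}-2K_{SD}w+w^\top M_D w\,]$ and plug in the uniform weights $w=\tfrac1n\mathbf 1$. Because every kernel entry among points of $D'$ lies in $[\alpha_{\min}-h,\alpha_{\min}+h]$ with $h=h(\alpha_{\min})$, each of $K_{SS}$, $K_{SD}w$, and $\tfrac1{n^2}\mathbf 1^\top K_{DD}\mathbf 1$ equals $\alpha_{\min}$ up to an additive $h$, while the regularizer contributes $w^\top(\sigma^2 I)w=\sigma^2/n$; the $\alpha_{\min}$ terms cancel and I obtain $\widetilde K_{SS}\le 4h(\alpha_{\min})+\sigma^2 n^{-1}$. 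Combining the two bounds gives the claim with $A_n=2(\sigma^2+r_S^2)/\sigma^4$ and $B=(\sigma^2+r_S^2)/(2\sigma^2)$, both genuine constants once we note that $r_S=y_S-\mu_{D,\sigma^2}(S)$ stays bounded: in the near-constant regime the effective weights $M_D^{-1}K_{DS}$ nearly form a subprobability vector, so $\mu_{D,\sigma^2}(S)$ is an (almost) average of the bounded labels.

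The main obstacle is the uniform control over $Q$: a priori the mean gap $\mu_{D'}(Q)-\mu_D(Q)$ need not vanish for a generic query, and the denominator $\Sigma_{D'}(Q)$ can be tiny, so the KL could naively blow up. The resolution — and the technical heart — is that the rank-one update ties the mean gap and the variance drop together through the single vector $v$, and the Schur-complement inequality bounds $v^\top\Sigma_D^{-1}v$ by $\widetilde K_{SS}$ for every $Q$ simultaneously, so $\max_Q LOOD$ reduces cleanly to the posterior variance of $S$. The only remaining care is showing $r_S$ (equivalently $\mu_{D,\sigma^2}(S)$) is $O(1)$ uniformly in $n$, which I would establish through the near-constant structure rather than a crude operator-norm bound that would introduce spurious factors of $n$ into $B$. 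A mild regularity assumption that the query posterior covariance $\Sigma_D(Q)$ is nondegenerate is also needed for the KL and the Schur-complement step to be well defined.
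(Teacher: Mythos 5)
Your proof is correct in its essentials but follows a genuinely different route from the paper's. The paper bounds the mean-distance term by perturbing $M_D^{-1}$ around $(\alpha_{\min}U_n+\sigma^2 I)^{-1}$ (via an integral representation of the difference of inverses and Weyl's inequality), accumulates triangle inequalities to get $\max_Q M(Q)\lesssim n^{5/2}h(\alpha_{\min})+n^{-1}$, and then separately estimates the variance ratio $\Sigma(Q)/\Sigma'(Q)\approx 1+\alpha_{\min}/(\sigma^2+n\alpha_{\min})$; that last step, and the bound $\|K_{QD}\|\leq n^{1/2}(\alpha_{\min}+h(\alpha_{\min}))$, implicitly require the \emph{query's} kernel entries to also lie near $\alpha_{\min}$, even though $h$ is defined only over $x,x'\in D'$. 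Your argument avoids both issues: the exact rank-one identities $\mu_{D'}(Q)-\mu_D(Q)=\gamma^{-1}r_S v$ and $\Sigma_{D'}(Q)=\Sigma_D(Q)-\gamma^{-1}vv^\top$ collapse the KL to a function of the single ratio $s'=\gamma^{-1}v^\top\Sigma_D(Q)^{-1}v$, and the positive semidefiniteness of the joint posterior covariance of $(Q,S)$ given $D$ gives $v^\top\Sigma_D(Q)^{-1}v\leq \widetilde K_{SS}$ \emph{uniformly in $Q$}, with no assumption on $K(Q,\cdot)$ whatsoever. Combined with the variational bound $\widetilde K_{SS}\leq 4h(\alpha_{\min})+\sigma^2 n^{-1}$ from uniform weights, you obtain a coefficient on $h(\alpha_{\min})$ that is $O(1)$ rather than $O(n^{5/2})$ --- a strictly sharper and cleaner statement of the same phenomenon. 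The one loose end, which you correctly flag, is the uniform-in-$n$ boundedness of $r_S=y_S-K_{SD}M_D^{-1}y_D$, needed for $B$ to be a genuine constant; making the ``subprobability weights'' argument rigorous requires a perturbation estimate on $M_D^{-1}K_{DS}$ that is only controlled when $nh(\alpha_{\min})$ is small. This is not a defect relative to the paper: the paper's own constants contain the denominator $\min(\alpha_{\min}\sigma^2,|\alpha_{\min}\sigma^2-nh(\alpha_{\min})|)$ and an unexplained $\zeta$ depending on $D$, so both proofs are meaningful only in the regime $nh(\alpha_{\min})\ll 1$; you should simply state that regularity condition explicitly rather than leave it as a remark.
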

\cref{lem:low_rank} proves that the LOOD (leakage) over worst-case query is upper bounded by a term $h(\alpha_{min})$ that is small if the kernel matrix is close to a low-rank all-constant matrix. Intuitively, this happens because the network output \emph{forgets} most of the information about the input dataset under low rank kernel matrix.  By combining \cref{prop:activation} and \cref{lem:low_rank}, we conclude that \textit{smooth activations induce higher leakage than non-smooth activations}. 
This is consistent with the intuition that smooth activations allow deeper information propagation, thus inducing more expressive kernels and more memorization of the training data. 
Interestingly, recent work \citep{dosovitskiy2021an} also showed that GeLU enables better model performance than ReLU even in modern architectures such as ViT. Our results thus suggest there is a privacy-accuracy tradeoffs by activation choices.
\setlength{\intextsep}{1pt}
\begin{wrapfigure}{R}{0.37\textwidth}
    \resizebox{0.37\textwidth}{!}{
    \includegraphics{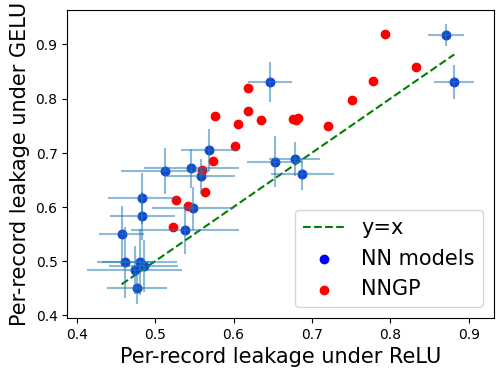}
    }
    \vspace*{-6mm}
    \caption{Empirical validation of our analytical results in of \cref{sec:effect_model_leakage} that per-record information leakage is higher under GELU activation than under ReLU, for both NNGPs and NN models. We evaluate per-record leakage with membership inference attack performance on models trained on leave-one-out datasets. Dataset contains `car' and `plane' images from CIFAR-10. 
        The NN model is fully connected network with depth 10 and width 1024.
    }\vspace{-6pt}
    \label{fig:activation}
\end{wrapfigure}

\paragraphbe{Experimental results on NNGP validates that GeLU activation induces higher LOOD than ReLU activation} We now empirically investigate how the activation function affects the LOOD under NNGP. We evaluate NNGP for fully connected neural networks with varying depths $\{2, 4, \cdots, 10\}$, trained on 'plane' and 'car' images from CIFAR10. We compute LOOD over $500$ randomly chosen differing data, and observe under fixed depth,  the LOOD under ReLU is more than $1.1\times$ higher than LOOD under GeLU in $95\%$ cases. We visualize randomly chosen $20$ differing data in \cref{fig:activation}.
For each fixed differing data, we also observe that the gap between LOOD under ReLU and LOOD under GeLU grows with the depth. This is consistent with the pattern predicted by our analysis in \cref{prop:activation}, i.e., GeLU gives rise to higher leakage in LOOD than ReLU activation, and their gap increases with depth. It is worth noting that our experiments for the LOOD optimization are done without constraining the query data norm. Therefore, our experiments shows that \cref{prop:activation} could be valid in more general settings.

\paragraphbe{Experimental results on NNs validate the effect of activation choice on leakage} We further investigate whether leakage (measured by MIA performance on leave-one-out retrained models) for deep neural networks are similarly affected by activation choice.  \cref{fig:activation} shows that DNNs with GeLU activation incurs higher per-record leakage (MIA performance in AUC score) compared to DNNs with ReLU activation, for $20$ randomly chosen differing data. This is consistent with our analytical results (\cref{prop:activation} and \ref{lem:low_rank}), and aligns with prior empirical observations~\cite{wei2020framework,haim2022reconstructing,shamsabadi2023mnemonist} that data reconstruction is harder under ReLU activation (than its smooth counterparts).

\section{Conclusions}
We show that the estimations of information leakage, influence, and memorization are intrinsically the same problem of estimating leave-one-out distinguishability (LOOD). We propose one analytical framework for accurately and efficiently computing LOOD in machine learning via Gaussian process modeling. This framework facilitates more efficient and interpretable answers to existing questions and enables us to explore new questions about how a data point influences model predictions. Notably, the analytical nature of LOOD enables performing optimization to identify predictions that leak the most information about each training data record (\cref{sec:opt_query}), which enables exact \textit{reconstruction} of each training data. 

\paragraphbe{Future Works} An interesting direction is to use LOOD to analyze how other factors affect information leakage, such as the training data (distribution) and architecture choices. It is also interesting to extend our framework for estimating other concepts closely related to leave-one-out distinguishability, such as Shapley value~\citep{shapley1953value} in the literature of data valuation.~\footnote{Shapley value is a linear combination of LOOD on all subsets of the training dataset. Thus, in certain scenarios e.g., submodular utility, LOOD regarding the complete dataset is a lower bound for Shapley value.}

\section*{Acknowledgements}

The authors would like to thank Martin Strobel for the help in reproducing influence function, and to thank Yao Tong, Tuan-Duy H. Nguyen and Yaxi Hu for helpful feedback on earlier drafts of the paper. The work of Reza Shokri was supported by the Asian Young Scientist Fellowship 2023, and the Ministry of Education, Singapore, Academic Research Fund (AcRF) Tier 1.

\onecolumn
\bibliography{ref}
\bibliographystyle{iclr2024_conference}


\newpage

\tableofcontents

\appendix

\section{Table of Notations}
\begin{table}[H]
    \caption{Symbol reference}
    \label{tab:symbol}
        \begin{center}
            \begin{tabular}{ll}
                \toprule
                \textbf{Symbol} & \textbf{Meaning} \\
                \hline
                $D$ & Leave-one-out dataset\\
                $D'$ & Leave-one-out dataset combined with the differing data\\
                $S$ & Differing data record(s)\\
                $s$ & number of differing data records\\
                $Q$ & Queried data record (s)\\
                $q$ & number of queries\\
                $d$ & dimension of the data\\
                $K(x, x')$ & Kernel function\\
                \begin{tabular}{@{}l@{}}
                    $K(X, Z)\in \mathbb{R}^{p\times k}$, for two finite\\ collections of
                    vectors $X = (x_1, \dots, x_p)$\\ and $Z=(z_1, \dots, z_k)$
                \end{tabular}& $(K(x_i, z_j))_{\substack{1 \leq i \leq p\\1\leq j \leq k}}$\\
                $K_{XZ} \in \mathbb{R}^{p\times k}$ & Abbreviation for $K(X, Z)$\\
                $M_D$ & Abbreviation for $K_{DD} + \sigma^2 \mathbb{I}$\\
                $M_{D'}$ & Abbreviation for $K_{D'D'} + \sigma^2 \mathbb{I}$\\
                $M(Q)$ & \begin{tabular}{@{}l@{}}
                    Abbreviation for mean distance LOOD under query $Q$ \\
                    and leave-one-out datasets $D$ and $D'$
                \end{tabular}\\
                MIA & Abbreviation for membership inference attack\\
                \bottomrule
            \end{tabular}
        \end{center}
\end{table}

\section{Additional discussion on the connections between NNGP and NN training}

\label{app:NNGP_additional}

Note that in general, there is a performance gap between a trained NN and its equivalent NNGP model. However, they are fundamentally connected even when there is a performance gap: the NNGP describes the (geometric) information flow in a randomly initialized NN and therefore captures the early training stages of SGD-trained NNs. In a highly non-convex problem such as NN training, the initialization is crucial and its impact on different quantities (including performance) is well-documented~\cite{schoenholz2016deep}. Moreover, it is also well-known that the bulk of feature learning occurs during the first few epochs of training (see \cite[Figure 2]{lou2022feature}), which makes the initial training stages even more crucial. Hence, it should be expected that properties of NNGP transfer (at least partially) to trained NNs. More advanced tools such as Tensor Programs~\cite{yang2023tensor} try to capture the covariance kernel during training, but such dynamics are generally intractable in closed-form.  

\section{Deferred details for \cref{sec:LOOD}}

\label{app:defer_lood}

\subsection{Mean Distance LOOD}
\label{sec:mean_distance_lood}
The simplest way of quantifying the LOOD is by using the distance between the means of the two distributions formed by coupled Gaussian processes: 
\begin{definition}[Mean distance LOOD]\label{def:mean_lood}
By computing the distance between the mean of coupled Gaussian processes on query set $Q$, we obtain the following mean distance LOOD. 
\begin{align}
    M(Q; D, D') = & \frac{1}{2} \lVert \mu_{D, \sigma^2}(Q) - \mu_{D', \sigma^2}(Q) \rVert_2^2 \\
    = & \frac{1}{2} ||K_{QD}(K_{DD}+\sigma^2\mathbb{I})^{-1}y_{D} - K_{QD'}(K_{D'D'}+\sigma^2)^{-1}y_{D'}||_2^2. 
\end{align}
where $Q$ is a query data record, and $\mu_{D, \sigma^2}(Q)$ and $\mu_{D', \sigma^2}(Q)$ are the mean of the prediction function for GP trained on $D$ and $D'$ respectively, as defined in \cref{sec:LOOD}. When the context is clear, we abbreviate $D$ and $D'$ in the notation and use $M(Q)$ under differing data $S$ to denote $M(Q; D, D')$ where $D' = D\cup S$.
\end{definition}

\begin{figure}[t]
    \centering
    \includegraphics[width = 0.9 \linewidth]{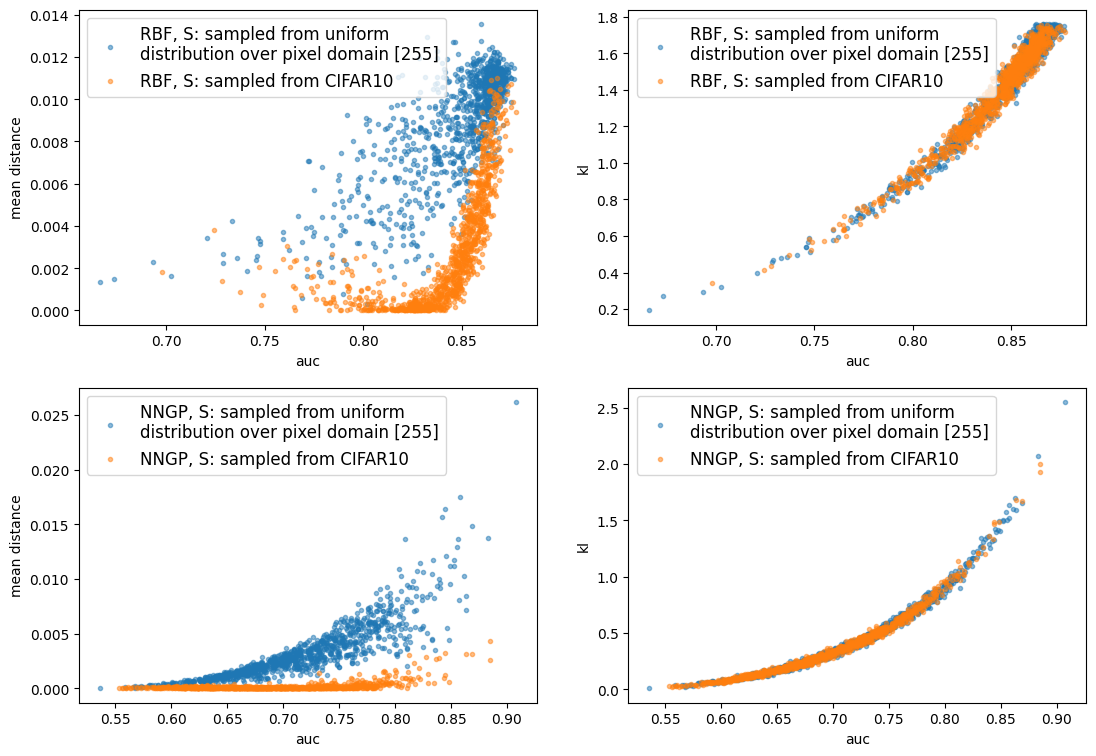}
    \caption{Correlation between the LOOD and MIA success. The leave-one-out dataset $D$ is a class-balanced subset of CIFAR-10 with size 1000, and $D' = D\cup S$ for a randomly chosen differing record $S$. We evaluate over 200 random choices of the differing data record (sampled from the CIFAR-10 dataset or uniform distribution over pixel domain $[255]$.) 
    We also evaluate  under different LOOD metrics (mean distance and KL divergence) and different kernel functions (RBF kernel or NNGP kernel for fully connected network with depth 1). We observe that the correlation between KL and AUC (right figures) is consistently stronger and more consistent than the correlation between mean distance LOOD and auc (left figures).
    }
    \label{fig:corr_kl_mse_auc}
\end{figure}

\begin{example}[Mean distance LOOD is not well-correlated with MIA success] We show the correlations between Mean distance LOOD and the success of leave-one-out membership inference attack (MIA) \footnote{This is measured by the AUC of the attacker’s false positive rate (FPR) versus his true positive rate (TPR).}
in \cref{fig:corr_kl_mse_auc} (left).
We observe that low mean distance and high AUC score  occur at the same time, i.e., low Mean distance LOOD does not imply low attack success. This occurs especially when the differing point is sampled from a uniform distribution over the pixel domain $[0,255]$.
\end{example}

Another downside of the mean distance LOOD metric is that solely first-order information is incorporated; this results in it being additive in the queries: $M(Q)= \sum_{i=1}^mM(Q_i)$. That is, Mean distance LOOD does not allow to incorporate the impact of correlations between multiple queries. Moreover, the mean distance LOOD has a less clear interpretation under certain kernel functions. For example, under the NNGP kernel, a query with large $\ell_2$ norm will also have high Mean Distance LOOD (despite normalized query being the same), due to the homogeneity of NNGP kernel. 
By contrast, as we show below, KL divergence (relative entropy) LOOD mitigates these limitations. 

\subsection{KL Divergence LOOD}
\label{sec:kl_lood}
To mitigate the downsides of Mean distance LOOD and incorporate higher-order information, we use KL divergence (relative entropy) LOOD. 
\begin{definition}[KL Divergence LOOD]\label{def:kl_lood}
   Let $Q \in \mathbb{R}^{d\times q}$ be the set of queried data points, where $d$ is the data feature dimension and $q$ is the number of queries. Let $D, D'$ be a pair of leave-one-out datasets such that $D' = D\cup S$ where $S$ is the differing data point. Let $f_{D, \sigma^2}$ and $f_{D', \sigma^2}$ as coupled Gaussian processes given $D$ and $D'$. Then the Leave-one-out KL distinguishability between $f_{D, \sigma^2}$ and $f_{D', \sigma^2}$ on queries $Q$ is as follows,
   \begin{align}
    KL (f_{D, \sigma^2}(Q)\lVert f_{D', \sigma^2}(Q)) = & \frac{1}{2} \Big(\log \frac{|\Sigma'(Q)|}{|\Sigma(Q)|} - q + \text{Tr}(\Sigma'(Q)^{-1}\Sigma(Q)) \\
    & + (\mu (Q) - \mu'(Q))^\top \Sigma'(Q)^{-1}(\mu(Q) - \mu'(Q))\Big)\label{eqn:def_kl_LOOD}
   \end{align}
   where for brevity, we denote $\mu(Q) = \mu_{D, \sigma^2}(Q)$, $\mu'(Q) = \mu_{D', \sigma^2}(Q)$,  $\Sigma(Q) = \Sigma_{D, \sigma^2}(Q)$, and $\Sigma'(Q) = \Sigma_{D', \sigma^2}(Q)$ as defined in \cref{sec:LOOD}.
\end{definition}

\begin{example}[KL divergence is better correlated with MIA success] We show correlations between KL Divergence LOOD and AUC score of the MIA in \cref{fig:corr_kl_mse_auc} (right). We observe that KL Divergence LOOD and AUC score are well-correlated with each other.
\end{example}

\paragraphbe{On the assymmetry of KL divergence}
In this paper we restrict our discussions to the case when the base distribution in KL divergence is specified by the predictions on the larger dataset $D'$, which is a reasonable upper bound on the KL divergence with base distribution specified by predictions on the smaller dataset. This is due to the following observations:
    \begin{enumerate}
        \item In numerical experiments (e.g., the middle figure in \cref{fig:mse_kl_var_s_opt}) that $\Sigma(Q)/\Sigma'(Q) \geq 1$ across varying single query $Q$. This is aligned with the intuition that the added information in the larger dataset $D$ reduce its  prediction uncertainty (variance).
        \item If $\Sigma(Q)/\Sigma'(Q) \geq 1$, then we have that 
        \begin{align*}
            KL (f_{D, \sigma^2}(Q)\lVert f_{D', \sigma^2}(Q)) \geq KL (f_{D', \sigma^2}(Q)\lVert f_{D, \sigma^2}(Q))
        \end{align*}

        \begin{proof}
            Denote $r = \Sigma(Q)/\Sigma'(Q)\geq 1$, then by \cref{eqn:def_kl_LOOD}, for single query $Q$, we have that 
    \begin{align}
        KL (f_{D, \sigma^2}(Q)\lVert f_{D', \sigma^2}(Q)) = \frac{1}{2}\left(- \log r - 1 + r + 2M(Q)\Sigma'(Q)^{-1}\right)
    \end{align}
    where $M(Q)$ is the abbreviation for mean distance LOOD as defined in \cref{def:mean_lood}. Similarly, when the base distribution of KL divergence changes, we have that
    \begin{align}
        KL (f_{D', \sigma^2}(Q)\lVert f_{D, \sigma^2}(Q)) = \frac{1}{2}\left(\log r - 1 + \frac{1}{r} + 2M(Q)\Sigma(Q)^{-1}\right)
    \end{align}

    Therefore, we have that
    \begin{align*}
        &KL (f_{D, \sigma^2}(Q)\lVert f_{D', \sigma^2}(Q)) - KL (f_{D', \sigma^2}(Q)\lVert f_{D, \sigma^2}(Q))\\
        = & \frac{1}{2}\left(r - \frac{1}{r} - 2\log r + 2 M(Q)\Sigma(Q)^{-1}(r - 1)\right)
    \end{align*}
    For $r\geq 1$, we have that $r - \frac{1}{r} - 2\log r \geq 0$ and $M(Q)\Sigma(Q)^{-1}(r - 1) \geq 1$. Therefore, we have that $KL (f_{D, \sigma^2}(Q)\lVert f_{D', \sigma^2}(Q))  - KL (f_{D', \sigma^2}(Q)\lVert f_{D, \sigma^2}(Q)) \geq~0$ as long as $r = \Sigma(Q)/\Sigma'(Q)\geq 1$.
        \end{proof}
        Therefore, the KL divergence with base prediction distribution specified by the larger dataset is an upper bound for the worst-case KL divergence between prediction distributions on leave-one-out datasets. 
    \end{enumerate}

\subsection{Comparison between Mean Distance LOOD and KL Divergence LOOD}
\label{ssec:comparison}

\paragraphbe{KL Divergence LOOD is better correlated with attack performance} As observed from Figure~\ref{fig:corr_kl_mse_auc}, 
KL divergence LOOD exhibits strong correlation to attack performances in terms of AUC score of the corresponding leave-one-out MIA. By contrast, Mean distance LOOD is not well-correlated with MIA AUC score.
For example, there exist points that incur small mean distance LOOD while exhibiting high privacy risk in terms of AUC score under leave-one-out MIA. Moreover, from \cref{fig:corr_kl_mse_auc} (right), 
empirically there exists a general threshold of KL Divergence LOOD that implies small AUC (that persists under different choices of kernel function, and dataset distribution). This is reasonable since KL divergence is the mean of negative log-likelihood ratio, and is thus closely related to the power of log-likelihood ratio membership inference attack. 

\paragraphbe{Why the mean is (not) informative for leakage} To understand when and why there are discrepancies between the mean distance LOOD and KL LOOD, we numerically investigate the optimal query $Q^*$ that incurs maximal mean distance LOOD. In \cref{fig:mse_kl_var_s_opt}, we observe that although $Q^*$ indeed incurs the largest mean distance among all possible query data, the ratio $\log(\Sigma(Q)/\Sigma'(Q))$ between prediction variance of GP trained on $D$ and that of GP trained on $D'$, is also significantly smaller at query $Q^*$ when compared to the differing point $S$. Consequently, after incorporating the second-order variance information, the KL divergence at query $Q^*$ is significantly lower than the KL divergence at the differing point $S^*$, despite of the fact that $Q^*$ incurs maximal mean distance LOOD. A closer look at \cref{fig:mse_kl_var_s_opt} middle indicates that the variance ratio peaks at the differing point $S^*$, and drops rapidly as the query becomes farther away from the differing data and the training dataset. Intuitively, this is because $\Sigma'(Q)$, i.e., the variance at the larger dataset, is extremely small at the differing data point $S$. As the query gets further away from the larger training dataset, $\Sigma'(Q)$ then increases rapidly. By contrast, the variance on the smaller dataset $\Sigma(Q)$ is large at the differing data point $S$, but does not change a lot around its neighborhood, thus contributing to low variance ratio. This shows that there exist settings where the second order information contributes significantly to the information leakage, which is not captured by mean distance LOOD (as it only incorporates first-order information).

\begin{figure}
    \centering
    \includegraphics[width=0.9\linewidth]{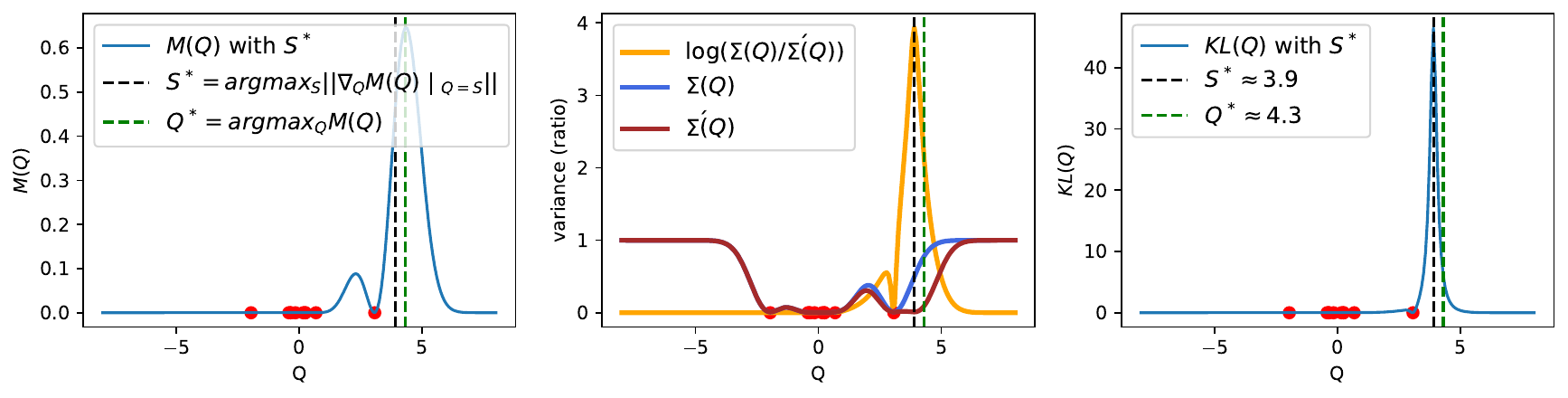}
    \caption{An example on a one-dimensional toy sine training dataset generated as \cref{app:mean_distance_toy_and_theory}, where each training data is shown as a red dot. We denote $S^*$ as a crafted differing data record that incurs maximum gradient for the mean distance LOOD objective, which is constructed following the instructions in \cref{alg:find_nonopt_S}.
    In the left plot, we observe that the optimal query $Q^*$ for mean distance LOOD does not equal the differing point $S^*$. On the contrary, the optimal query for KL LOOD (which incurs maximal information leakage) is the differing point $S^*$ (right plot). This shows the discrepancy between mean distance LOOD and KL LOOD, and shows the inadequacy of using mean distance LOOD to capture information leakage.}
    \label{fig:mse_kl_var_s_opt}
\end{figure}

As another example case of when the \textit{\bfseries variance of prediction distribution is non-negligible} for faithfully capturing information leakage, 
we investigate under kernel functions that are homogeneous with regard to their input data. Specifically, for a homogeneous kernel function $K$ that satisfifes $K(\lambda x, x') = K(x, \lambda x') = \lambda^\alpha K(x, x')$ for any real number $\lambda\in \mathbb{R}$, we have that 
    \begin{align*}
        \Sigma_{D, \sigma^2}(\lambda Q) & = \lambda^{2\alpha} \Sigma_{D, \sigma^2}(Q)\\
        \mu_{D, \sigma^2}(\lambda Q) & = \lambda^\alpha \mu_{D, \sigma^2}(Q),
    \end{align*}
    where $\sigma^2>0$ denotes the variance of the label noise.
    Therefore, for any $\lambda \in \mathbb{R}$, we have that 
    \begin{align}
        M(\lambda Q; D, D') & = \lambda^\alpha M(Q; D, D')\\
        KL(f_{D, \sigma^2}(\lambda Q)\lVert f_{D', \sigma^2}(\lambda Q)) & = KL(f_{D, \sigma^2}(Q)\lVert f_{D', \sigma^2}(Q))
    \end{align}
    Consequently, the query that maximizes Mean distance LOOD explodes to infinity for data with unbounded domain. On the contrary, the KL Divergence LOOD is scale-invariant and remains stable as the query scales linearly (because it takes the variance into account, which also grows with the query scale). In this case, the mean distance is not indicative of the actual information leakage as the variance is not a constant and is non-negligible.

In summary, KL LOOD is a metric that is better-suited for the purpose of measuring information leakage, compared to mean distance LOOD. This is because KL LOOD captures the correlation between multiple queries, and incorporates the second-order information of prediction distribution.

\section{Deferred proofs and experiments for \cref{sec:opt_query}}

\subsection{Deferred proof for \cref{thm:master_theorem}}

\label{app:kl_zero_grad}

In this section, we prove that the differing data record is the stationary point for optimizing the LOOD objective. We will need the following lemma.

\begin{lemma}
    For $A\in \mathbb{R}^{n\times n}$, $b\in \mathbb{R}^n$ and $c\in \mathbb{R}$, we have that
    \begin{align}
        \begin{pmatrix}
            A & b\\
            b^\top & c\\
        \end{pmatrix}^{-1} = \begin{pmatrix}
            A^{-1} + \alpha^{-1}A^{-1}bb^\top A^{-1} &  - \alpha^{-1} A^{-1}b\\
            - \alpha^{-1} b^\top A^{-1} & \alpha^{-1}\\
        \end{pmatrix}
    \end{align}
    where $\alpha = c - b^\top A^{-1}b$.
    \label{lem:block_inverse}
\end{lemma}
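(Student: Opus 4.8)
The plan is to prove \cref{lem:block_inverse} by direct verification: I would denote the left-hand matrix by $M$ and the claimed right-hand matrix by $N$, and check that $MN = I_{n+1}$ by computing its four blocks. Since $M$ and $N$ are square matrices over a field, establishing a one-sided inverse is enough to conclude $N = M^{-1}$, so verifying $MN = I$ suffices. Throughout I would use only the definition $\alpha = c - b^\top A^{-1} b$, the invertibility of $A$, and the fact that $\alpha \neq 0$; these two nonsingularity conditions are exactly what make the displayed inverse well-defined (the scalar $\alpha$ is the Schur complement of $A$ in $M$), so I would state them explicitly at the outset.

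The computation proceeds block by block. For the top-left block I would expand $A(A^{-1} + \alpha^{-1} A^{-1} b b^\top A^{-1}) + b(-\alpha^{-1} b^\top A^{-1})$ and observe that the two $\alpha^{-1} b b^\top A^{-1}$ terms cancel, leaving $I_n$. The top-right block $A(-\alpha^{-1} A^{-1} b) + b \alpha^{-1} = -\alpha^{-1} b + \alpha^{-1} b = 0$ is immediate. The bottom-right block gives $-\alpha^{-1} b^\top A^{-1} b + c \alpha^{-1} = \alpha^{-1}(c - b^\top A^{-1} b) = \alpha^{-1}\alpha = 1$, where the definition of $\alpha$ enters at the last step. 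The only block requiring a little care is the bottom-left one: expanding $b^\top(A^{-1} + \alpha^{-1} A^{-1} b b^\top A^{-1}) - c\alpha^{-1} b^\top A^{-1}$ and using that $b^\top A^{-1} b$ is a scalar, I would factor out $b^\top A^{-1}$ to get $b^\top A^{-1} + \alpha^{-1}(b^\top A^{-1} b - c) b^\top A^{-1}$, and substituting $b^\top A^{-1} b - c = -\alpha$ collapses this to $0$. Assembling the four blocks gives $MN = I_{n+1}$.

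An alternative, more constructive route that I could present instead is to first exhibit the block LDU (Schur) factorization $M = L \, \mathrm{diag}(A, \alpha) \, L^\top$ with $L = \bigl(\begin{smallmatrix} I & 0 \\ b^\top A^{-1} & 1 \end{smallmatrix}\bigr)$, and then invert each factor in reverse order—the triangular factors invert by a single sign flip of their off-diagonal block and the block-diagonal factor inverts entrywise. This derives the formula rather than merely checking it, but for this lemma the direct verification is shorter and entirely self-contained.

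There is essentially no substantial obstacle here: the result is the standard Schur-complement inversion identity, and the only points demanding attention are bookkeeping (the cancellations in the top-left and bottom-left blocks) and stating the implicit hypotheses that $A$ is invertible and $\alpha \neq 0$.
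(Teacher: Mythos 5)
Your proposal is correct and complete: the block-by-block verification of $MN = I_{n+1}$ goes through exactly as you describe, and you are right to flag the implicit hypotheses that $A$ is invertible and that the Schur complement $\alpha = c - b^\top A^{-1}b$ is nonzero, neither of which the lemma states explicitly. The paper itself gives no proof of this lemma at all -- it is stated as a known fact and immediately applied to invert $M_{D'} = \bigl(\begin{smallmatrix} K_{DD}+\sigma^2 I & K_{DS} \\ K_{SD} & 1+\sigma^2 \end{smallmatrix}\bigr)$ in the proof of the stationarity theorem -- so there is no authorial argument to compare against; your direct verification (or the LDU factorization you sketch as an alternative) simply supplies the omitted justification. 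In the paper's application the hypotheses you identify are automatically satisfied, since $M_D = K_{DD} + \sigma^2 I$ is positive definite and $\alpha = \sigma^2 + \Sigma(S) > 0$.
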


We now proceed to prove \cref{thm:master_theorem}

\begin{reptheorem}{thm:master_theorem}
    Let the kernel function be $K$. Assume that (i) $K(x,x) = 1$ for all training data $x$, i.e., all training data are normalized under kernel function $K$;
(ii) $\frac{\partial K(x,x')}{\partial x} \mid_{x'=x} \, = 0$ for all training data $x$, i.e., the kernel distance between two data records $x$ and $x'$ is stable when they are equal. Then, the differing data is a stationary point for LOOD as a function of query data, as follows
    $$
    \nabla_{Q} \textup{LOOD} (f_{D, \sigma^2}(Q)\lVert f_{D', \sigma^2}(Q)) \mid_{Q=S} = 0.
    $$
\end{reptheorem}

\begin{proof}
In this statement, LOOD refers to KL LOOD \cref{def:kl_lood}, therefore by analytical differentiation of \cref{def:kl_lood}, we have that 
    \begin{align}
    \frac{\partial}{\partial Q} LOOD(f_{D, \sigma^2}(Q)\lVert f_{D', \sigma^2}(Q)) = & \frac{\partial}{\partial Q} KL(f_{D, \sigma^2}(Q)\lVert f_{D', \sigma^2}(Q)) \\
    = & \left(1 - \frac{\Sigma'(Q)}{\Sigma(Q)}\right) \cdot \frac{\partial}{\partial Q} \left(\frac{\Sigma(Q)}{\Sigma'(Q)}\right) \nonumber \\
        & + \frac{\partial \lVert \mu(Q) - \mu'(Q)\rVert_2^2 }{\partial Q} \Sigma'(Q)^{-1}\nonumber\\
        & + \lVert \mu(Q) - \mu'(Q)\rVert_2^2 \frac{\partial \Sigma'(Q)^{-1}}{\partial Q},\label{eqn:kl_partial}
    \end{align}
Note that here $\Sigma(Q)$ and $\Sigma'(Q)$ are both scalars, as we consider single query. We denote the three terms by $F_1(Q)$, $F_2(Q)$, and $F_3(Q)$. We will show that $F_1(S) = 0$ and $F_2(S) = - F_3(S)$ which is sufficient to conclude. Let us start with the first term.

\begin{enumerate}
\item \textbf{$F_1(S) = 0$:} Recall from the Gaussian process definition in \cref{sec:LOOD} that 
$$
\Sigma(Q) = K_{QQ} - K_{QD} M^{-1}_D K_{DQ}.
$$  
where $M_D = K_{DD} + \sigma^2 I$. Therefore,
$$
\frac{\partial}{\partial Q} \Sigma(Q) = - 2 \frac{\partial  K_{QD}}{\partial Q} M^{-1}_D K_{DQ},
$$
where $\frac{\partial  K_{QD}}{\partial Q} \in \mathbb{R}^{d \times n}$ ($d$ is the dimension of $Q$ and $n$ is the number of datapoints in the training set $D$).  A similar formula holds for $\frac{\partial}{\partial Q} \Sigma'(Q)$.

From \cref{lem:block_inverse}, and by using the kernel assumption that $K(S, S) = 1$, we have that 

\begin{align}
M_{D'}^{-1} &= 
        \begin{pmatrix}
            K_{DD} + \sigma^2 I & K_{DS}\\
            K_{SD} & 1 + \sigma^2\\
        \end{pmatrix}^{-1}\\
        &= \begin{pmatrix}
            M_D^{-1} + \alpha^{-1} M_D^{-1}K_{DS} K_{SD} M_D^{-1} &  - \alpha^{-1} M_D^{-1} K_{DS}\\
            - \alpha^{-1} K_{SD} M_D^{-1} & \alpha^{-1}\\
        \end{pmatrix}
\end{align}
    where $\alpha = 1+ \sigma^2 - K_{SD} M_D^{-1} K_{DS} = \sigma^2 + \Sigma(S)$.

Using the fact that $M_{D'} e_{n+1} = K_{D'S} + \sigma^2 e_{n+1}$ where $e_{n+1} = (0, \dots, 0, 1)^\top \in \mathbb{R}^{n+1}$, simple calculations yield
\begin{align*}
\Sigma'(S) &= 1 - K_{SD'} M_{D'}^{-1} K_{D'S}\\
&= \sigma^2 (1 - \sigma^2 e_{n+1}^\top M^{-1}_{D'} e_{n+1})\\
&= \sigma^2 (1 - \sigma^2 \alpha^{-1}).
\end{align*}

We now have all the ingredients for the first term. To alleviate the notation, we denote by $\mathring{K}_{SD} = \frac{\partial}{\partial Q} K_{QD} \mid_{Q=S}$. 

We obtain

\begin{align*}
\frac{\partial}{\partial Q} \left(\frac{\Sigma(Q)}{\Sigma'(Q)}\right)\mid_{Q=S} = \frac{-2 \mathring{K}_{SD} M^{-1}_D K_{DS}}{\Sigma'(S)} + \frac{2 \Sigma(S) \mathring{K}_{SD'} M^{-1}_{D'} K_{D'S}}{\Sigma'(S)^2}.
\end{align*}

A key observation here is the fact that the last entry of $\mathring{K}_{SD'} $ is $0$ by assumption.  Using the formula above for $M^{-1}_{D'}$, we obtain 

\begin{align*}
\mathring{K}_{SD'} M^{-1}_{D'} K_{D'S} &= -\sigma^2 \mathring{K}_{SD'} M^{-1}_{D'} e_{n+1}\\
 &= -\sigma^2 \mathring{K}_{SD} ( - \alpha^{-1} M_{D}^{-1} K_{DS})\\
  &=  \alpha^{-1} \sigma^2 \mathring{K}_{SD} M_{D}^{-1} K_{DS}).\\
\end{align*}

Now observe that $\alpha^{-1} \sigma^{2} \Sigma(S) \Sigma'(S)^{-1} = \alpha^{-1} \sigma^{2} (\alpha - \sigma^2) \sigma^{-2}(1 - \sigma^2 \alpha^{-1})^{-1} = 1$, which concludes the proof for the first term.

\item $F_2(S) + F_3(S) = 0$:

Let us start by simplifying $F_2(S) = \frac{\partial \lVert \mu(Q) - \mu'(Q)\rVert_2^2 }{\partial Q} \mid_{Q =S} \Sigma'(S)^{-1}$. The derivative here is that of $M(Q)$, and we have 
$$
\frac{\partial \lVert \mu(Q) - \mu'(Q)\rVert_2^2 }{\partial Q} \mid_{Q =S}  = 2 (\mu(S)  - \mu'(S) ) (\mathring{K}_{SD} M_D^{-1} y_D - \mathring{K}_{SD'} M_{D'}^{-1} y_{D'} ).
$$
Using the formula of $M_{D'}^{-1}$ and observe that the last entry of $\mathring{K}_{SD}$ is zero, we obtain
$$
\frac{\partial \lVert \mu(Q) - \mu'(Q)\rVert_2^2 }{\partial Q} \mid_{Q =S}  = 2 \alpha^{-1} (\mu(S)  - \mu'(S)) \left(y_S - K_{SD} M_D^{-1}y_D\right) \mathring{K}_{SD} M_D^{-1} K_{DS}.
$$
Moreover, we have that 
\begin{align}
\mu(S) - \mu'(S) = & K_{SD} M^{-1}_{D} y_D -  K_{SD'} M^{-1}_{D'} y_{D'}\\
= &K_{SD} M^{-1}_{D} y_D - e_{n+1}^\top (M_{D'} - \sigma^2 I) M_{D'}^{-1}y_{D'}\\
= & K_{SD} M^{-1}_{D} y_D - y_S + \sigma^2 e_{n+1}^\top M_{D'}^{-1}y_{D'}\\
= & K_{SD} M^{-1}_{D} y_D - y_S + \sigma^2 (-\alpha^{-1}K_{SD}M_{D}^{-1} y_D  + \alpha^{-1}y_S)\\
= & (\sigma^2\alpha^{-1} - 1) (y_S - K_{SD}M_D^{-1}y_D).\label{eqn:mean_diff_simplify}
\end{align}

Therefore
\begin{align}
F_2(S) &= 2 \alpha^{-1} (\sigma^2\alpha^{-1} - 1)\Sigma'(S)^{-1} \left(y_S - K_{SD} M_D^{-1}y_D\right)^2 \mathring{K}_{SD}  M_D^{-1} K_{DS}
\label{eqn:F_2_expression}
\end{align}

Let us now deal with $F_3(S) = ( \mu(S) - \mu'(S))^2 \frac{\partial \Sigma'(Q)^{-1}}{\partial Q} \mid_{Q=S}$. We have that 

\begin{align}
\frac{\partial \Sigma'(Q)^{-1}}{\partial Q} \mid_{Q=S} &= 2 \Sigma'(S)^{-2} \mathring{K}_{SD'} M^{-1}_{D'} K_{D'S}\nonumber\\
&= -2 \sigma^2 \Sigma'(S)^{-2} \mathring{K}_{SD'} M^{-1}_{D'}  e_{n+1}\nonumber\\
&= -2 \sigma^2 \Sigma'(S)^{-2} \mathring{K}_{SD} (-\alpha^{-1} M_D^{-1} K_{DS})\nonumber\\
&= 2 \alpha^{-1} \sigma^2 \Sigma'(S)^{-2} \mathring{K}_{SD} M_D^{-1} K_{DS}.\label{eqn:grad_sigma_prime}
\end{align}

By plugging \cref{eqn:grad_sigma_prime} and \ref{eqn:mean_diff_simplify} into $F_3(S) = ( \mu(S) - \mu'(S))^2 \frac{\partial \Sigma'(Q)^{-1}}{\partial Q} \mid_{Q=S}$, we prove that

$$
F_3(S) = 2 \alpha^{-1} \sigma^2(\sigma^{2} \alpha^{-1} - 1)^2 \Sigma'(S)^{-2} \left(y_S - K_{SD} M_D^{-1} y_D\right)^2 \mathring{K}_{SD'} M_D^{-1} K_{DS}.
$$
We conclude by observing that $\sigma^2 (\sigma^{2} \alpha^{-1} - 1)\Sigma'(S)^{-1} = \sigma^2 (\sigma^{2} \alpha^{-1} - 1)\sigma^{-2} (1 - \sigma^2 \alpha^{-1})^{-1} = - 1$.

\end{enumerate}
\end{proof}

\subsection{Proofs for regularity conditions of commonly used kernel functions}
\label{app:regular_common_kernels}

We first show that both the isotropic kernel and the correlation kernel satisfies the condition of \cref{thm:master_theorem}. We then show that the RBF kernel, resp. the NNGP kernel on a sphere, is an isotropic kernel, resp. a correlation kernel.
\begin{proposition}[Isotropic kernels satisfy conditions in \cref{thm:master_theorem}]
    Assume that the kernel function $K$ is isotropic, i.e. there exists a continuously differentiable function $g$ such that $K(x,y) = g(\|x-y\|)$ for all $x, y$. Assume that $g$ satisfies $z^{-1} g'(z)$ is bounded\footnote{Actually, only the boundedness near zero is needed.} and $g(0) = 1$. Then, $K$ satisfies the conditions of \cref{thm:master_theorem}. 
\end{proposition}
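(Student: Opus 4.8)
The plan is to verify the two hypotheses of \cref{thm:master_theorem} directly, the only delicate point being the behaviour of the isotropic kernel on the diagonal $x=x'$, where $r \mapsto \|x-x'\|$ fails to be smooth. Condition~(i) is immediate: since $K(x,x) = g(\|x-x\|) = g(0) = 1$ by hypothesis, every data point is normalized under $K$.

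For condition~(ii), I would first compute the gradient off the diagonal. Writing $r = \|x-x'\|$ and applying the chain rule together with $\nabla_x r = (x-x')/r$ (valid for $x \neq x'$), one obtains
\begin{align*}
    \nabla_x K(x,x') = g'(r)\,\frac{x-x'}{r} = \frac{g'(r)}{r}\,(x-x').
\end{align*}
The goal is to show that this expression vanishes as $x' \to x$, and, more precisely, that the gradient is well-defined and equal to $0$ exactly on the diagonal (so that ``evaluating at $x'=x$'', which is naively a $0/0$ form here, rigorously returns $0$).

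The key step, and the main obstacle, is justifying differentiability at $r = 0$, since the Euclidean norm is not differentiable there; this is exactly where the hypothesis that $z^{-1}g'(z)$ is bounded near zero enters. I would fix $x'$ and study $\phi(x) \coloneqq g(\|x-x'\|)$. Boundedness gives $|g'(t)| \le C\,t$ for all small $t$ and some constant $C$, so by the fundamental theorem of calculus (using that $g$ is continuously differentiable),
\begin{align*}
    |\phi(x) - \phi(x')| = \left| \int_0^{\|x-x'\|} g'(t)\, dt \right| \le \frac{C}{2}\,\|x-x'\|^2.
\end{align*}
This second-order bound shows $\phi(x) - \phi(x') = o(\|x-x'\|)$, hence $\phi$ is (Fréchet) differentiable at $x'$ with $\nabla\phi(x') = 0$; equivalently, $\frac{g'(r)}{r}(x-x') \to 0$ as $x' \to x$, since the scalar factor stays bounded while $(x-x') \to 0$. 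This establishes condition~(ii) and completes the verification.

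It is worth emphasizing why the boundedness assumption is not merely technical: together with continuity of $g'$ it forces $g'(0) = 0$, which is the genuine content of stationarity on the diagonal. Without it, an isotropic kernel could exhibit a ``conical'' singularity at $x = x'$, and the gradient would fail to vanish (or even to exist). In the final write-up I would state this as a short remark after the two computations above.
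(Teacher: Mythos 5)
Your proof is correct and follows essentially the same route as the paper's: verify $K(x,x)=g(0)=1$, compute $\nabla_x K(x,y) = \frac{g'(\|x-y\|)}{\|x-y\|}(x-y)$ off the diagonal, and use boundedness of $z^{-1}g'(z)$ near zero to conclude the gradient vanishes at $y=x$. Your fundamental-theorem-of-calculus bound $|\phi(x)-\phi(x')| \le \tfrac{C}{2}\|x-x'\|^2$ is a slightly more careful justification of differentiability on the diagonal than the paper's one-line ``by continuation'' remark, but it is the same argument in substance.
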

\begin{proof}
    It is straightforward that $K(x,x) = 1$ for all $x$. Simple calculations yield 
    $$
    \frac{\partial K(x,y)}{\partial x} = \frac{g'(\|x-y\|)}{\|x-y\|} (x-y).
    $$
    By assumption, the term $\frac{g'(\|x-y\|)}{\|x-y\|}$ is bounded for all $x \neq y$ and also in the limit $x \to y$. Therefore, it is easy to see that $$
    \frac{\partial K(x,y)}{\partial x} \mid_{x = y} =0.
    $$
    More rigorously, the partial derivative above exists by continuation of the function $z \to \frac{g'(z)}{z} z$ near $0^{+}$.
\end{proof}
\begin{proposition}[Correlation kernels satisfy conditions in \cref{thm:master_theorem}]
Assume that the kernel function $K$ is a correlation kernel, i.e. there exists a function $g$ such that $K(x,y) = g\left(\frac{\langle x, y\rangle}{\|x\|\|y\|} \right)$ for all $x, y \neq 0$. Assume that $g(1) = 1$. Then, $K$ satisfies the conditions of \cref{thm:master_theorem}. 
\end{proposition}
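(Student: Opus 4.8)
The plan is to verify the two regularity conditions of \cref{thm:master_theorem} directly from the structure $K(x,y) = g(u(x,y))$, where $u(x,y) = \frac{\langle x,y\rangle}{\|x\|\|y\|}$ is the cosine similarity between $x$ and $y$. Condition (i) is immediate: on the diagonal $y = x$ we have $u(x,x) = \frac{\langle x,x\rangle}{\|x\|^2} = 1$, so the hypothesis $g(1) = 1$ gives $K(x,x) = g(1) = 1$ for every nonzero training point.

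For condition (ii) I would apply the chain rule, writing $\frac{\partial K(x,y)}{\partial x} = g'(u(x,y)) \cdot \frac{\partial u(x,y)}{\partial x}$, which reduces the problem to showing that the gradient of the cosine similarity vanishes on the diagonal. The key computation is a quotient-rule evaluation of $\nabla_x u$: differentiating $\frac{\langle x,y\rangle}{\|x\|}$ gives $\frac{y}{\|x\|} - \frac{\langle x,y\rangle}{\|x\|^3} x$, so that $\frac{\partial u}{\partial x} = \frac{1}{\|y\|}\left(\frac{y}{\|x\|} - \frac{\langle x,y\rangle}{\|x\|^3} x\right)$.

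Setting $y = x$ is where the cancellation happens: including the $\frac{1}{\|y\|} = \frac{1}{\|x\|}$ factor, the first term becomes $\frac{x}{\|x\|^2}$ and the second becomes $\frac{\|x\|^2}{\|x\|^3} \cdot \frac{x}{\|x\|} = \frac{x}{\|x\|^2}$, and the two cancel exactly, so $\frac{\partial u}{\partial x}\mid_{y=x} = 0$. Intuitively this reflects the scale-invariance of $u$ in its first argument: the normalization kills the radial component of the gradient, and on the diagonal there is no transverse component left. Consequently $\frac{\partial K}{\partial x}\mid_{x'=x} = g'(1) \cdot 0 = 0$, which is condition (ii).

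The only point requiring care is the differentiability of $g$ at $1$: the chain rule needs $g'(1)$ to exist and be finite, which I would fold into the standing smoothness assumption on $g$ (the same assumption implicitly required for $\nabla_Q \mathrm{LOOD}$ to be well-defined). Given that, there is no real obstacle — the entire content is the diagonal cancellation in $\nabla_x u$. Unlike the isotropic case, there is no delicate $z^{-1} g'(z)$ limit to control near the diagonal, since here the vanishing comes purely from the geometry of the normalized inner product rather than from the behaviour of $g$.
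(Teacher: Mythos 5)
Your proof is correct and follows essentially the same route as the paper: both verify condition (i) from $g(1)=1$ and then compute $\frac{\partial K(x,y)}{\partial x} = \frac{1}{\|x\|\|y\|}\left(y - \frac{\langle x,y\rangle}{\|x\|^2}x\right)g'\!\left(\frac{\langle x,y\rangle}{\|x\|\|y\|}\right)$, observing that the vector factor vanishes on the diagonal. Your remark that $g'(1)$ must exist is a reasonable (implicit in the paper) smoothness caveat, but otherwise the arguments are identical.
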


\begin{proof}
    The first condition is satisfied by assumption on $g$. For the second condition, observe that 
    $$
    \frac{\partial K(x,y)}{\partial x} = \frac{1}{\|x\| \|y\|} \left(y - \frac{\langle x, y \rangle}{\|x\|^2} x\right) g'\left( \frac{\langle x, y \rangle}{\|x\| \|y\|}\right),
    $$
    which yields $\frac{\partial K(x,y)}{\partial x} \mid_{x=y} = 0$.
\end{proof}  

We have the following result for the NNGP kernel. 
\begin{proposition}[NNGP kernel on the sphere satisfies conditions in Theorem \cref{thm:master_theorem}]
    \label{prop:nngp_sphere} 
    The NNGP kernel function is similar to the correlation kernel. For ReLU activation function, it has the form
    $$
    K(x,y) = \|x\| \|y\| g\left( \frac{\langle x, y\rangle}{\|x\|\|y\|}\right).
    $$
    This kernel function does not satisfy the conditions of \cref{thm:master_theorem} and therefore we cannot conclude that the differing point is a stationary point. Besides, in practice, NNGP is used with normalized data in order to avoid any numerical issues; the datapoints are normalized to have $\|x\| = r$ for some fixed $r>0$. With this formulation, we can actually show that the kernel $K$ restricted to the sphere satisfies the conditions of \cref{thm:master_theorem}. However, as one might guess, it is not straightforward to compute a "derivative on the sphere" as this requires a generalized definition of the derivative on manifolds (the coordinates are not free). To avoid dealing with unnecessary complications, we can bypass this problem  by considering the spherical coordinates instead. For $x \in \mathbb{R}^d$, there exists a unique set of parameters $(r, \varphi_1, \varphi_2, \dots, \varphi_{d-1}) \in \mathbb{R}^d$ s.t.
    $$
    \begin{cases}
    x_1 = r \cos(\varphi_1) \\
    x_2 = r \sin(\varphi_1) \cos(\varphi_2) \\
    x_3 = r \sin(\varphi_1) \sin(\varphi_2) \cos(\varphi_3) \\
    \vdots\\
    x_{d-1} = r \sin(\varphi_1) \sin(\varphi_2) \dots \sin(\varphi_{d-2}) \cos(\varphi_{d-1}) \\
    x_{d} = r \sin(\varphi_1) \sin(\varphi_2) \dots \sin(\varphi_{d-2}) \sin(\varphi_{d-1}).
    \end{cases}
    $$
    This is a generalization of the polar coordinates in $2D$.\\

    Without loss of generality, let us assume that $r=1$ and let $\bm{\varphi} = (\varphi_1, \varphi_2, \dots, \varphi_{d-1})$. In this case, the NNGP kernel, evaluated at $(\bm{\varphi}, \bm{\varphi}')$ is given by 
    $$
    K(\bm{\varphi}, \bm{\varphi}') = g\left( \sum_{i=1}^{d-1} \left[\prod_{j=1}^{i-1} \sin(\varphi_j) \sin(\varphi_j')\right] \cos(\varphi_i) \cos(\varphi_i')  + \prod_{j=1}^{d-1} \sin(\varphi_j) \sin(\varphi_j')\right),
    $$
    with the convention $\prod_{i=1}^0 \cdot =  1$. 

    Consider the spherical coordinates introduced above, and assume that $r=1$. Then, the NNGP kernel given by $K(\bm{\varphi}, \bm{\varphi}')$ satisfies the conditions of \cref{thm:master_theorem}.
\end{proposition}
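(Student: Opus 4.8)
The plan is to recognize that the argument of $g$ in the coordinate expression is nothing but the Euclidean inner product of the two unit vectors it encodes, and then to exploit the fact that this inner product is maximized exactly at coincidence. Write $c(\bm{\varphi}, \bm{\varphi}')$ for the expression inside $g$, and let $x(\bm{\varphi})$ denote the point on the unit sphere with spherical coordinates $(1, \bm{\varphi})$. The first step is to verify the purely algebraic identity $c(\bm{\varphi}, \bm{\varphi}') = \langle x(\bm{\varphi}), x(\bm{\varphi}')\rangle$; this follows by substituting the coordinate formulas and telescoping the nested products $\prod_{j<i}\sin\varphi_j\sin\varphi_j'$, so that $c$ is exactly the cosine similarity $\langle x, x'\rangle/(\|x\|\|x'\|)$ evaluated at unit-norm inputs. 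With this in hand, condition (i) of \cref{thm:master_theorem} is immediate: at $\bm{\varphi}' = \bm{\varphi}$ we get $c = \langle x(\bm{\varphi}), x(\bm{\varphi})\rangle = \|x(\bm{\varphi})\|^2 = 1$, so $K(\bm{\varphi},\bm{\varphi}) = g(1) = 1$ using the normalization $g(1)=1$ of the NNGP correlation profile.

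For condition (ii), I would invoke the chain rule $\nabla_{\bm{\varphi}} K = g'(c)\,\nabla_{\bm{\varphi}} c$ (assuming $g$ is continuously differentiable near $1$, as in the correlation-kernel setting) and argue that $\nabla_{\bm{\varphi}} c$ vanishes at $\bm{\varphi}=\bm{\varphi}'$. By Cauchy--Schwarz, $c(\bm{\varphi},\bm{\varphi}') = \langle x(\bm{\varphi}), x(\bm{\varphi}')\rangle \le \|x(\bm{\varphi})\|\,\|x(\bm{\varphi}')\| = 1$, with equality if and only if $x(\bm{\varphi}) = x(\bm{\varphi}')$. Hence, for fixed $\bm{\varphi}'$, the map $\bm{\varphi}\mapsto c(\bm{\varphi},\bm{\varphi}')$ attains its global maximum value $1$ precisely at $\bm{\varphi}=\bm{\varphi}'$; being smooth with an interior maximizer, its gradient there must vanish, giving $\nabla_{\bm{\varphi}} c\mid_{\bm{\varphi}=\bm{\varphi}'}=0$ and therefore $\nabla_{\bm{\varphi}} K\mid_{\bm{\varphi}=\bm{\varphi}'}=0$. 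This maximization route is attractive because it sidesteps differentiating the nested product entirely.

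The main obstacle is the degeneracy of spherical coordinates: the chart is singular at the poles (where some $\sin\varphi_j=0$), and the ``interior maximizer implies vanishing gradient'' step requires $\bm{\varphi}'$ to lie in the interior of the coordinate domain, where the angles vary freely. For a generic training point this is automatic, and since the stationarity of $c$ is a coordinate-free geometric fact (the inner product of unit vectors is maximized at coincidence regardless of the chart), one can always rotate the coordinate frame so that any fixed point becomes an interior point. If one prefers to avoid this geometric appeal, the fallback is the brute-force computation of $\partial c/\partial\varphi_k$ at coincidence: collecting the term from the index $i=k$ against the contributions from the products $P_{i-1}$ with $i>k$ and from the trailing term $\prod_{j=1}^{d-1}\sin\varphi_j\sin\varphi_j'$, and then applying the identity that the trailing spherical coordinates $(\varphi_{k+1},\dots,\varphi_{d-1})$ parametrize a unit vector, one finds that the surviving terms reduce to $-Q_{k-1}\sin\varphi_k\cos\varphi_k + \cot\varphi_k\cdot Q_k$ with $Q_k = Q_{k-1}\sin^2\varphi_k$, which cancel exactly. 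Either route establishes both conditions and hence the proposition.
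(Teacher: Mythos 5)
Your proof is correct, and it takes a genuinely different route from the paper's. The paper proves stationarity by brute force: it differentiates $K(\bm{\varphi},\bm{\varphi}')$ with respect to each $\varphi_l$, writes out the resulting factor $J(\bm{\varphi},\bm{\varphi}')$ explicitly, sets $\bm{\varphi}'=\bm{\varphi}$, and verifies that the surviving terms cancel via the trigonometric identity $\sum_{i=l+1}^{d-1}\bigl[\prod_{j=1}^{i-1}\sin^2\varphi_j\bigr]\cos^2\varphi_i+\prod_{j=1}^{d-1}\sin^2\varphi_j=\prod_{j=1}^{l}\sin^2\varphi_j$ (itself a truncated form of $\|x\|^2=1$). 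You instead observe that the argument of $g$ is exactly $\langle x(\bm{\varphi}),x(\bm{\varphi}')\rangle$ for unit vectors, which by Cauchy--Schwarz is globally maximized (with value $1$) precisely at coincidence; first-order optimality at an interior maximizer then gives $\nabla_{\bm{\varphi}}c\mid_{\bm{\varphi}=\bm{\varphi}'}=0$, and the chain rule transfers this to $K$ without ever differentiating the nested product. Your route is shorter, makes the geometric reason for stationarity transparent (it is the same coordinate-free fact that underlies the paper's earlier correlation-kernel proposition), and cleanly isolates the only real delicacy --- the chart degeneracy at the poles --- which you handle correctly by genericity or a rotation of the frame. What the paper's computation buys in exchange is an explicit formula for the derivative at arbitrary $(\bm{\varphi},\bm{\varphi}')$, not just its vanishing at coincidence, and it never needs the injectivity or interiority discussion at all. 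One small point to make explicit if you write this up: you should note that $g$ is differentiable at $1$ with finite $g'(1)$ (true for the ReLU arc-cosine kernel, where $g'(1)=1$), since otherwise the chain-rule step $\nabla_{\bm{\varphi}}K=g'(c)\,\nabla_{\bm{\varphi}}c$ would be vacuous; the paper implicitly assumes the same.
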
 

\begin{proof}
        
    Computing the derivative with respect to $\bm{\varphi}$ is equivalent to computing the derivative on the unit sphere. For $l \in \{1, \dots, d-1\}$, we have
    $$
    \frac{\partial K(\bm{\varphi}, \bm{\varphi}')}{ \partial \varphi_l} = J(\bm{\varphi}, \bm{\varphi}') \times g'\left( \sum_{i=1}^{d-1} \left[\prod_{j=1}^{i-1} \sin(\varphi_j) \sin(\varphi_j')\right] \cos(\varphi_i) \cos(\varphi_i')  + \prod_{j=1}^{d-1} \sin(\varphi_j) \sin(\varphi_j')\right),
    $$
    where 
    \begin{align*}
    J(\bm{\varphi}, \bm{\varphi}') &=  \sum_{i=l+1}^{d-1} \left[ \cos(\varphi_l) \prod_{\substack{j=1 \\ j\neq l}}^{i-1} \sin(\varphi_j)\right] \left[ \prod_{\substack{j=1}}^{i-1} \sin(\varphi_j')\right] \cos(\varphi_i) \cos(\varphi_i') \\
    &+  \cos(\varphi_l) \left[ \prod_{\substack{j=1\\ j\neq l}}^{d-1} \sin(\varphi_j)\right] \left[ \prod_{\substack{j=1}}^{d-1} \sin(\varphi_j')\right] - \left[ \prod_{\substack{j=1}}^{l-1} \sin(\varphi_j)\right] \left[ \prod_{\substack{j=1}}^{l-1} \sin(\varphi_j')\right] \sin(\varphi_l) \cos(\varphi_l').
    \end{align*}
    When $\varphi = \varphi'$, we obtain 
    
    \begin{align*}
    J(\bm{\varphi}, \bm{\varphi}) &= \frac{ \cos(\varphi_l)}{ \sin(\varphi_l)} \sum_{i=l+1}^{d-1} \left[ \prod_{\substack{j=1 }}^{i-1} \sin(\varphi_j)^2\right]  \cos(\varphi_i)^2 +  \frac{ \cos(\varphi_l)}{ \sin(\varphi_l)}  \left[ \prod_{\substack{j=1}}^{d-1} \sin(\varphi_j)^2\right]\\
    & - \frac{ \cos(\varphi_l)}{ \sin(\varphi_l)} \left[ \prod_{\substack{j=1}}^{l} \sin(\varphi_j)^2\right].
    \end{align*}
    By observing that
    $$
    \sum_{i=l+1}^{d-1} \left[ \prod_{\substack{j=1 }}^{i-1} \sin(\varphi_j)^2\right]  \cos(\varphi_i)^2 +    \left[ \prod_{\substack{j=1\\ j\neq l}}^{d-1} \sin(\varphi_j)^2\right] = \prod_{\substack{j=1}}^{l}  \sin(\varphi_j)^2,
    $$
    we conclude that $\frac{\partial K(\bm{\varphi}, \bm{\varphi}')}{ \partial \varphi_l} \mid_{\bm{\varphi} = \bm{\varphi}'} = 0$ for all $l$.
    
\end{proof}

\subsection{More examples for query optimization under NNGP kernel functions}

\label{ssec:examples_nngp_query_opt}

We show the optimized query for NNGP kernel under different model architectures, including fully connected network and convolutional neural network with various depth, in \cref{fig:more_example_query_opt}.

\begin{figure}[t]
    \centering
    \begin{subfigure}[b]{0.45\textwidth}
        \centering
        \includegraphics[width=\textwidth]{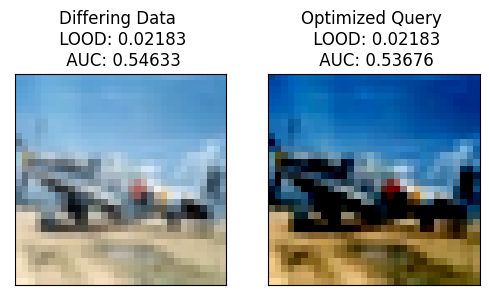}
        \caption{A run of LOOD optimization under NNGP kernel for fully connected network with depth 5}
    \end{subfigure}
    \hfill
    \begin{subfigure}[b]{0.45\textwidth}
        \centering
        \includegraphics[width=\textwidth]{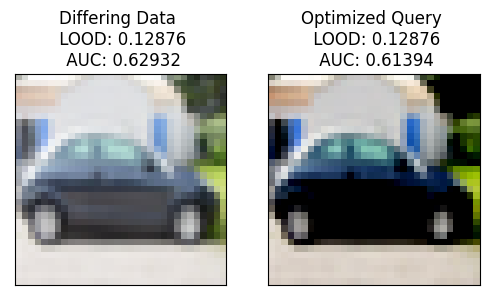}
        \caption{A run of LOOD optimization under NNGP kernel for CNN with depth 5}
    \end{subfigure}
    \caption{Additional LOOD optimization results under NNGP kernels with different architectures. We evaluate on class-balanced subset of CIFAR-10 dataset of size 1000. The optimized query consistently has similar LOOD and MIA AUC score to the differing data record.}
    \label{fig:more_example_query_opt}
\end{figure}

To further understand the query optimization process in details, we visualize an example LOOD optimization process below in \cref{fig:more_example_query_opt_dynamics}.

\begin{figure}[t]
    \centering
    \begin{subfigure}[b]{0.22\textwidth}
        \centering
        \includegraphics[width=\textwidth]{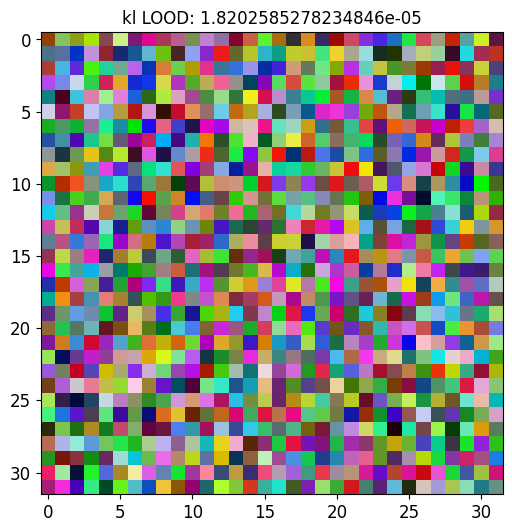}
        \caption{Query after $0$ iterations of optimization}
    \end{subfigure}
    \hfill
    \begin{subfigure}[b]{0.22\textwidth}
        \centering
        \includegraphics[width=\textwidth]{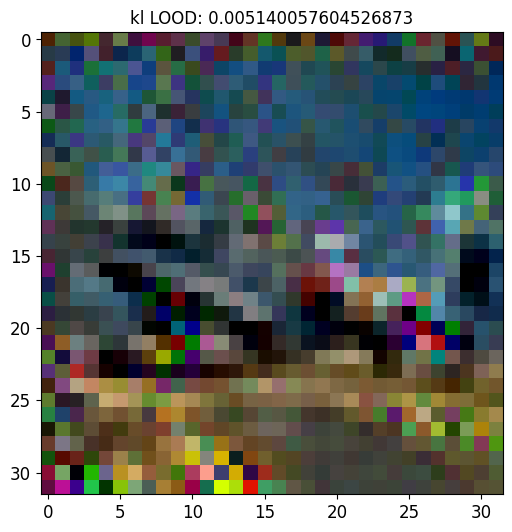}
        \caption{Query after $10$ iterations of optimization}
    \end{subfigure}
    \hfill
    \begin{subfigure}[b]{0.22\textwidth}
        \centering
        \includegraphics[width=\textwidth]{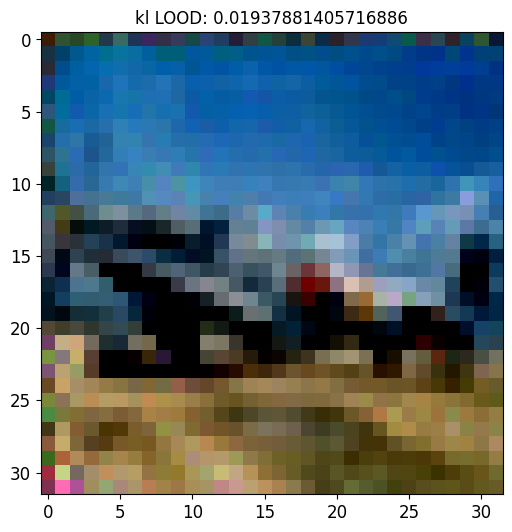}
        \caption{Query after $20$ iterations of optimization}
    \end{subfigure}
    \hfill
    \begin{subfigure}[b]{0.27\textwidth}
        \centering
        \includegraphics[width=\textwidth]{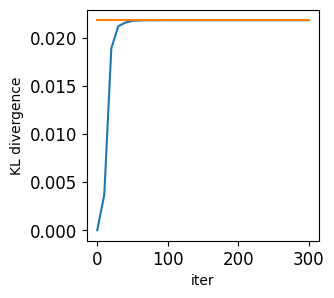}
        \caption{LOOD of query versus number of iterations}
    \end{subfigure}
    \caption{Illustration of an example LOOD optimization process at different number of optimization iterations. We evaluate on a class-balanced subset of CIFAR-10 dataset of size 1000 and NNGP kernel for fully connected network with depth 5. The query is initialized by uniform sampling from the pixel domain, and starts recovering the differing data record as the optimization proceeds. We plot the optimized queries after $0$, $10$ and $20$ iterations and show the convergence of LOOD for query with regard to the increasing number of iterations for LOOD optimization.}
    \label{fig:more_example_query_opt_dynamics}
\end{figure}

\subsection{When the differing point is not the optimal query}

\label{app:opt_query_not_diff}

In practice data-dependent normalization is used prior to training, such as batch and layer normalization. 
Note that this normalization of the pair of leave-one-out datasets results in them differing in more than one record~\footnote{During inference, the same test record will again go through a normalization step that depends on the training dataset before being passed as a query the corresponding Gaussian process.} 
Then the optimized query data record obtains a higher LOOD than the LOOD of the differing points; we show this in \cref{fig:opt_non_regular}; thus in such a setting there is value for the attacker to optimise the query; this is consistent with results from \citep{wen2022canary} that exploit the training data augmentation.

\begin{figure}[t]
    \centering
    \begin{subfigure}[b]{0.47\textwidth}
        \centering
        \includegraphics[width=\textwidth]{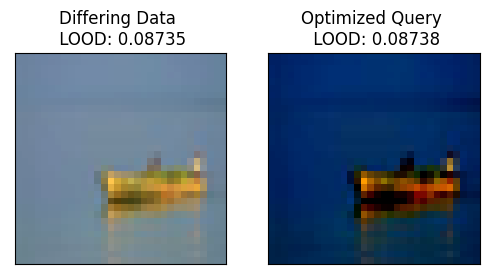}
        \caption{Differing data and optimized query in run one}
        \label{fig:randomize_differing}
    \end{subfigure}
    \hfill
    \begin{subfigure}[b]{0.47\textwidth}
        \centering
        \includegraphics[width=\textwidth]{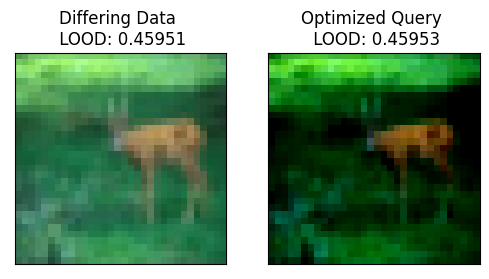}
        \caption{Differing data and optimized query in run two}
        \label{fig:imprecise_LOO_dataset}
    \end{subfigure}
    \caption{Example runs of LOOD optimization under data-dependent training data normalization (that uses per-channel mean and standard deviation of training dataset). The optimized query has a slightly higher LOOD than differing point, and visually has darker color. Our setup uses 500 training data points of the CIFAR10 dataset (50 per class) and NNGP specified by CNN 
    with depth 5.}
    \label{fig:opt_non_regular}
\end{figure}

\section{Analyzing LOOD for Multiple Queries and Leave-one-group-out Setting}

\label{app:extended_discussion_leave_one_group}
\subsection{LOOD under multiple queries}
\label{app:multiple_query}
We now turn to a more general question of quantifying the LOOD when having a model answer \textbf{multiple queries} $\{Q_1,...,Q_q\}$ rather than just one query $Q$ (this corresponds to the practical setting of e.g. having API access to a model that one can query multiple times). In this setting, a natural question is: as the number of queries grows, does the distinguishability between leave-one-out models' predictions get stronger (when compared to the single query setting)? Our empirical observations indicate that the significant information regarding the differing data point resides within its prediction on that data record. Notably, the optimal single query $Q=S$ exhibits a higher or equal degree of leakage compared to any other query set. 

Specifically, to show that the optimal single query $Q=S$ exhibits a higher or equal degree of leakage compared to any other query set, we performed multi-query optimization $\max_{Q_1, \cdots, Q_q} LOOD(Q_1, \cdots, Q_q; D, D')$, which uses gradient descent to optimize the LOOD across $q$ queries, for varying numbers of queries $q=1, 2, 4, 8$. We evaluated this setting under the following conditions: NNGP using a fully connected network with depths of $2, 4, 6, 8, 10$; using $500$ random selections of leave-one-out datasets (where the larger dataset is of size $10000$) sampled from the  'car' and 'plane' classes of the CIFAR10 dataset. Our observations indicate that, in all evaluation trials, the LOOD values obtained from the optimized $m$ queries are lower than the LOOD value from the single differing data record. 

\subsection{Leave-one-group out distinguishability}
\label{app:group_LOOD}
Here we extend our analysis to the \textbf{leave-one~group-out} setting, which involves two datasets, $D$ and $D'$, that differ in a \textit{group} of $s$ data points $\{S_1,...,S_s\}$. There are no specific restrictions on how these points are selected. In this setup, we evaluate the model using queries $\{Q_1,...,Q_q\}$, where $q \geq 1$. The leave-one~group-out approach can provide insights into privacy concerns when adding a group to the training set, such as in datasets containing records from members of the same organization or family~\cite{dwork2014algorithmic}. Additionally, this setting can shed light on the success of data poisoning attacks, where an adversary adds a set of points to the training set to manipulate the model or induce wrong predictions. Here, we are interested in constructing and studying queries that result in the highest information leakage about the group, i.e., where the model's predictions are most affected under the leave-one~group-out setting.

Similar to our previous approach, we run maximization of LOOD to identify the queries that optimally distinguishes the leave-one~group-out models. Figure~\ref{fig:leave_one_group_out} presents the results for single-query ($q=1$) setting and multi-query ($q=3$) setting, with a differing group size $|S|=3$. We have conducted in total $10$ rounds of experiments, where in each round we randomly select a fixed group of differing datapoints and then perform $10$ independent optimization runs (starting from different initializations). Among all the runs, $80\%$ of them converged, in which we observe that the optimal queries are visually similar to the members of the differing group, but the chance of recovering them varies based on how much each individual data point is exposed through a single query (by measuring LOOD when querying the point). We observe that among the points in set $S$, the differing data point with the highest single-query LOOD was recovered $61\%$ of the times, whereas the one with the lowest single-query LOOD was recovered only $8\%$ of the times. This can be leveraged for data reconstruction attacks where the adversary has access to part of the train dataset of the  model. 

\section{Optimal query under mean distance and deferred proofs for \cref{ssec:influence_memorization}}\label{app:mean_distance}
We discuss here the computation of the optimal query under mean distance. This is equivalent to the question of where the function is most influenced by a change in the dataset. 
\subsection{Differing point is in general not optimal for Mean Distance LOOD} 

\label{app:mean_distance_toy_and_theory}

\paragraphbe{Numerically identifying cases where the differing point is not optimal} We define an algorithm that finds non-optimal differing points $S$ such that when optimizing the query $Q$, the resulting query is different from $S$. When the differing point $S$ is not the optimal query, the gradient of $M(Q)$ evaluated at $S$ is non-zero. Hence, intuitively, finding points $S$ such that $\|\nabla_Q M(Q)\mid_{Q = S}\|$ is maximal should yield points $S$ for which the optimal query is not the differing point (as there is room for optimising the query due to its positive gradient). We describe this algorithm in \cref{alg:find_nonopt_S}. 

\begin{algorithm}Given $D$ and $\sigma$, we solve the optimization problem $S^* = \textup{argmax}_S \|\nabla_Q M(Q)\mid_{Q = S}\|$ to identify a non-optimal differing point. (We use gradient descent to solve this problem.)
    \caption{Identifying non-optimal differing points $S$}
    \label{alg:find_nonopt_S}
\end{algorithm}

\paragraphbe{A toy example of where the differing point is not optimal} To assess the efficacy of \cref{alg:find_nonopt_S}, we consider a simple example in the one-dimensional case. The setup is as follows: {\bfseries(1)} Data $D = \{(x_i, y_i), i=1,\dots, 10\}$ is generated using the following distribution: $x_i \sim \mathcal{N}(0,1)$, $y_i = \textup{sin}(x_i)$. {\bfseries(2)} Noise parameter is given by $\sigma^2 = 0.01.$ {\bfseries(3)} Kernel is given by $K(x,x') = \exp(-(x-x')^2/2)$ (RBF kernel). {\bfseries(4)} In \cref{alg:find_nonopt_S}, only $S_x$ is optimized, the label $S_y$ is fixed to $S_y = \textup{sin}(S_x)$.

\begin{figure}[h]
    \centering
     \begin{subfigure}[b]{0.4\textwidth}
         \centering
         \includegraphics[width=\textwidth]{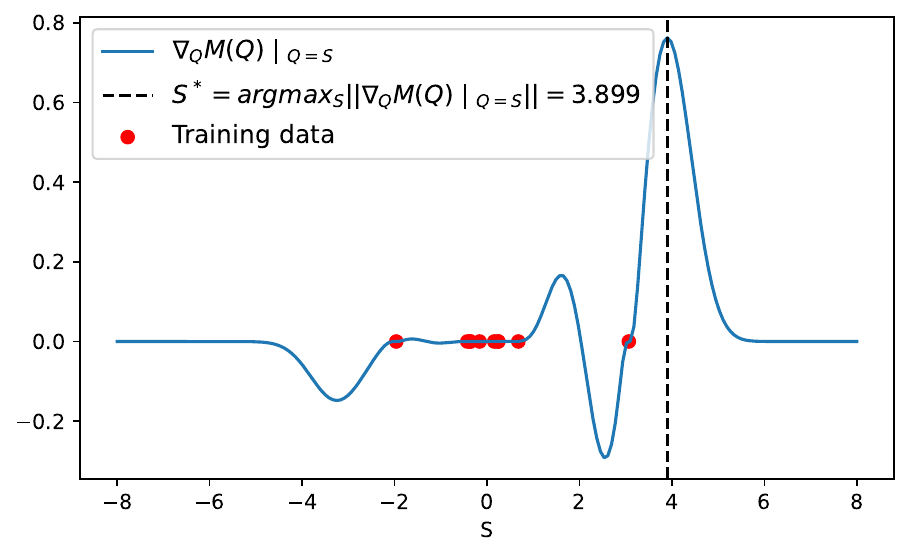}
         \caption{The gradient of $M(Q)$ evaluated at $S$. }
         \label{fig:grads_alg}
     \end{subfigure}
     \begin{subfigure}[b]{0.405\textwidth}
         \centering
         \includegraphics[width=\textwidth]{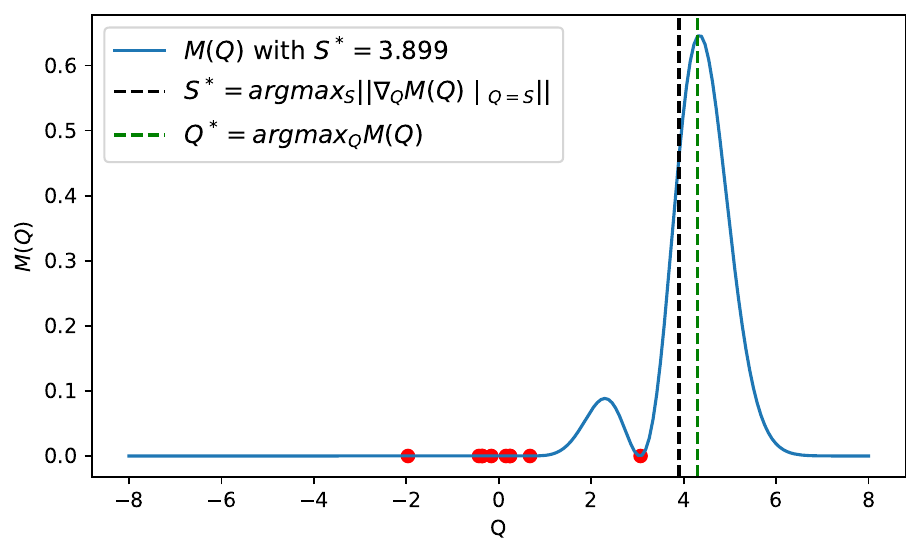}
         \caption{The mean distance $M(Q)$ under $S^*$. }
         \label{fig:mse_opt_S}
     \end{subfigure}
    \caption{We plot the stationary condition indicator function $\nabla_Q M(Q)\mid_{Q = S}$ of LOOD when query $Q$ equals the differing data $S$, over different values of differing data $S$ with $S_x\in [-5, 5]$. This allows us to run 
    \cref{alg:find_nonopt_S} to identify a differing point $S^* = \max_S \|\nabla_Q M(Q)\mid_{Q = S}\|$ that has the highest objective (maximal non-stationarity), as shown in \cref{alg:find_nonopt_S} (where $S^*=3.899$). In \cref{fig:mse_opt_S}, we validate that under the optimized differing data $S^*$, the query $Q^*$ that maximizes LOOD (over leave-one-out datasets that differ in $S^*$) is indeed not the differing point $S^*$.}
    \label{fig:stationary_condition_indicator_mean}
\end{figure}
\vspace{5pt}
We show in \cref{fig:grads_alg} the gradient of $M(Q)$ evaluated at $S$ for varying $S$, and highlight the point $S^*$ (the result of \cref{alg:find_nonopt_S}) that has the largest non-stationarity for mean distance LOOD objective. Additionally, if we look at the query with maximal LOOD under differing data $S = S^*$, the optimal query $Q^*$ that we found is indeed different from the differing point $S^*$, as shown in \cref{fig:mse_opt_S}. 

Let us now look at what happens when we choose other differing points $S \neq S^*$. 
\begin{figure}[h]
    \centering
    \includegraphics[width=\linewidth]{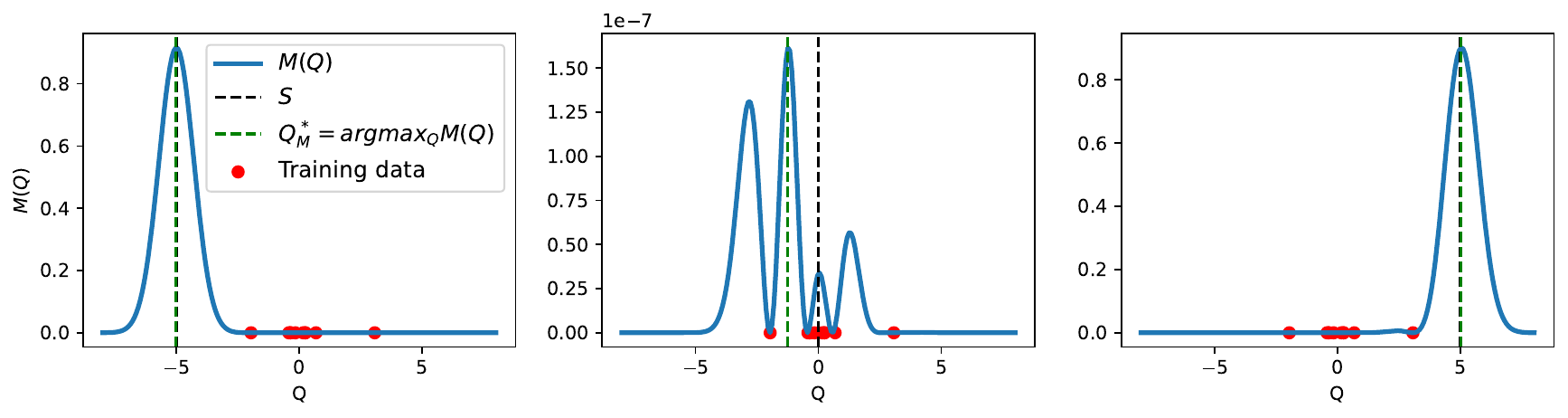}
    \caption{The plot of $M(Q)$ for varying $Q$ and $S$. We also show the optimal query $Q^*$ and the differing point $S$.}
    \label{fig:mse_varying_S}
\end{figure}
\cref{fig:mse_varying_S} shows the optimal query for three different choices of $S$. When $S$ is \emph{far} from the training data $D$ 
($S_x \in \{-5, 5\}$), the optimal query approximately coincides with the differing point $S$. When $S$ is close to the training data ($S_x = 0$), the behaviour of the optimal query is non-trivial, but it should be noted that the value of the MSE is very small in this case.

\paragraphbe{Explaining when the differing point is suboptimal for Mean Distance LOOD} We now give a theoretical explanation regarding why the differing point is suboptimal for mean distance LOOD when it is close (but not so close) to the training dataset (as observed in \cref{fig:mse_varying_S} middle plot). We first give a proposition
that shows that the gradient of $M(Q)$ is large when the differing point $S$ is close (but not so close) to the training dataset $D$.

\begin{repproposition}{lemma:nonstationary_differing}
Consider a dataset $D$, a differing point $S$, and a kernel function $K$ satisfying the same conditions of \cref{thm:master_theorem}. Then, we have that 
\begin{align}
    \nabla_Q M(Q) \mid_{Q = S} = - \alpha^{-2} (1 - K_{SD} M^{-1}_D K_{DS}) (y_S - K_{SD} M^{-1}_D y_D)^2  \mathring{K}_{SD} M^{-1}_D K_{DS}, 
    \label{eqn:mean_lood_partial}
\end{align}
	where $M(Q)$ denotes the mean distance LOOD as defined in \cref{def:mean_lood}, $K_{SD} \in \mathbb{R}^{1 \times |D|}$ is the kernel matrix between data point $S$ and dataset $D$, $K_{DS} = K_{SD}^\top$,  $\mathring{K}_{SD} = \frac{\partial}{\partial Q} K_{QD} \mid_{Q=S}$, $M_D = K_{DD} + \sigma^2 I$, and $\alpha = 1 - K_{SD} M^{-1}_D K_{DS} + \sigma^2$. \\
\end{repproposition}

\begin{proof}
    The left-hand-side of \cref{eqn:mean_lood_partial} is $\nabla_Q M(Q) = \frac{\partial}{\partial Q}\lVert \mu(Q) - \mu’(Q)\rVert_2^2$ by definition of the mean distance LOOD $M(Q)$ in \cref{def:mean_lood}. This term occurs in the term $F_2(Q)$ in \cref{eqn:kl_partial}, in the proof for \cref{thm:master_theorem}. Specifically, we have $\nabla_Q M(Q) = F_2(S) \cdot \Sigma’(S)$. The detailed expression for $F_2(S)$ is computed in line \cref{eqn:F_2_expression} during the proof for \cref{thm:master_theorem}. Therefore, by multiplying $F_2(S)$ in \cref{eqn:F_2_expression} with $\Sigma’(S)$, we obtain the expression for  $\nabla_Q M(Q)$ in the statement \cref{eqn:mean_lood_partial}. 
\end{proof}

As a result, it should be generally expected that $\nabla_Q M(Q) \mid_{Q = S} \neq 0$. We can further see that when $S$ is far from the data in the sense that $K_{SD} \approx 0$, the gradient is also close to $0$. In the opposite case where $S$ is close to $D$, e.g. satisfying $y_S \approx K_{SD}M^{-1}_D y_D $, then the gradient is also close to zero. Between the two cases, there exists a region where the gradient norm is maximal.

\subsection{Differing point is optimal in Mean Distance LOOD when it is far from the dataset}

\paragraphbe{Explaining when the differing point is optimal for Mean Distance LOOD} We now prove that the differing point is (nearly) optimal when it is far from remaining dataset. We assume the inputs belong to some compact set $C \in \R^d$.
\begin{definition}[Isotropic kernels]
We say that a kernel function $K: C \times C \to \R$ is isotropic is there exists a function $g: \R \to \R$ such that $K(x,x') = g(\|x - x'\|)$ for all $x,x' \in C$.
\end{definition}
Examples includes RBF kernel, Laplace kernel, etc. For this type of kernel, we have the following result.
\begin{lemma}\label{lemma:gradient_upperbound}
Let $K$ be an isotropic kernel, and assume that its corresponding function $g$ is differentiable on $C$ (or just the interior of $C$). Then, for all $Q \in C$, we have that 
$$
\| \nabla_Q M(Q)\| \leq \zeta \, (d'(Q, S) + d'(Q,D)) \, \max(d(Q, S), d(Q,D)),
$$
where 
\begin{itemize}
    \item $D = \{D_1, \dots, D_n\}$, $D' = D \cup  \{S\}$
    \item $d'(Q,D) = \max_{i \in [n]} |g'(\|Q - D_i\|)|$ and $d'(Q, S) = g'(\|Q - S\|)$
    \item $d(Q,D) = \max_{i \in [n]} |g(\|Q - D_i\|)|$ and $d(Q, S) = g(\|Q - S\|)$
    \item $\zeta > 0$ is a constant that depends only on $D$.
\end{itemize}
\end{lemma}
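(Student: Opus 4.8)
The plan is to reduce everything to two scalar estimates. Writing $M(Q) = \frac{1}{2}\delta(Q)^2$ where $\delta(Q) := \mu_{D, \sigma^2}(Q) - \mu_{D', \sigma^2}(Q)$ is the (scalar) gap between the two posterior means, the chain rule gives $\nabla_Q M(Q) = \delta(Q)\,\nabla_Q\delta(Q)$, hence $\|\nabla_Q M(Q)\| = |\delta(Q)|\,\|\nabla_Q\delta(Q)\|$. The whole lemma then follows once I bound $|\delta(Q)|$ by a constant times $\max(d(Q,S),d(Q,D))$ and $\|\nabla_Q\delta(Q)\|$ by a constant times $d'(Q,S)+d'(Q,D)$, and multiply; the product structure of the claimed bound mirrors exactly this factorization.

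For the value bound, I would introduce the $Q$-independent coefficient vectors $a = M_D^{-1}y_D \in \R^n$ and $a' = M_{D'}^{-1}y_{D'} \in \R^{n+1}$, write $D'_j = D_j$ for $j \le n$ and $D'_{n+1} = S$, and use isotropy $K(Q,x) = g(\|Q-x\|)$ to group the shared training terms:
\[
\delta(Q) = \sum_{i=1}^n g(\|Q-D_i\|)(a_i - a'_i) - g(\|Q-S\|)\,a'_{n+1}.
\]
A triangle inequality then yields $|\delta(Q)| \le \big(\sum_i |a_i - a'_i|\big) d(Q,D) + |a'_{n+1}|\,d(Q,S) \le C_1 \max(d(Q,S),d(Q,D))$ with $C_1 := \sum_i|a_i-a'_i| + |a'_{n+1}|$.

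For the gradient bound, the key computation is $\nabla_Q g(\|Q-x\|) = g'(\|Q-x\|)\frac{Q-x}{\|Q-x\|}$, whose norm equals $|g'(\|Q-x\|)|$ since $\frac{Q-x}{\|Q-x\|}$ is a unit vector. Differentiating $\delta$ term by term and applying the triangle inequality gives $\|\nabla_Q\delta(Q)\| \le \big(\sum_i|a_i-a'_i|\big)d'(Q,D) + |a'_{n+1}|\,d'(Q,S) \le C_2\,(d'(Q,S)+d'(Q,D))$ with $C_2 := \max(\sum_i|a_i-a'_i|, |a'_{n+1}|)$. Setting $\zeta := C_1 C_2$ and multiplying the two estimates closes the argument.

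The genuine subtleties are twofold. First, $\|Q-x\|$ is not differentiable at $Q = x$, so the formula for $\nabla_Q g(\|Q-x\|)$ needs justification at coincidence points; this is precisely where the isotropy regularity (boundedness of $z^{-1}g'(z)$ near $0$, as in the isotropic-kernel proposition) makes the composite gradient extend continuously, so the norm bound $\|\nabla_Q g(\|Q-x\|)\| \le |g'(\|Q-x\|)|$ holds throughout the interior of $C$. Second, to assert that $\zeta$ depends only on $D$ uniformly over the differing point $S$, I must control $a$ and $a'$ independently of $S$: the matrix $M_D^{-1}$ is already $S$-free, and since $M_{D'} = K_{D'D'}+\sigma^2 I \succeq \sigma^2 I$ we get $\|a'\| \le \sigma^{-2}\|y_{D'}\|$, which is uniformly bounded as $S$ ranges over the compact set $C$ with bounded labels. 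Securing this uniform-in-$S$ control is the main thing to get right; everything else is routine triangle-inequality bookkeeping.
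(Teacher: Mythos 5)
Your proposal is correct and follows essentially the same route as the paper: factor $\nabla_Q M(Q)$ as the product of the mean gap and its gradient, bound the first factor by the coefficient vectors times $\max(d(Q,S),d(Q,D))$ and the second by the same coefficients times $d'(Q,S)+d'(Q,D)$, and control the $S$-dependent coefficients via $\|(K_{D'D'}+\sigma^2 I)^{-1}\|_2\le\sigma^{-2}$. Your explicit treatment of the non-differentiability of $\|Q-x\|$ at coincidence points and of the uniform-in-$S$ choice of $\zeta$ is slightly more careful than the paper, which dismisses the latter as trivial.
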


\begin{proof}
    Let $Q \in C$. For the sake of simplification, we use the following notation: $u = ( K_{DD} + \sigma^2 I)^{-1} y_D$, and $u' = ( K_{D'D'} + \sigma^2 I)^{-1} y_{D'}$.\\

    Simple calculations yield
    $$
    \nabla_Q M(Q) = (K_{QD'} u' - K_{QD} u) \left(\sum_{i=1}^n (u'_i - u_i) \nabla_Q K_{Q D_i} + u'_{n+1} \nabla_Q K_{Q,S} \right).
    $$
    The first term is upperbounded by $\|u'_{[n]} - u\| d(Q,D) + |u'_{n+1}| d(Q, S)$, where $u'_{[n]}$ is the vector containing the first $n$ coordinates of $u'$, and $u_i$ refers to the $i^{th}$ coordinate of the vector $u$.\\
    
    Now observe that $\|u'\|$ is bounded for all $S$, provided the data labels $y$ are bounded. For this, it suffices to see that $\|(K_{D'D'} + \sigma^2 I)^{-1}\|_2 \leq \sigma^{-2}$ because of the positive semi-definiteness of $K_{D'D'}$.

    Moreover, we have $\nabla_Q K_{Q D_i} = g'(\| Q - D_i\|) \frac{Q - D_i}{\|Q - D_i\|}$, which implies that $\|\nabla_Q K_{Q D_i}\| = |g'(\| Q - D_i\|)|$. Therefore, using the triangular inequality, we obtain 
    \begin{align*}
    \left\|\sum_{i=1}^n (u'_i - u_i) \nabla_Q K_{Q D_i} + u'_{n+1} \nabla_Q K_{Q,S} \right\| \leq \|u'_{[n]} - u\|_1 d'(Q,D) + |u'_{n+1}| |g'(\|Q - S\|)|. 
    \end{align*}
    The existence of $\zeta$ is trivial. 
\end{proof}

From \cref{lemma:gradient_upperbound}, we have the following corollary.
\begin{corollary}
Assume that $\sup_{z\in \mathbb{R}}g(z)<\infty$, $\lim_{|z| \to \infty} g'(z) = 0$ and $g'(0) = 0$. Then, provided that $C$ is large enough, for any $D = \{D_1, \dots, D_n\}$ and $\epsilon >0$, there exists a point $S$, such that if we define $D'=D \cup  \{S\}$, then $\|\nabla_Q M(S)\| \leq \epsilon$.\\
That is, if we choose $S$ far enough from the data $D$, then the point $S$ is close to a stationary point.
\label{cor:stationary_mean_LOOD}
\end{corollary}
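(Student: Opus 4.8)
The plan is to invoke \cref{lemma:gradient_upperbound} directly, specialize it to the diagonal query $Q=S$, and then push $S$ away from the finite dataset $D$ so that every remaining term decays. At the query $Q=S$ the two ``self'' quantities simplify: $d'(S,S)=g'(0)=0$ by hypothesis, while $d(S,S)=g(0)$. Hence the factor $d'(Q,S)+d'(Q,D)$ collapses to $\max_{i\in[n]}|g'(\norm{S-D_i})|$, and \cref{lemma:gradient_upperbound} gives
\begin{equation*}
\norm{\nabla_Q M(Q)\mid_{Q=S}} \le \zeta\,\Big(\max_{i\in[n]}|g'(\norm{S-D_i})|\Big)\,\max\Big(g(0),\ \max_{i\in[n]}|g(\norm{S-D_i})|\Big).
\end{equation*}

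Next I would bound the second factor uniformly in $S$. Since a kernel is bounded ($\sup_z|g(z)|<\infty$), set $B:=\max(|g(0)|,\sup_z|g(z)|)<\infty$, so the second factor is at most $B$ for every $S$. It is crucial here that the constant $\zeta$ of \cref{lemma:gradient_upperbound} depends only on $D$: its proof bounds $\norm{u'}\le\sigma^{-2}\norm{y_{D'}}$ using positive semi-definiteness of $K_{D'D'}$, which is valid for all $S$ once the labels are bounded. Thus the product $\zeta B$ is independent of $S$, and the estimate reduces to $\norm{\nabla_Q M(S)}\le \zeta B\,\max_i|g'(\norm{S-D_i})|$.

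The final step exploits $\lim_{|z|\to\infty}g'(z)=0$. Given $\epsilon>0$, choose $R>0$ with $|g'(z)|\le \epsilon/(\zeta B)$ whenever $|z|\ge R$. Because $D$ is a finite set, it lies in some bounded ball, and provided $C$ is large enough I can select $S\in C$ whose distance to every $D_i$ exceeds $R$ (any point of $C$ lying sufficiently far outside that ball works). Then $\max_i|g'(\norm{S-D_i})|\le \epsilon/(\zeta B)$, and the displayed inequality yields $\norm{\nabla_Q M(S)}\le\epsilon$, which is the claim; informally, such an $S$ is close to a stationary point of the mean distance LOOD.

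I expect the only genuine obstacle to be the uniformity of the constants: one must verify that both $\zeta$ and the second-factor bound $B$ can be taken independently of $S$, since $S$ is itself sent to infinity. This is exactly where the boundedness of the labels (yielding a uniform bound on $\norm{u'}$) and the boundedness of $g$ enter, after which the two decay hypotheses $g'(0)=0$ and $g'(\infty)=0$ annihilate the self term and the cross terms respectively. The hypothesis that $C$ be ``large enough'' is then precisely what guarantees the existence of a single point $S$ simultaneously $R$-far from all finitely many $D_i$.
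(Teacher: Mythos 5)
Your proposal is correct and is essentially the paper's own argument: the paper states \cref{cor:stationary_mean_LOOD} as an immediate consequence of \cref{lemma:gradient_upperbound}, obtained exactly as you do by setting $Q=S$, using $g'(0)=0$ to kill the self term, bounding the second factor by $\sup_z g(z)$, and sending $S$ far from $D$ so that $\max_i|g'(\|S-D_i\|)|$ vanishes. Your explicit check that $\zeta$ and the bound on $\|u'\|$ are uniform in $S$ is a worthwhile detail the paper leaves implicit.
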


Although this shows that gradient is small at $S$ (for $S$ far enough from the data $D$), it does not say whether this is close to a local minimum or maximum. T determine the nature of the stationary points in this region, an analysis of the second-order geometry is needed. More precisely, we would like to understand the behaviour of the Hessian matrix in a neighbourhood of $S$. In the next result, we show that the Hessian matrix is strictly negative around $S$ (for $S$ sufficiently far from the data $D$), which confirms that any stationary point in the neighbourhood of $S$ should be a local maximum.

\begin{lemma}[Second-order analysis]\label{lemma:2nd_order}
Assume that $g(0)>0$, $\lim_{ z \to 0} z^{-1} g'(z) = l_{-}$, where $l_{-}$ is a negative constant that depends on the kernel function, and that $\lim_{ z \to \infty} z^{-1} g'(z) = 0$. Then, provided that $C$ is large enough, for any $D = \{D_1, \dots, D_n\}$ and $\epsilon >0$, there exists a point $S$, such that if we define $D'=D \cup  \{S\}$, then there exists $\eta>0$ satisfying: for all $Q \in \mathcal{B}(S, \eta), \nabla^2_Q M(Q) \prec 0$.
\end{lemma}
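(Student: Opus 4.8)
The plan is to exploit the product structure of the mean distance LOOD. Writing $\delta(Q) = \mu(Q) - \mu'(Q)$ so that $M(Q) = \frac12 \delta(Q)^2$ (in the single-output case), the Hessian factorizes as
\begin{align*}
\nabla_Q^2 M(Q) = \nabla_Q \delta(Q)\, \nabla_Q \delta(Q)^\top + \delta(Q)\, \nabla_Q^2 \delta(Q),
\end{align*}
and I would show that once $S$ is placed far enough from $D$ the first (rank-one, positive semidefinite) term is negligible while the second is strictly negative definite, then pass to a neighborhood by continuity. To control the rank-one term, note that by \cref{lemma:nonstationary_differing} (equivalently \cref{cor:stationary_mean_LOOD}) we have $\nabla_Q M(S) \approx 0$ when $K_{SD} \approx 0$; since $\nabla_Q M = \delta\,\nabla_Q\delta$ and the mean-difference identity \cref{eqn:mean_diff_simplify} gives $\delta(S) \approx -(1+\sigma^2)^{-1} y_S$, which is bounded away from zero for a nonzero label, this forces $\nabla_Q\delta(S) \to 0$ so that $\nabla_Q\delta(S)\,\nabla_Q\delta(S)^\top$ vanishes in the far-field limit.

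Next I would compute the dominant term $\delta(S)\,\nabla_Q^2\delta(S)$. Decomposing $\delta(Q) = \sum_{i=1}^n K_{QD_i}(u_i - u'_i) - K_{QS}\,u'_{n+1}$ with $u = M_D^{-1}y_D$ and $u' = M_{D'}^{-1}y_{D'}$, the Hessian of $\delta$ splits into a weighted sum of $\nabla_Q^2 K_{QD_i}$ plus $-u'_{n+1}\,\nabla_Q^2 K_{QS}$. At $Q = S$ with $S$ far from $D$ the $D_i$ contributions vanish, since for an isotropic kernel the Hessian is controlled by $g'(r)/r$ and $g''(r)$ at the large distances $r = \|S - D_i\|$, both tending to $0$ under $\lim_{z\to\infty} z^{-1}g'(z) = 0$. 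The self-term is the crux: writing $K_{QS} = \tilde g(\|Q-S\|^2)$ with $\tilde g(\rho) = g(\sqrt\rho)$, one finds $\nabla_Q^2 K_{QS}\mid_{Q=S} = 2\tilde g'(0)\,I = l_-\,I$ via $\tilde g'(0) = \lim_{r\to 0} g'(r)/(2r) = l_-/2$. With $u'_{n+1} \approx (1+\sigma^2)^{-1}y_S$ this yields $\nabla_Q^2\delta(S) \approx -u'_{n+1}\,l_-\,I$.

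Combining the two estimates, the dominant term reads $\delta(S)\,(-u'_{n+1}l_-)\,I \approx (1+\sigma^2)^{-2}\,y_S^2\,l_-\,I$, which is strictly negative definite because $l_- < 0$ and $y_S \neq 0$. Hence $\nabla_Q^2 M(S) \prec 0$ for $S$ sufficiently far from $D$; and since $Q \mapsto \nabla_Q^2 M(Q)$ is continuous with eigenvalues at $S$ bounded away from zero, there exists $\eta > 0$ with $\nabla_Q^2 M(Q) \prec 0$ throughout $\mathcal{B}(S,\eta)$, which is the claim.

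The main obstacle I anticipate is making the far-field approximations quantitative: one must bound the $D_i$-contributions to $\delta$, $\nabla_Q\delta$ and $\nabla_Q^2\delta$ uniformly as functions of $\mathrm{dist}(S,D)$ and show they are dominated by the strictly negative $l_-I$ term of fixed magnitude. The most delicate computation is the identity $\nabla_Q^2 K_{QS}\mid_{Q=S} = l_-\,I$, which amounts to deriving $g''(0) = l_-$ (equivalently, smoothness of $\rho \mapsto g(\sqrt\rho)$ at the origin, so that the Hessian at $S$ is direction-independent and isotropic) from the one-sided hypothesis $\lim_{z\to 0} z^{-1}g'(z) = l_-$.
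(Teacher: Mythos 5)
Your proposal is correct and follows essentially the same route as the paper's proof: the same factorization $\nabla_Q^2 M = \nabla_Q\delta\,\nabla_Q\delta^\top + \delta\,\nabla_Q^2\delta$, the same far-field argument killing the rank-one term (the paper invokes \cref{lemma:gradient_upperbound} directly rather than going through $\nabla_Q M = \delta\,\nabla_Q\delta$, but the content is identical) and the $D_i$-contributions to $\nabla_Q^2\delta$, the same identification $\nabla_Q^2 K_{Q,S}\mid_{Q=S} = l_- I$ giving a dominant term proportional to $y_S^2\, l_-\, I \prec 0$, and the same continuity argument to pass to a neighborhood $\mathcal{B}(S,\eta)$. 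The delicate point you flag at the end (that $l_- I$ requires $\alpha(Q,S) = g''(r) - g'(r)/r \to 0$, i.e. $g''(0^+) = l_-$) is indeed glossed over in the paper's proof, so your caution there is well placed.
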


\begin{replemma}{lemma:2nd_order}
    Assume that $g(0)>0$, $\lim_{ z \to 0} z^{-1} g'(z) = l_{-}$, where $l_{-}<0$ is a negative constant that depends on the kernel function, and that $\lim_{ z \to \infty} z^{-1} g'(z) = 0$. Then, provided that $C$ is large enough, for any $D = \{D_1, \dots, D_n\}$ and $\epsilon >0$, there exists a point $S$, such that if we define $D'=D \cup  \{S\}$, then there exists $\eta>0$ satisfying: for all $Q \in \mathcal{B}(S, \eta), \nabla^2_Q M(Q) \prec 0$ is a negatively definite matrix.
\end{replemma}
\begin{proof}
    Let $Q \in C$. Let $v(Q) = K_{Q D'} u' - K_{Q T} u$. The Hessian is given by 
    $$
    \nabla_Q^2 M(Q) = v(Q) \nabla^2_Q v(Q) +  \nabla_Q v(Q) \nabla_Q v(Q)^\top
    $$
    where $\nabla^2_Q v(Q) = \sum_{i=1}^n (u'_i - u_i) \nabla_Q^2 K_{Q D_i} + u'_{n+1} \nabla_Q^2 K_{Q,S}$ and $\nabla_Q v(Q) = \sum_{i=1}^n (u'_i - u_i) \nabla_Q K_{Q D_i} + u'_{n+1} \nabla_Q K_{Q,S}$. By Lemma~\ref{lemma:gradient_upperbound}, we have that for $S$ far enough from the dataset $D$, we have that $\lVert \nabla_Q v(S)\rVert \leq \epsilon$. Therefore, the Hessian at $S$ simplilfies to the following.
    $$
    \nabla_Q^2|_{Q=S} M(Q) \leq v(Q) \left(\sum_{i=1}^n (u'_i - u_i) \nabla_Q^2 K_{Q D_i} + u'_{n+1} \nabla_Q^2 K_{Q,S} \right) + \epsilon^2.
    $$
    
    For $i \in [n]$, simple calculations yield
    $$
    \nabla_Q^2 K_{Q D_i} = \alpha(Q, D_i) \frac{(Q - D_i) (Q - D_i)^\top}{\|Q - D_i\|^{2}} + \frac{g'(\|Q - D_i\|)}{\|Q - D_i\|} I,
    $$
    where $\alpha(Q, D_i) =  g''(\|Q - D_i\|) - \frac{g'(\|Q - D_i\|)}{\|Q - D_i\|}$.\\
    
    Notice that $u'_{n+1} \approx y_S - K_{SD}u$. Without loss of generality, assume that $y_S > 0$. For $S$ far enough from the dataset $D$, we then have $u'_{n+1} > 0$ and $v(S) \approx y_S >0$. Let us now see what happens to the Hessian. For such $S$, and for any $\epsilon > 0$, there exists a neighbourhood $\mathcal{B}(S, \eta)$, $\eta > 0$, such that for all $Q \in \mathcal{B}(S, \eta)$, we have $\frac{|g'(\|Q - D_i\|)|}{\|Q - D_i\|} < \epsilon$, and $|\alpha(Q, D_i)|<\epsilon$. This ensures that $\| \nabla_{Q}^2 K_{Q D_i}\|_F \leq 2 \epsilon $ where $\|.\|_F$ denotes the Frobenius norm of a matrix. On the other hand, we have $\nabla_Q^2 K_{Q,S} = \lim_{z\rightarrow 0}\frac{g'(z)}{z} I = l_{-}I$. It remains to deal with $v(Q)$. Observe that for $S$ far enough from the data $D$, and $Q$ close enough to $S$, $v(Q)$ is close to $v(S) \approx y_S > 0$. Thus, taking $\epsilon$ small enough, there exists $\eta$ satisfying the requirements, which concludes the proof. 
\end{proof}
Lemma~\ref{lemma:2nd_order} shows that whenever the point $S$ is isolated from the dataset $D$, the Hessian is negative in a neighbourhood of $S$, suggesting that any local stationary point can only be a local maximum. This is true for isotropic kernels such as the Radial Basis Function kernel (RBF) and the Laplace kernel. As future work, \cref{lemma:gradient_upperbound} and \cref{lemma:2nd_order} have important computational implications for the search for the `worst` poisonous point $S$. Finding the worst poisonous point $S$ is computationally expensive for two reasons: we cannot use gradient descent to optimize over $S$ since the target function is implicit in $S$, and for each point $S$, a matrix inversion is required to find $\max_Q M(Q)$. Our analysis suggests that if $S$ is far enough from the data $D$, we can use $M(S)$ as a lower-bound estimate of $\max_Q M(Q)$. 

\subsection{The optimal query for Mean Distance LOOD on image datasets}
\label{app:mean_distance_experiment}

As we concluded in the analysis in Section \ref{sec:opt_query}, only when the differing datapoint is far away (in some notion of distance) from the rest of the dataset is the optimal query (in mean distance LOOD) the differing datapoint. In \cref{fig:mse_opt_S}, we successfully identified settings on a toy dataset where the differing point is not optimal for mean distance LOOD. We now investigate whether this also happens on an image dataset. 

\begin{figure}[h]
    \centering
    \begin{subfigure}[b]{0.35\textwidth}
        \centering
        \includegraphics[width=\textwidth]{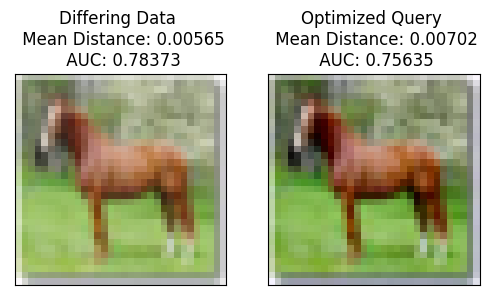}
        \caption{Differing data and optimized query in run one}
        \label{fig:opt_query_cifar10_mean}
    \end{subfigure}
    \hfill
    \begin{subfigure}[b]{0.35\textwidth}
        \centering
        \includegraphics[width=\textwidth]{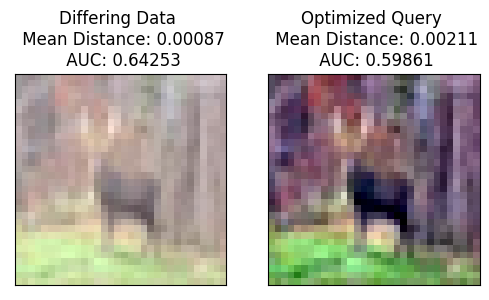}
        \caption{Differing data and optimized query in run two}
        \label{fig:opt_query_cifar10_mean_Dwo}
    \end{subfigure}
    \hfill
    \begin{subfigure}[b]{0.25\textwidth}
        \centering
        \includegraphics[width=\textwidth]{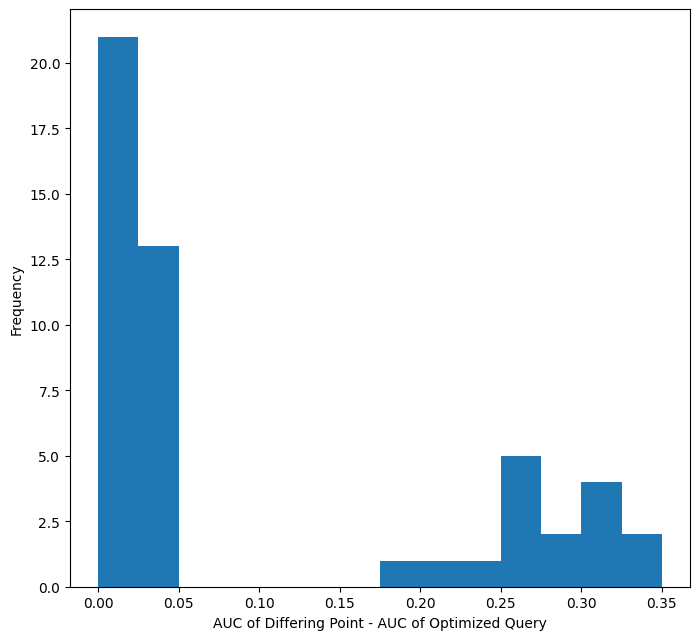}
        \caption{AUC gap on $Q, S$ in 50 trials}
        \label{fig:histogram_distance_opt_query_mean}
    \end{subfigure}
    \caption{Results for running single query optimization for Mean distance LOOD on class-balanced subset of CIFAR-10 dataset of size 1000 and RBF kernel. The optimized query has higher mean distance LOOD than the differing data record, despite having lower MIA AUC score. Across the 50 trials that we evaluated, more than 50\% of trials result in an optimized query that has lower AUC score (by more than 0.025) than the differing data.}
    \label{fig:cifar10_opt_query_mean_distance}
\end{figure}

In \cref{fig:cifar10_opt_query_mean_distance}, we observe that the optimized record is indeed similar (but not identical) to the differing record for a majority fraction of cases. Therefore, also in image data the removal of a certain datapoint may influence the learned function most in a location that is not the removed datapoint itself. 
However, the optimized query for Mean distance LOOD still has much smaller MIA AUC score than the differing data. This again indicates that MSE optimization finds a suboptimal query for information leakage, despite achieving higher influence (mean distance).

\section{Deferred proofs for \cref{sec:extended_discussion_effect_model}}

Our starting point is the following observation in prior work~\citep{samuel2017deep, hayou2019impact, hayou21stable}, which states that the NNGP kernel is sensitive to the choice of the activation function.
\begin{proposition}
Assume that the inputs are normalized, $\|x\|=1$ and let $K_{NNGP}^L$ denote the NNGP kernel function of a an MLP of depth $L \geq 1$. Then, the for any two inputs, we have the following\\
$\bullet$ [Non-smooth activations] With ReLU: there exists a constant $\alpha>0$ such that $|k_{NNGP}^L(x,x') - \alpha|= \Theta(L^{-2})$.\\
$\bullet$ [Smooth activations] With GeLU, ELU, Tanh: there exists a constant $\alpha>0$ such that $|k_{NNGP}^L(x,x') - \alpha|= \Theta(L^{-1})$.\\
As a result, for the same depth $L$, one should expect that NNGP kernel matrix with ReLU activation is closer to a low rank constant matrix than with smooth activations such as GeLU, ELU, Tanh.\footnote{The different activations are given by $\textup{GeLU}(x) = x \Phi(x)$ 
 where $\Phi$ is the c.d.f of a standard Gaussian variable, $\textup{ELU}(x) = 1_{\{x \geq 1\}} x + 1_{\{x < 1\}} (e^x - 1)$, and $\textup{Tanh}(x) = (e^{x} - e^{-x}) (e^{x}+ e^{-x})^{-1}$.}
\label{prop:activation}
\end{proposition}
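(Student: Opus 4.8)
The plan is to reduce the depth-$L$ kernel to a scalar recursion in the input correlation and read off the convergence rate from the local geometry of this recursion near its fixed point. Fix two inputs with $\|x\|=\|x'\|=1$ and assume the critical (edge-of-chaos) normalization, so that the diagonal is preserved, $q := k^L_{NNGP}(x,x) \equiv k^L_{NNGP}(x',x')$ for all $L$. Writing $\phi$ for the activation, the off-diagonal entry is then governed by the one-dimensional map $c_{L+1} = f(c_L)$, where $c_L = k^L_{NNGP}(x,x')/q$ is the normalized correlation and $f(c) = q^{-1}\,\mathbb{E}[\phi(\sqrt{q}\,U)\phi(\sqrt{q}\,V)]$ for a standard bivariate Gaussian $(U,V)$ of correlation $c$. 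Since all inputs share the same diagonal and $f(1)=1$, the all-constant limit of the kernel matrix corresponds exactly to $c_L \to 1$; hence $|k^L_{NNGP}(x,x') - \alpha| = q\,(1-c_L)\,(1+o(1))$ with $\alpha = q$, and the problem reduces to estimating the decay of $\epsilon_L := 1 - c_L$.

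The key step is the local expansion of $f$ at $c=1$. Using the Hermite (dual-activation) representation $f(c) = \sum_{k \ge 0} b_k\,c^k$, where $b_k \ge 0$ is proportional to the squared $k$-th Hermite coefficient of $\phi$ and $\sum_k b_k = 1$, criticality is precisely $f'(1) = \sum_k k\,b_k = 1$. I therefore expect an expansion $f(1-\epsilon) = 1 - \epsilon + \beta\,\epsilon^{1+s}(1+o(1))$ with $\beta > 0$ and an exponent $s>0$ set by the decay of the $b_k$, and I would establish it in the two regimes. For the smooth activations (GeLU, ELU, Tanh) the Hermite coefficients decay fast enough that $f''(1) = \sum_k k(k-1)b_k < \infty$, and $f''(1)>0$ because $\phi$ is nonlinear (some $b_k>0$ with $k \ge 2$); thus $f$ is $C^2$ at $c=1$ and Taylor's theorem gives $s=1$, $\beta = \tfrac12 f''(1)$. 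For ReLU I would instead use the explicit arc-cosine kernel $f(c) = \pi^{-1}\bigl(\sqrt{1-c^2} + c\,(\pi - \arccos c)\bigr)$, for which $f'(1) = \pi^{-1}(\pi - \arccos 1) = 1$ holds automatically; substituting $c = 1-\epsilon$, the $\sqrt{\epsilon}$ contributions of $\sqrt{1-c^2}\sim\sqrt{2\epsilon}$ and of $\pi-\arccos(1-\epsilon)\sim\sqrt{2\epsilon}$ cancel, leaving $f(1-\epsilon) = 1 - \epsilon + \tfrac{2\sqrt2}{3\pi}\,\epsilon^{3/2} + o(\epsilon^{3/2})$, i.e. $s = \tfrac12$.

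It then remains to solve the scalar recursion $\epsilon_{L+1} = \epsilon_L - \beta\,\epsilon_L^{1+s}(1+o(1))$. The substitution $w_L = \epsilon_L^{-s}$ linearizes it to $w_{L+1} = w_L + \beta s + o(1)$, so by Stolz--Cesàro $w_L \sim \beta s\, L$ and therefore $\epsilon_L = \Theta(L^{-1/s})$. Plugging in the two exponents gives $|k^L_{NNGP}(x,x') - \alpha| = \Theta(L^{-1})$ for smooth activations ($s=1$) and $\Theta(L^{-2})$ for ReLU ($s=\tfrac12$), which is the claimed statement; the consequence that ReLU yields a kernel matrix closer to the low-rank all-constant matrix follows immediately.

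I expect the main obstacle to be the expansion step for the smooth activations: unlike ReLU there is no closed-form dual activation for GeLU, ELU or Tanh, so I must show through the Hermite series both that $f''(1) = \sum_k k(k-1)b_k$ is \emph{finite} --- which is where smoothness of $\phi$ enters, via quantitative decay of its Hermite coefficients ensuring $\sum_k k^2 b_k < \infty$, in contrast to ReLU whose coefficients decay only polynomially and force $f''(1)=\infty$ and hence the fractional $\epsilon^{3/2}$ term --- and that it is \emph{strictly positive}. A secondary subtlety is justifying the reduction to the one-dimensional map, namely that under the assumed normalization the diagonal $q$ is exactly (or asymptotically fast enough) at its fixed point, together with checking that $\beta>0$ so that $\epsilon_L$ genuinely decreases and the $o(\cdot)$ remainders are absorbed uniformly along the recursion.
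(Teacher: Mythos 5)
The paper does not actually prove this proposition: it is imported verbatim from the cited prior work on the impact of activations on signal propagation (\citealp{hayou2019impact}), and your sketch reconstructs essentially that reference's argument. The reduction to the one-dimensional correlation map $c_{L+1}=f(c_L)$ at the edge of chaos, the expansion $f(1-\epsilon)=1-\epsilon+\beta\,\epsilon^{1+s}(1+o(1))$ with $s=\tfrac12$ for ReLU (your constant $\tfrac{2\sqrt2}{3\pi}$ is correct and recovers the known asymptotic $1-c_L\sim \tfrac{9\pi^2}{2L^2}$) versus $s=1$ for activations with $f''(1)\in(0,\infty)$, and the linearization $w_L=\epsilon_L^{-s}$ are exactly the right steps, so the approach is sound. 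The loose ends you flag are the genuine ones: the statement as written omits the edge-of-chaos hypothesis $f'(1)=1$, which you correctly supply and without which convergence is exponential rather than polynomial; finiteness and strict positivity of $f''(1)$ for GeLU, ELU and Tanh follow from Price's/Stein's identity, $f''(1)\propto \mathbb{E}[\phi''(\sqrt{q}Z)^2]$, which is finite for these activations and infinite for ReLU (where $\phi''$ is a point mass), matching your Hermite-decay heuristic; and the variance recursion $q_L\to q$ must be controlled separately (exact for ReLU at criticality, handled asymptotically in the cited work for smooth activations). With those pieces filled in as in the reference, your argument is complete.
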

\label{sssec:constant_kernels}

\label{app:proof_extended_discussion_effect_model}

For proving \cref{lem:low_rank}, we will need the following lemma.

\begin{lemma}\label{lemma:diff_inverse}
    Let $A$ and $B$ be two positive definite symmetric matrices. Then, the following holds
    $$
    \|A^{-1} - B^{-1}\|_2 \leq \max(\lambda_{min}(A)^{-1}, \lambda_{min}(B)^{-1}) \|A - B\|_2,
    $$
    where $\lambda_{min}(A)$ is the smallest eigenvalue of $A$, and $\|.\|_2$ is the  usual $2$-norm for matrices.
\end{lemma}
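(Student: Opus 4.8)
The plan is to start from the (second) resolvent identity for matrix inverses. For any invertible $A,B$ one has the exact algebraic identity
$$
A^{-1} - B^{-1} = A^{-1}(B-A)B^{-1} = B^{-1}(B-A)A^{-1},
$$
which is verified directly by expanding $A^{-1}(B-A)B^{-1} = A^{-1}BB^{-1} - A^{-1}AB^{-1} = A^{-1} - B^{-1}$. This rewrites the quantity of interest as a product of three factors, two of which are inverses whose norms we can control via the spectrum.

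Next I would invoke submultiplicativity of the spectral norm, $\|XYZ\|_2 \leq \|X\|_2\|Y\|_2\|Z\|_2$, together with the standard fact that for a symmetric positive definite matrix the spectral norm of the inverse is the reciprocal of the smallest eigenvalue, $\|A^{-1}\|_2 = \lambda_{\min}(A)^{-1}$ (the eigenvalues of $A^{-1}$ are the reciprocals of those of $A$, and $A^{-1}$ is again symmetric positive definite, so its operator norm is its largest eigenvalue). Applying this to the first form of the identity yields the natural bound
$$
\|A^{-1} - B^{-1}\|_2 \leq \|A^{-1}\|_2\,\|A-B\|_2\,\|B^{-1}\|_2 = \lambda_{\min}(A)^{-1}\lambda_{\min}(B)^{-1}\,\|A-B\|_2 .
$$

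The remaining, and genuinely delicate, step is to pass from this \emph{product} of reciprocal eigenvalues to the single $\max(\lambda_{\min}(A)^{-1},\lambda_{\min}(B)^{-1})$ appearing in the statement, and I expect this to be the main obstacle. Writing $a=\lambda_{\min}(A)$, $b=\lambda_{\min}(B)$, one has $\max(a^{-1},b^{-1}) = \min(a,b)^{-1}$, so $(ab)^{-1}\leq \max(a^{-1},b^{-1})$ holds \emph{exactly} when $\min(a,b)\leq ab$, i.e. when $\max(a,b)\geq 1$; it can fail when both smallest eigenvalues lie strictly below $1$ (already visible in the scalar case $A=a$, $B=b$). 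The clean route is therefore to supply an eigenvalue lower bound: in the paper's setting the matrices are of the form $K_{DD}+\sigma^2 I$ with $K(x,x)=1$ on the diagonal, which fixes the scale of the spectrum, and I would make explicit which such normalization guarantees the needed bound. Alternatively—and this is what I would fall back on if no scale condition is available—I would note that the product bound itself, or equivalently the unconditional variant
$$
\|A^{-1} - B^{-1}\|_2 \leq \max\!\left(\lambda_{\min}(A)^{-2},\,\lambda_{\min}(B)^{-2}\right)\|A-B\|_2 ,
$$
which follows from $(ab)^{-1}\leq \min(a,b)^{-2}=\max(a^{-1},b^{-1})^2$ (itself a consequence of $\min(a,b)\leq\sqrt{ab}$), already suffices for the perturbation argument used in \cref{lem:low_rank}. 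I would thus organize the proof around the resolvent identity and the product bound, and then state plainly which eigenvalue normalization (or which of these equivalent forms) is being used to reach the displayed inequality.
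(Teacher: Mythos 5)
Your route is genuinely different from the paper's. The paper's proof defines $f(t) = (tA+(1-t)B)^{-1}$, uses $f'(t) = -f(t)(A-B)f(t)$, integrates over $[0,1]$, and lower-bounds $\lambda_{\min}(tA+(1-t)B)$ by $\min(\lambda_{\min}(A),\lambda_{\min}(B))$ via Weyl's inequality, which yields
\begin{align*}
\|A^{-1}-B^{-1}\|_2 \;\leq\; \min\bigl(\lambda_{\min}(A),\lambda_{\min}(B)\bigr)^{-2}\,\|A-B\|_2 \;=\; \max\bigl(\lambda_{\min}(A)^{-1},\lambda_{\min}(B)^{-1}\bigr)^{2}\,\|A-B\|_2.
\end{align*}
Your resolvent identity plus submultiplicativity gives $\lambda_{\min}(A)^{-1}\lambda_{\min}(B)^{-1}\|A-B\|_2$, which is in fact \emph{tighter} than the paper's constant, since $(ab)^{-1}\leq \min(a,b)^{-2}$. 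So your argument is correct, more elementary (no derivative of a matrix-valued path, no Weyl), and loses nothing.

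The delicate point you flag is real, and it is the most valuable part of your write-up: neither your product bound nor the paper's integral bound delivers the \emph{first-power} $\max(\lambda_{\min}(A)^{-1},\lambda_{\min}(B)^{-1})$ in the displayed statement, and your scalar example ($A=\epsilon$, $B=2\epsilon$, $\epsilon<1/2$) shows the statement as written is false in general. The paper's own proof actually only establishes the squared version; the closing sentence ``combining the two inequalities concludes the proof'' silently drops the exponent. So what you identified is an error (most plausibly a missing exponent $2$) in the lemma's statement rather than a gap in your argument. Of your two proposed repairs, the normalization route is not available in the downstream application --- there the comparison matrix is $U=\alpha U_n+\sigma^2 I$ with $\lambda_{\min}(U)=\sigma^2$, which is typically small --- so the right fix is to carry the squared (or product) constant through the proof of \cref{lem:low_rank}, where only the qualitative dependence on $h(\alpha_{\min})$ and $n$ is used and the conclusion survives with adjusted constants.
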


\begin{proof}
    Define the function $f(t) = (t A + (1-t) B)^{-1}$ for $t \in [0,1]$. Standard results from linear algebra yield
    $$
    f'(t) = - f(t) (A - B) f(t)
    $$
Therefore, we have 
\begin{align*}
\|f(1) - f(0)\|_2 &= \| \int_{0}^1 f'(t) dt\|_2\\
& \leq \int_{0}^1 \|f'(t)\|_2 dt\\
& \leq \int_{0}^1 \|f(t)\|_2^2 \|A-B\|_2 dt\\
& = \int_{0}^1 (\lambda_{min}(t A + (1-t) B))^{-2} \|A-B\|_2 dt.\\
\end{align*}
Using Weyl's inequality on the eigenvalues of the sum of two matrices, we have the following
\begin{align*}
\lambda_{min}(t A + (1-t) B) &\geq \lambda_{min}(t A) + \lambda_{min}((1-t) B)\\
&= t \lambda_{min}(A) + (1-t)\lambda_{min}(B)\\
&\geq \min(\lambda_{min}(A), \lambda_{min}(B)).
\end{align*}
Combining the two inequalities concludes the proof.
\end{proof}

We are now ready to prove \cref{lem:low_rank}, as follows.

\begin{repproposition}{lem:low_rank}
    Let $D$ and $D' = D\cup S$ be an arbitrary pair of leave-one-out datasets, where $D$ contains $n$ records and $D'$ contains $n+1$ records. Define the function $h(\alpha) = \sup_{x,x'\in D'} |K(x,x') - \alpha|$. Let $\alpha_{min} = \textup{argmin}_{\alpha \in \R^+} h(\alpha)$, and thus $h(\alpha_{min})$ quantifies how close the kernel matrix is to a low rank all-constant matrix. Assume that $\alpha_{min} > 0$. Then, it holds that
$$
\max_Q M(Q)  \leq C_{n,\alpha_{min}, \zeta} \, n^{5/2} h(\alpha_{min}) + C_{\alpha, \zeta}\, n^{-1},
$$
where 
$$
C_{n,\alpha_{min}, \zeta} = \max\left( \frac{(\alpha_{min} + h(\alpha_{min})) \zeta}{\min(\alpha_{\min} \sigma^2, |\alpha_{min} \sigma^2 - n h(\alpha_{min})|)}, \frac{(1 + n^{-1})^{5/2}(\alpha_{min} + h(\alpha_{min})) \zeta}{\min(\alpha_{\min} \sigma^2, |\alpha_{min} \sigma^2 - (n+1) h(\alpha_{min})|)}\right),
$$
and $C_{\alpha_{min}, \zeta} = \zeta (1 + \alpha_{min}^{-1})$.\\

As a result, if $n^{5/2} h(\alpha_{min}) \ll 1$ and $n \gg 1$, then the maximum mean distance LOOD (i.e., maximum influence) satisfies $\max_Q M(Q) \ll 1$.\\

Similarly, there exist constants $A_n, B >0$ such that 
$$
\max_Q KL(f_{D, \sigma^2}(Q)\lVert f_{D', \sigma^2}(Q)) \leq A_n h(\alpha_{min}) + B \, n^{-1}.
$$
\end{repproposition}

\begin{proof}
    Let $Q$ be a query point and $\alpha \in \R$. We have the following
    \begin{align*}
    M(Q) = \frac{1}{2} \|K_{QD} M_D^{-1} y_{D} - K_{QD'} M_{D'}^{-1} y_{D'}\|,
    \end{align*}
    where $M_D = (K_{DD} + \sigma^2 I)$ and the same holds for $M_{D'}$. Let $U = \alpha U_n + \sigma^2 I$ and $U' = \alpha U_{n+1} + \sigma^2 I$ , where $U_n$ is the $n\times n$ matrix having ones everywhere. With this notation, we have the following bound
    \begin{align*}
        2 \times M(Q) &\leq  \|K_{QD} (M_D^{-1} -  U^{-1}) y_{D}\| + \| K_{QD'} (M_{D'}^{-1} - (U')^{-1}) y_{D'}\|\\
        &+ \| K_{QD'} (U')^{-1} y_{D'} -  K_{QD} U^{-1} y_{D}\|,
    \end{align*}
    Let us deal with first term. We have that 
    $$
     \|K_{QD} (M_D^{-1} -  U^{-1}) y_{D}\| \leq  \|K_{QD} \| \|y_D\| \|M_D^{-1} -  U^{-1}\| \leq n^{1/2}(\alpha + h(\alpha)) \|y_D\|_2 \|M_D^{-1} -  U^{-1}\|.
    $$
    Using \cref{lemma:diff_inverse}, we have the following
    $$
    \|M_D^{-1} -  U^{-1}\| \leq \max(\lambda_{min}(A)^{-1}, \lambda_{min}(B)^{-1}) \|M_D -  U\|,
    $$
    which implies that 
    $$
     \|K_{QD} (M_D^{-1} -  U^{-1}) y_{D}\| \leq \frac{n^{3/2} (\alpha + h(\alpha)) \|y_D\|_2}{\min(\alpha \sigma^2, |\alpha \sigma^2 - n h(\alpha)|)} \times h(\alpha),
    $$
    where we have used the inequality $|\lambda_{min}(A) - \lambda_{min}(B)| \leq \|A - B\|_2$ for any two symmetric positive matrices $A, B$.\\
    
    Similarly, we have the following
    $$
     \|K_{QD'} (M_{D'}^{-1} -  (U')^{-1}) y_{D}\| \leq \frac{(n+1)^{3/2} (\alpha + h(\alpha)) \|y_{D'}\|_2}{\min(\alpha \sigma^2, |\alpha \sigma^2 - (n+1) h(\alpha)|)} \times h(\alpha).s
    $$
    
    For the last term, we have 
    \begin{align*}
    \| K_{QD'} (U')^{-1} y_{D'} -  K_{QD} U^{-1} y_{D}\| &\leq \| (K_{QD'} - \alpha e_{n+1}^\top) (U')^{-1} y_{D'}\|\\
    &+ \| (K_{QD} - \alpha e_n^\top) U^{-1} y_{D}\|\\
    &+ \|\alpha e_{n+1}^\top (U')^{-1} y_{D'} - \alpha e_n^\top U^{-1} y_{D}\|.\\
    \end{align*}
    Moreover, we have
    $$
    \| (K_{QD} - \alpha e_n^\top) U^{-1} y_{D}\| \leq \sigma^{-2} \|y_D\| \sqrt{n} h(\alpha),
    $$
    and,
    $$
    \| (K_{QD'} - \alpha e_{n+1}^\top) (U')^{-1} y_{D'}\| \leq \sigma^{-2} \|y_{D'}\| \sqrt{n+1} h(\alpha).
    $$
    For the last term, we have 
    \begin{align*}
    \|\alpha e_{n+1}^\top (U')^{-1} y_{D'} - \alpha e_n^\top U^{-1} y_{D}\| &\leq \frac{\alpha}{|\alpha n + \sigma^2| |\alpha (n+1) + \sigma^2|} \left|\sum_{i=1}^n y_{D_i}\right| + \frac{\alpha}{ |\alpha (n+1) + \sigma^2|} |y_S| \\
    & \leq (1 + \alpha^{-1}) \zeta n^{-1}.
    \end{align*}
    Combining all these inequalities yield the desired bound.\\

    For the second upperbound, observe that 
    $KL(f_{D, \sigma^2}(Q)\lVert f_{D', \sigma^2}(Q)) = \log\left(\frac{\Sigma'(Q)}{\Sigma(Q)} \right) + \frac{\Sigma(Q)}{2\Sigma'(Q)} + \frac{(\mu(Q) - \mu(Q'))^2}{\Sigma'(Q)} -\frac{1}{2}$ by the definition of KL LOOD in \cref{def:kl_lood}.
    Meanwhile, for sufficiently small $h(\alpha_{min})$, we have that 
    $$
    \Sigma(Q) = \alpha_{\min} - \alpha_{\min}^2 n (\sigma^2 + n \alpha_{\min})^{-1} + \mathcal{O}(h(\alpha_{\min} )) = \alpha_{\min} \sigma^2 (\sigma^2 + n \alpha_{\min})^{-1} + \mathcal{O}(h(\alpha_{\min} )),
    $$
    and a similar formula holds for $\Sigma'(Q)$ with $(n+1)$ instead of $n$. Therefore, we obtain 
    $$
    \frac{\Sigma(Q)}{\Sigma'(Q)} = 1 + \frac{\alpha_{\min}}{\sigma^2 + \alpha_{\min} n}.
    $$
    Combining this result with the upperbound on the mean distance, the existence of $A_n$ and $B$ is straightforward.
    \end{proof}

\begin{figure}[t]
    \centering
    \includegraphics[width=0.9\textwidth]{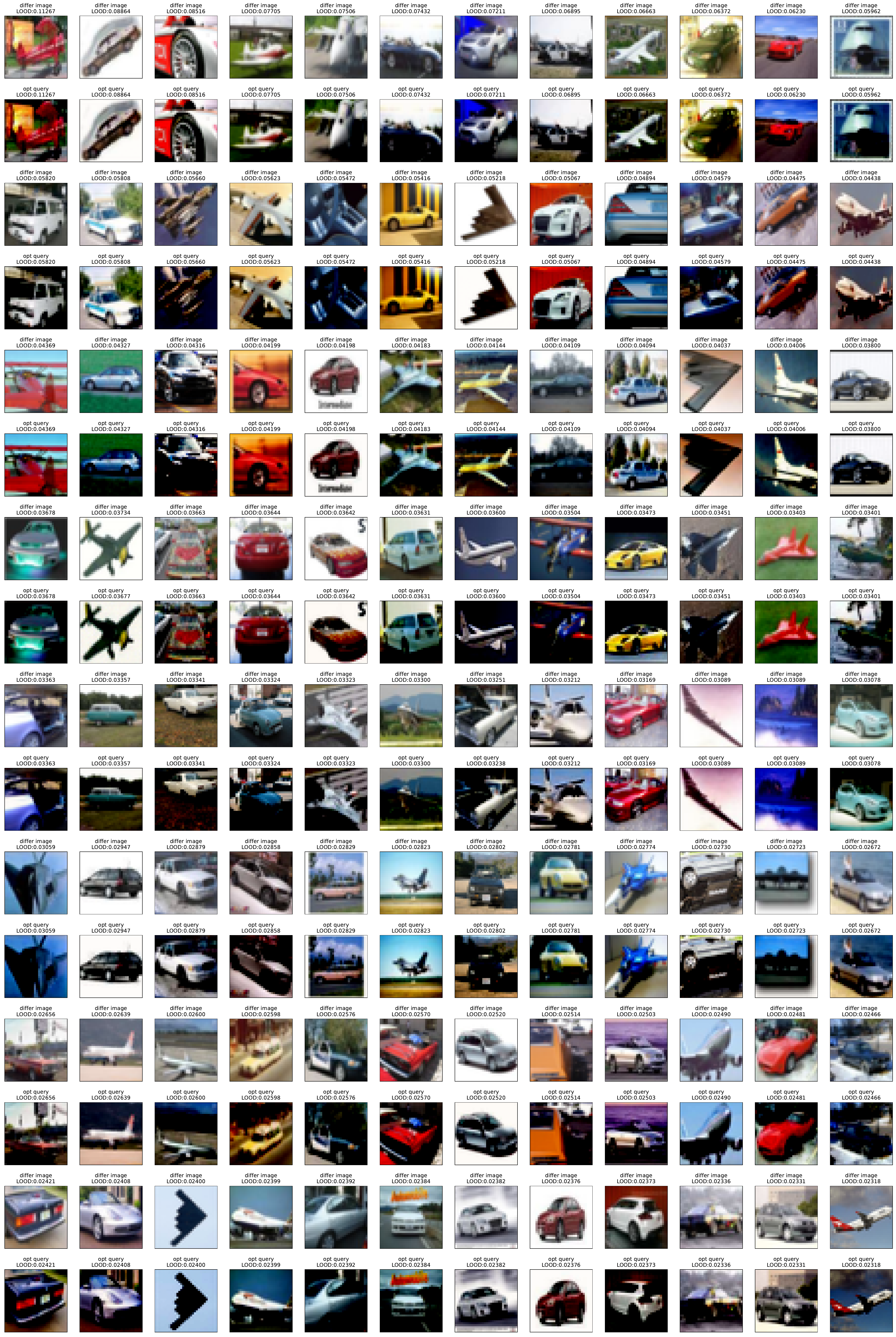}
    \caption{Visualization of 500 \textbf{randomly chosen} differing data and query optimized by LOOD (Part 1). We show differing data above the optimized query, and use their LOOD gap to sort the images.}
    \label{fig:reconstruct_1}
\end{figure}

\begin{figure}[t]
    \centering
    \includegraphics[width=0.9\textwidth]{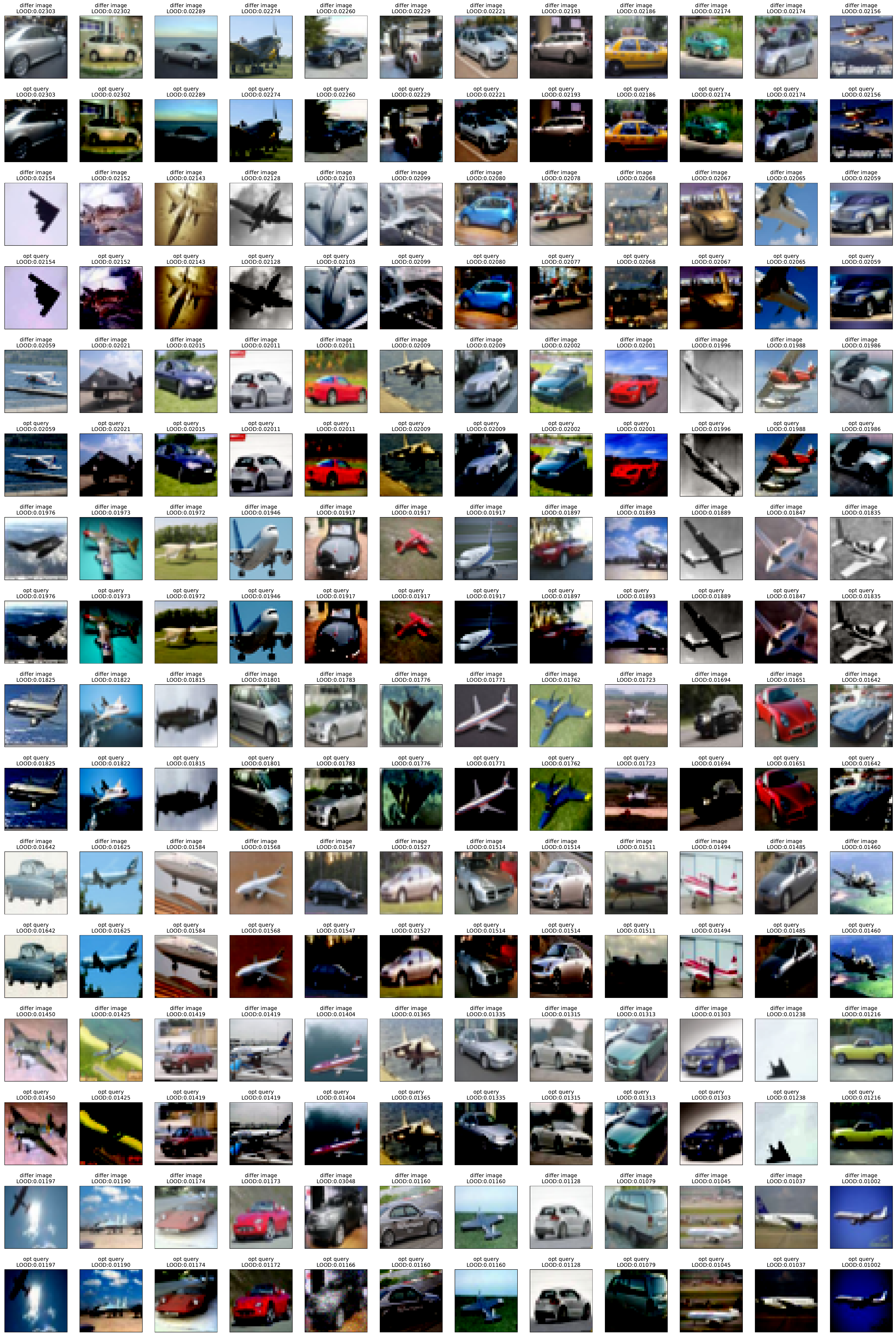}
    \caption{Visualization of 500 \textbf{randomly chosen} differing data and query optimized by LOOD (Part 2). We show differing data above the optimized query, and use their LOOD gap to sort the images.}
    \label{fig:reconstruct_2}
\end{figure}

\begin{figure}[t]
    \centering
    \includegraphics[width=0.9\textwidth]{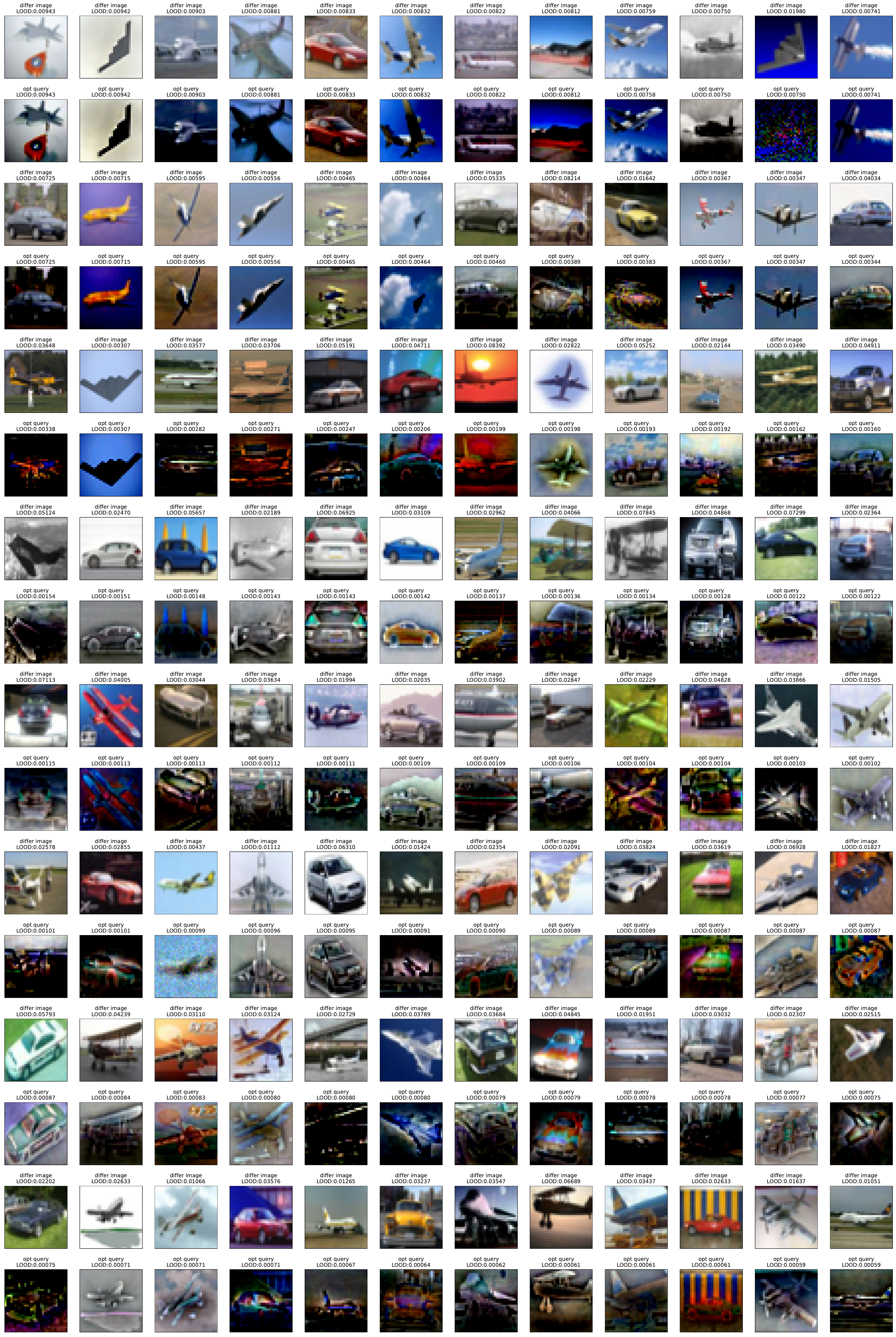}
    \caption{Visualization of 500 \textbf{randomly chosen} differing data and query optimized by LOOD (Part 3). We show differing data above the optimized query, and use their LOOD gap to sort the images.}
    \label{fig:reconstruct_3}
\end{figure}

\begin{figure}[t]
    \centering
    \includegraphics[width=0.9\textwidth]{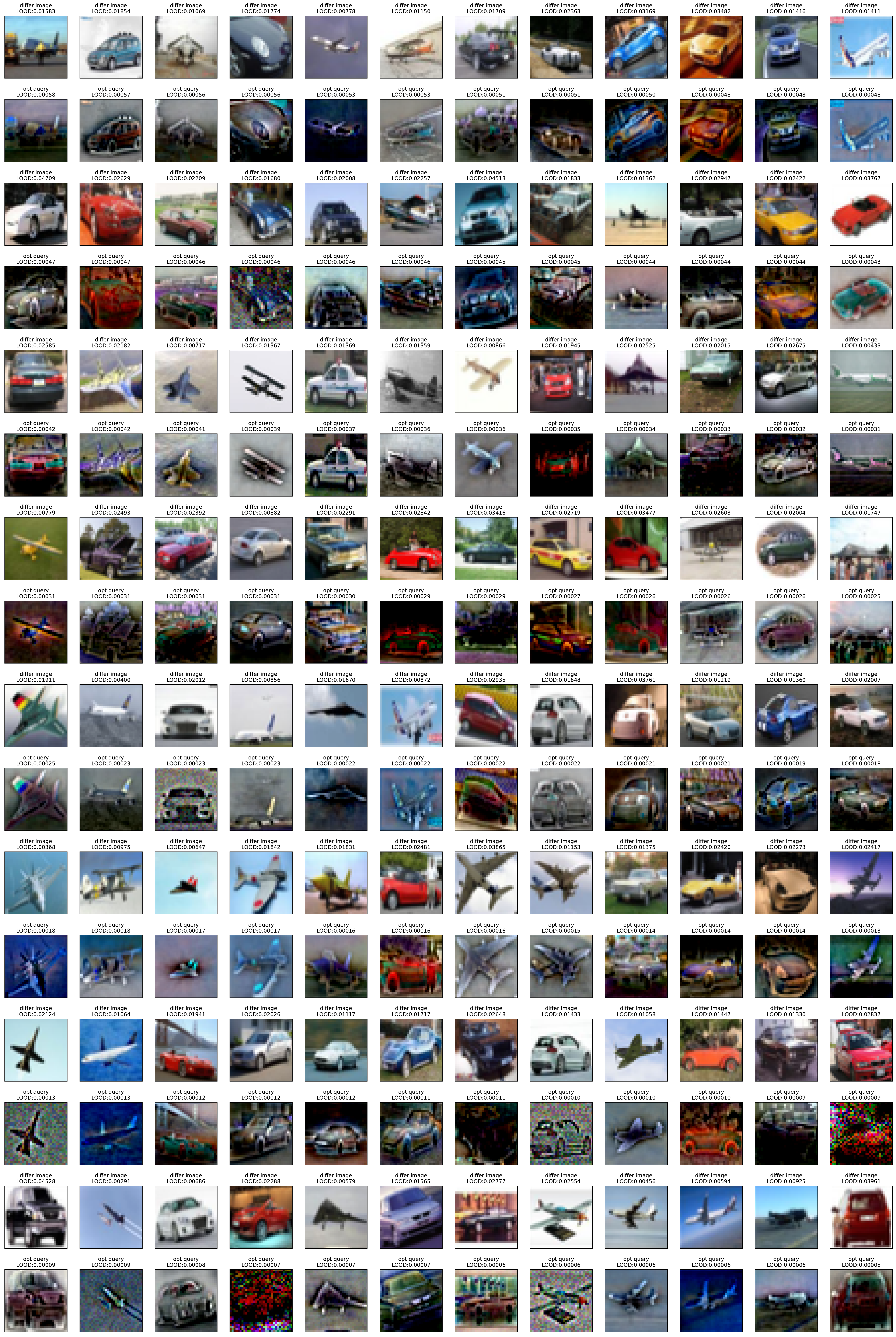}
    \caption{Visualization of 500 \textbf{randomly chosen} differing data and query optimized by LOOD (Part 4). We show differing data above the optimized query, and use their LOOD gap to sort the images.}
    \label{fig:reconstruct_4}
\end{figure}

\begin{figure}[t]
    \centering
    \includegraphics[width=0.9\textwidth]{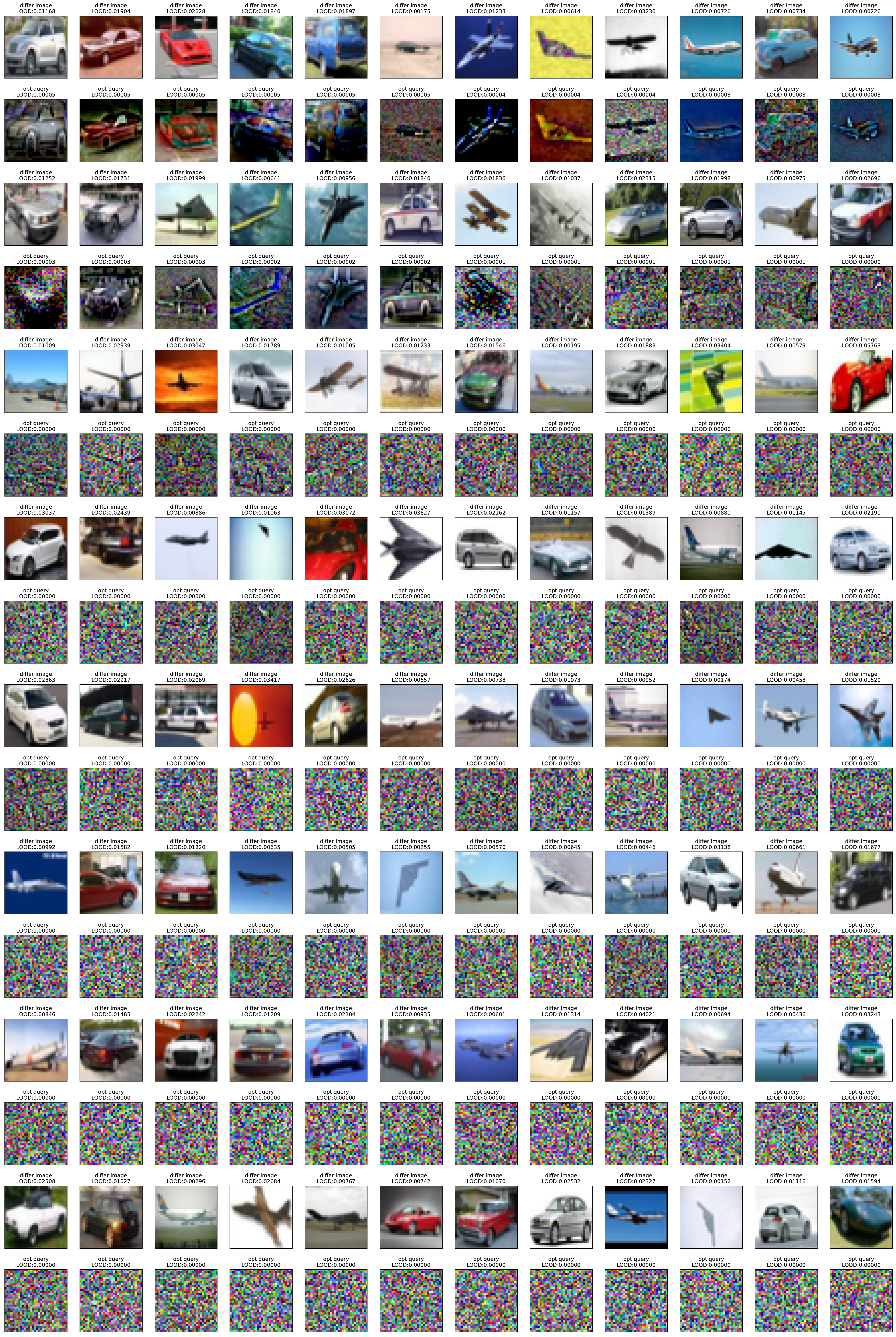}
    \caption{Visualization of 500 \textbf{randomly chosen} differing data and query optimized by LOOD (Part 5). We show differing data above the optimized query, and use their LOOD gap to sort the images.}
    \label{fig:reconstruct_5}
\end{figure}

\begin{figure}[t]
    \centering
    \includegraphics[width=\textwidth]{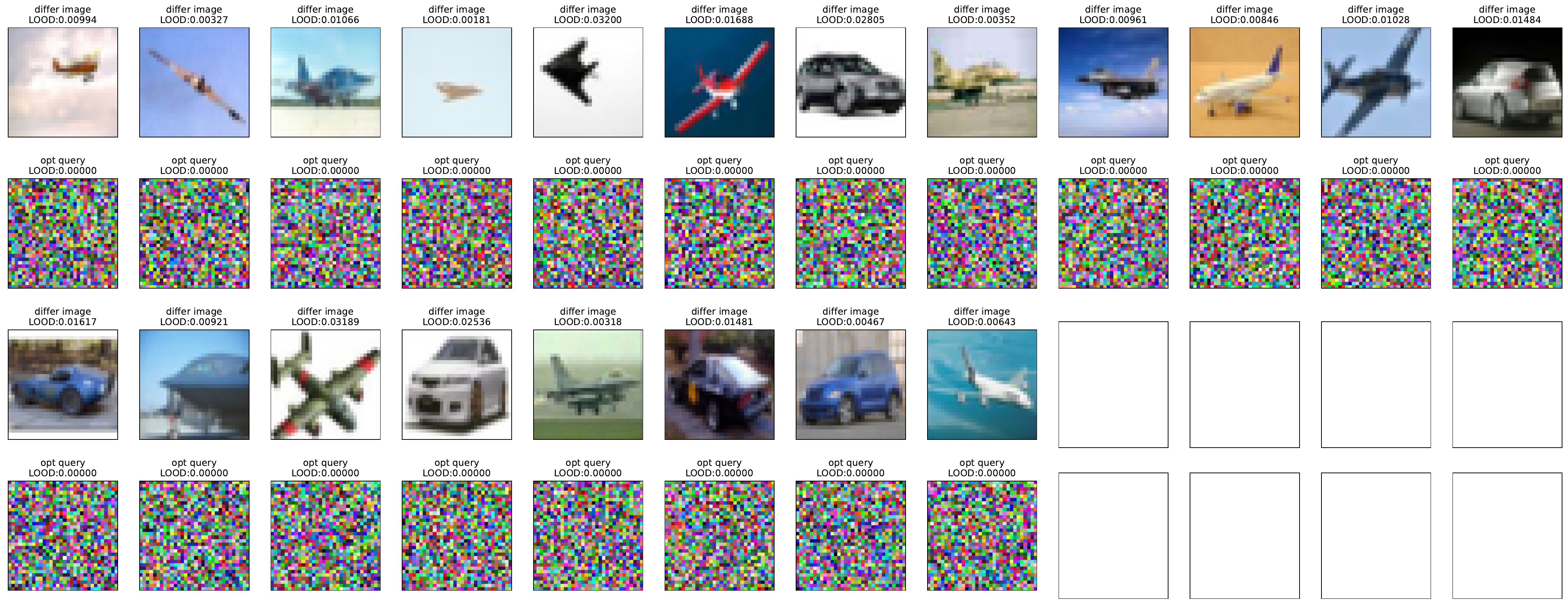}
    \caption{Visualization of 500 \textbf{randomly chosen} differing data and query optimized by LOOD (Part 6). We show differing data above the optimized query, and use their LOOD gap to sort the images.}
    \label{fig:reconstruct_6}
\end{figure}

\end{document}